\documentclass[notitlepage]{article}

\usepackage{arxiv}

\usepackage{amsthm}
\usepackage{amsfonts}       
\usepackage{amssymb}
\usepackage{booktabs}       
\usepackage{bm}
\usepackage{enumitem}
\usepackage[T1]{fontenc}    
\usepackage{hyperref}       
\usepackage[utf8]{inputenc} 
\usepackage{url}            
\usepackage{makecell}
\usepackage{mathtools}
\usepackage{multirow}
\usepackage{nicefrac}       
\usepackage{microtype}      
\usepackage[textsize=tiny]{todonotes}
\usepackage{thmtools}
\usepackage{xcolor}         
\usepackage[ruled,vlined,linesnumbered]{algorithm2e}

\hypersetup{
  colorlinks=true,
  linkcolor=blue,
  citecolor=blue
}

\SetKwInput{KwInput}{Input}
\SetKwInput{KwInit}{Initialize}
\SetKwInput{KwTest}{Test}
\SetKwInput{KwDef}{Definition}
\SetKwInput{KwReturn}{Return}

\usepackage{makecell}
\usepackage{multirow}

\usepackage{tikz}
\usetikzlibrary{tikzmark}
\usetikzlibrary{fit,calc}
\usetikzlibrary{arrows.meta,positioning}

\usepackage[
  backend=biber,
  style=numeric-comp,
  maxbibnames=9,
  maxcitenames=1,
  uniquelist=false,
  uniquename=false,
  isbn=false,
  sorting=nty,
  doi=false
]{biblatex}
\addbibresource{references.bib}

\makeatletter
\patchcmd{\@algocf@start}
  {-1.5em}
  {0pt}
  {}{}
\makeatother

\newtheorem{theorem}{Theorem}
\newtheorem{lemma}[theorem]{Lemma}
\newtheorem{proposition}[theorem]{Proposition}

\newtheorem{corollary[theorem]}{Corollary}
\newtheorem{definition}[theorem]{Definition}

\newtheorem{assumption}{Assumption}

\newtheorem*{assumption*}{Assumption}

\let\oldremark\remark
\renewcommand{\remark}{\oldremark\normalfont}

\DeclareMathOperator*{\argmax}{argmax}
\DeclareMathOperator*{\argmin}{argmin}

\newcommand{\red}[1]{\textcolor{red}{#1}}
\definecolor{kh}{HTML}{FFA500}
\definecolor{yz}{HTML}{6495ED}
\definecolor{yellow}{HTML}{F3E5AB}

\newcount\Comments  
\Comments=0 
\newcommand{\kibitz}[2]{\ifnum\Comments=1\textcolor{#1}{#2}\fi}

\title{Offline Constrained Reinforcement Learning \\ under Partial Data Coverage}

%

\author{%
  Seokmin Ko \\
  KAIST\\
  \texttt{komin0407@kaist.ac.kr} \\
  \And
  Ambuj Tewari \\
  University of Michigan\\
  \texttt{tewaria@umich.edu} \\
  \And
  Kihyuk Hong \\
  KAIST\\
  \texttt{kihyukh@kaist.ac.kr}
}

\begin{document}

\maketitle

\begin{abstract}
We study offline constrained reinforcement learning with general function approximation in discounted constrained Markov decision processes.
Prior methods either require full data coverage for evaluating intermediate policies, lack oracle efficiency, or requires the knowledge of data-generating distribution for policy extraction.
We propose PDOCRL, an oracle-efficient primal-dual algorithm based on a decomposed linear-programming formulation that makes the policy an explicit optimization variable.
This avoids policy extraction that requires the knowledge of data-generating distribution, and only uses standard policy-optimization, online linear-optimization, and linear-minimization oracles.
We show that saddle-point formulations using general function approximation can have spurious saddle points even when an optimal solution is realizable, and identify a stronger realizability condition under which every restricted saddle point is optimal.
Under this condition and partial coverage of an optimal policy, PDOCRL returns a near-optimal, near-feasible policy with a \(\widetilde{\mathcal O}(\epsilon^{-2})\) sample guarantee, without access to the data-generating distribution.
Empirically, PDOCRL is competitive with strong baselines on standard offline constrained RL benchmarks.
\end{abstract}

\section{Introduction}

Offline constrained reinforcement learning aims to learn, from a fixed dataset, a policy that achieves high reward while satisfying safety or resource constraints.
We study this problem in the framework of discounted constrained Markov decision processes (CMDPs)\parencite{altman1999constrained}, with one primary reward and several auxiliary signals whose discounted returns must exceed prescribed thresholds.
This formulation is natural when online exploration is risky or expensive but logged data are available, as in autonomous driving, healthcare, and robotics \parencite{levine2020offline,jiang2024offline}.

The main obstacle in offline CMDPs is data coverage: reliable optimization is only possible for policies whose state-action occupancy is sufficiently supported by the dataset.
In unconstrained offline RL, recent pessimistic actor-critic methods show that one can still obtain meaningful guarantees under insufficient coverage by updating policies against pessimistic critics \parencite{cheng2022adversarially,zhu2023importance}.

In the constrained setting, however, applying the pessimistic critic idea within a primal-dual framework introduces an additional obstruction. A primal-dual method must update dual multipliers based on estimates of constraint violations, which requires evaluating the constraint values of the current policy. As discussed by \textcite{hong2024primal}, this entails accurately estimating constraint values for \textit{intermediate policies} encountered during optimization. Under partial data coverage, these intermediate policies may not be supported by the dataset, making the dual updates unreliable and effectively reintroducing strong coverage requirements.

This raises the following question:
\begin{quote}
\textit{Can we design an offline CMDP algorithm that avoids evaluating unsupported intermediate policies, while remaining provably sample efficient under partial data coverage and computationally efficient with access to standard oracles?}
\end{quote}

Here and throughout, we use \emph{oracle efficient} in the sense that the algorithm performs only a polynomial number of elementary operations and calls to basic optimization oracles for the function classes.
Specifically, no-regret online linear optimization, no-regret policy optimization, and linear minimization over given function classes, rather than requiring a generic, intractable optimization such as general min-max optimization over these classes.

In this paper, we answer this question affirmatively, but not via an actor-critic method.
Instead, we adopt the linear-programming (LP) view of CMDPs, which optimizes over occupancy measures or density ratios rather than repeatedly evaluating unsupported intermediate policies.
This viewpoint is well matched to partial coverage, but under general function approximation it exposes two distinct challenges.
First, restricted saddle-point formulations can admit \emph{spurious saddle points}: even when the function classes contain an optimal feasible policy, its occupancy measure, and the optimal action-value functions for the relevant Lagrangian reward combinations, a saddle point need not correspond to an optimal policy.
Second, existing LP-based methods typically recover a policy from a learned occupancy measure or density ratio using the data distribution $\mu_D$ \parencite{zhan2022offline,rashidinejad2022optimal,lee2021optidice,lee2022coptidice,zhang2024safe}, which is problematic in practice.

We address these two issues separately.
For the first, we show that a naive restricted saddle formulation is indeed insufficient: spurious saddle points can persist even when the class realizes the optimal solution and the optimal action-value functions for all Lagrangian reward combinations.
We then identify a condition under which this pathology disappears.

The policy-extraction issue is handled separately by a decomposed LP formulation and a reparameterization that make the policy an explicit optimization variable.
This lets us optimize policies directly, rather than first learning an occupancy measure and then extracting a policy using the unknown data distribution $\mu_D$.
The resulting method, PDOCRL, is an oracle-efficient primal-dual algorithm for the decomposed empirical saddle problem.
Under our realizability and coverage assumptions, a near saddle point found by PDOCRL yields a near-optimal and near-feasible policy with a $\widetilde{\mathcal O}(\epsilon^{-2})$ sample complexity guarantee under partial coverage.
As summarized in Table~\ref{table:comparison}, this separates PDOCRL from prior offline constrained RL methods: PDCA \parencite{hong2024primal} and WSAC \parencite{wei2024adversarially} require solving intractable min-max problems, while POCC \parencite{zhang2024safe} requires $\mu_D$ and access to another auxiliary function class.
Thus PDOCRL is the only method in our comparison that simultaneously provides partial-coverage guarantees, oracle efficiency, and no dependence on $\mu_D$ or a reference policy.

\begin{table}[t]
\caption{Algorithms for offline constrained RL. Red entries indicate an undesirable property}
\label{table:comparison}
\centering
\begin{tabular}{cccccc}
 \toprule
 Algorithm & \makecell{Partial \\ coverage} & \makecell{Oracle \\ efficient} & \makecell{Requires \\ $\mu_D$ or $\pi_{\mathrm{ref}}$} & \makecell{Function \\ Approximation} \\
 \midrule
 PDCA \parencite{hong2024primal} & \red{No} & \red{No} & No & $\forall \pi: Q^\pi \in \mathcal{Q}, w^\pi \in \mathcal{W}$ \\
 WSAC \parencite{wei2024adversarially} & Yes & \red{No} & \red{Yes} & $w^{\pi_{\mathrm{ref}}} \in \mathcal{W}$, $\forall \pi: Q^\pi \in \mathcal{Q}$ \\
 MBCL \parencite{le2019batch} & \red{No} & Yes & No & $\forall \pi, Q: \mathcal{T}^\pi Q \in \mathcal{Q}$ \\
 POCC \parencite{zhang2024safe} & Yes & Yes & \red{Yes} & $w^\ast \in \mathcal{W}$, $\forall w \in \mathcal{W}: x_w \in \mathcal{X}$  \\
 \textbf{PDOCRL (Ours)} & Yes & Yes & No & $w^\ast \in \mathcal{W}$, $\forall \pi: Q^\pi \in \mathcal{Q}$ \\
 \bottomrule
\end{tabular}
\end{table}

We also evaluate PDOCRL on deep offline safe RL benchmarks.
On BulletGym tasks with DSRL datasets \parencite{gronauer2022bullet,liu2023datasetsbenchmarksofflinesafe}, it is competitive with strong baselines such as BCQ-Lag, BEAR-Lag, CPQ, CDT, and COptiDICE \parencite{xu2022constraints,liu2022constrained,lee2022coptidice}.

\subsection{Related work}

\paragraph{Offline constrained RL with function approximation}
Several recent works study offline constrained RL with function approximation
through policy-value, actor-critic, or primal-dual formulations
\parencite{le2019batch,hong2024primal,wei2024adversarially}.
Batch constrained policy learning and primal-dual-critic methods require
estimating the reward and constraint values of intermediate policies, which
typically leads to full-coverage or similarly strong coverage assumptions
\parencite{le2019batch,hong2024primal}.
The weighted actor-critic method of \textcite{wei2024adversarially} handles
partial coverage by using importance-weighted critics, but its formulation
depends on a reference policy or the data distribution and is not oracle
efficient.

\paragraph{Density-ratio and DICE methods}
A closely related line of work uses marginalized importance sampling to rewrite target-policy
or occupancy-measure objectives as expectations under the offline data
distribution.
This idea appears in off-policy evaluation and DICE-style
distribution-correction methods
\parencite{liu2018breaking,xie2019towards,nachum2019dualdice,
zhang2020gendice,zhang2020gradientdice,uehara2020minimax,
yang2020regularized,nachum2020fenchel}.
It has also been adapted to offline policy optimization through methods such as
AlgaeDICE and OptiDICE
\parencite{nachum2019algaedice,lee2021optidice}.
These methods motivate our density-ratio parameterization
\(\mu = w\mu_D\), but they do not by themselves resolve the constrained
offline-RL issues of oracle-efficient optimization and policy
recovery without access to the unknown data distribution.

\paragraph{LP-based offline RL under partial coverage}
On the theory side, there are LP- and density-ratio-based analyses for offline RL under partial coverage \parencite{zhan2022offline,chen2022offline,rashidinejad2022optimal,
ozdaglar2023revisiting,zhu2023importance}.
These methods are naturally aligned with partial coverage because they reason about covered occupancies rather than requiring accurate evaluation of arbitrary intermediate policies.
However, relative to PDOCRL, prior LP/density-ratio analyses typically rely on regularization of the saddle-point problem, auxiliary function classes or occupancy-validity constraints, or a separate policy-extraction step that depends on the unknown data distribution \(\mu_D\) or on a reference policy.

\paragraph{Constrained LP-based offline RL}
In the constrained setting, COptiDICE is the closest empirical DICE-style
method, extending stationary-distribution correction to offline constrained RL
\parencite{lee2022coptidice}.
The closest LP-based partial-coverage theory result is the primal-dual method
of \textcite{zhang2024safe}, which handles offline convex CMDPs under partial
coverage.
In contrast to PDOCRL, these constrained LP/density-ratio approaches either use an additional policy-extraction or weighted behavior-cloning step, or require auxiliary structure beyond the density-ratio and critic classes considered here.

\section{Preliminaries}

\paragraph{Notations}
For functions $F : \mathcal{S} \times \mathcal{A} \rightarrow \mathbb{R}$ and $G : \mathcal{S} \rightarrow \mathbb{R}$, we denote their vector representations as $\bm{F} \in \mathbb{R}^{\vert \mathcal{S} \times \mathcal{A} \vert}$ and $\bm{G} \in \mathbb{R}^{\vert \mathcal{S} \vert}$, respectively.
For a function $Q : \mathcal{S} \times \mathcal{A} \rightarrow \mathbb{R}$ and a policy $\pi: \mathcal{S} \rightarrow \Delta(\mathcal{A})$, we write $Q(s, \pi) = \sum_a \pi(a \mid s) Q(s, a)$.
Given reward functions $r_0, r_1, \ldots, r_I$, and $\bm\lambda \in \mathbb{R}^I$, we write $r_{\bm\lambda} = r_0 + \sum_{i=1}^I \lambda_i r_i$.
We write $B \Delta^I = \{ \lambda \in \mathbb{R}_+^I : \Vert \lambda \Vert_1 \leq B\}$.
We say $(\widehat x, \widehat y) \in \mathcal{X} \times \mathcal{Y}$ is a $\xi$-near saddle point of $L(x, y)$ if $L(x, \widehat y) \leq L(\widehat x, y) + \xi$ for all $(x, y) \in \mathcal{X} \times \mathcal{Y}$.
\subsection{Constrained reinforcement learning}

We formulate the constrained RL as a discounted CMDP $\mathcal M = (\mathcal S,\mathcal A,P,r_0,\{r_i\}_{i=1}^I,\gamma,d_0)$, where $\mathcal{S}$ is the state space, $\mathcal{A}$ is the action space, $P(\cdot\mid s,a)$ is the transition kernel, $\gamma\in(0,1)$ is the discount factor, $d_0$ is the initial-state distribution, $r_0:\mathcal S\times\mathcal A\to[0,1]$ is the primary reward, and $r_1,\ldots,r_I:\mathcal S\times\mathcal A\to[0,1]$ are auxiliary reward signals defining the constraints.
For simplicity, we assume that $d_0$ is concentrated on a single start state $s_0$.

A stationary policy $\pi : \mathcal{S} \rightarrow \Delta(\mathcal{A})$ and the transition kernel $P$ induce a trajectory distribution $P^\pi$ over $(s_0,a_0,s_1,a_1,\dots)$, with expectation denoted by $\mathbb{E}^\pi$.
Action-value functions are defined as
$$
 Q_i^\pi(s,a) \coloneqq \mathbb{E}^\pi\!\left[\sum_{t=0}^{\infty} \gamma^t r_i(s_t,a_t)\mid s_0=s,a_0=a\right],\qquad i=0,1,\ldots,I.
$$
Define the discounted return for each reward signal by
\[
 J_i(\pi) \coloneqq \mathbb{E}^\pi\!\left[\sum_{t=0}^{\infty} \gamma^t r_i(s_t,a_t)\right],\qquad i=0,1,\ldots,I.
\]
The CMDP objective, given user-specified thresholds $\tau_1,\ldots,\tau_I$, is
\begin{equation}
\label{eq:cmdp}
\max_{\pi} \; J_0(\pi)
\qquad \text{subject to} \qquad
(1-\gamma)J_i(\pi) \ge \tau_i,\quad i=1,\ldots,I.
\end{equation}

\paragraph{Occupancy measures and LP formulation}
The linear programming (LP) formulation~\cite{manne1960linear} for solving~\eqref{eq:cmdp} aims to find an \textit{occupancy measure} that maximizes value, rather than optimizing directly over the policy space.
The occupancy measure of a policy $\pi$, denoted $\mu^\pi \in \Delta(\mathcal{S} \times \mathcal{A})$, is defined as:
\[
 \mu^\pi(s,a) \coloneqq (1-\gamma)\sum_{t=0}^{\infty} \gamma^t P^\pi(s_t=s,a_t=a).
\]
It satisfies $\langle \bm\mu^\pi, \bm{r}_i \rangle = (1-\gamma)J_i(\pi)$, $i=0,1,\ldots,I$.
Let $\bm{E}$ denote the linear operator defined by $[\bm{E}^\top \bm\mu](s)=\sum_a \mu(s,a)$, and let $[\bm{P}^\top \bm\mu](s') = \sum_{s,a} P(s'\mid s,a)\mu(s,a)$. Then \eqref{eq:cmdp} is equivalent to the linear program \parencite{puterman2014markov}
\begin{equation}
\label{eq:cmdp-lp}
\begin{aligned}
\max_{\bm\mu\ge 0}\quad & \langle \bm\mu, \bm{r}_0 \rangle \\
\text{subject to}\quad
& \bm{E}^\top \bm\mu = (1-\gamma)\bm{d}_0 + \gamma \bm{P}^\top \bm\mu, \\
& \langle \bm\mu, \bm{r}_i \rangle \ge \tau_i,\qquad i=1,\ldots,I.
\end{aligned}
\end{equation}
The first constraint, known as the Bellman flow constraint, ensures $\bm\mu$ corresponds to the occupancy measure of some policy.
We call such a $\bm\mu$ an \textit{admissible} occupancy measure.

\paragraph{Policy extraction}
Given an occupancy measure $\mu$, not necessarily admissible, we can extract a policy denoted $\pi(\mu)$ by normalizing $\mu$ at each state:
\begin{equation} \label{eqn:policy-extraction}
 [\pi(\mu)](a | s) \coloneqq \frac{\mu(s, a)}{\sum_{a'} \mu(s, a')}~~\text{if}~\sum_{a'} \mu(s, a') > 0,\quad \frac{1}{\vert \mathcal{A} \vert}~\text{otherwise}.
\end{equation}
Given an admissible occupancy measure $\mu$, we can recover the policy that induces $\mu$ by the policy extraction rule.
Hence, we can find an optimal policy by extracting from the optimal occupancy measure $\mu^\ast$ that solves the linear program~\eqref{eq:cmdp-lp}.

\subsection{Offline dataset and concentrability coefficient}

We observe an offline dataset $\mathcal D = \{(s_j,a_j,s_j')\}_{j=1}^n$, where $(s_j,a_j)$ are sampled i.i.d. from an unknown data distribution $\mu_D$ and $s_j'\sim P(\cdot\mid s_j,a_j)$.
We define the concentrability coefficient of a policy $\pi$ with respect to $\mu_D$ as
$$
C^\pi \coloneqq \Vert \mu^\pi / \mu_D \Vert_\infty = \max_{s, a} \mu^\pi(s, a) / \mu_D(s, a),
$$
which measures how well the dataset covers the state-action pairs visited by the policy $\pi$.
Intuitively, if $C^\pi$ is large, then the dataset does not cover the state-action pairs visited by $\pi$, making it difficult to learn about $\pi$ from the dataset.

\subsection{Unconstrained Reinforcement Learning}

\paragraph{Markov Decision Processes}
We formulate unconstrained RL with a Markov decision process (MDP)~\parencite{puterman2014markov} defined by a tuple $\mathcal{M} = \left( \mathcal{S}, \mathcal{A}, P, r, \gamma, d_0 \right)$, where $\mathcal{S}$ is the state space, $\mathcal{A}$ is the action space, $P : \mathcal{S} \times \mathcal{A} \rightarrow \Delta(\mathcal{S})$ is the transition probability kernel, $r : \mathcal{S} \times \mathcal{A} \rightarrow [0, 1]$ is the reward function, $\gamma \in (0, 1)$ is the discount factor, and $d_0 \in \Delta(\mathcal{S})$ is the initial state distribution.
For simplicity, we assume $d_0$ is concentrated on a single state $s_0 \in \mathcal{S}$.
We assume the reward function $r$ is known to the learner while $P$ is unknown.
The interaction protocol between the environment and the agent under a MDP is as follows.
The environment samples an initial state from $d_0$.
Then, for each time step $t = 0, 1, \dots$, the agent chooses an action $a_t$ and receives the reward $r(s_t, a_t)$.
Then the environment advances to the next state $s_{t + 1}$ sampled from $P(\cdot | s_t, a_t)$.

\paragraph{Policies and Value Functions}
A (stationary) policy $\pi : \mathcal{S} \rightarrow \Delta(\mathcal{A})$ maps each state to a probability distribution over the action space.
A policy $\pi$, together with the transition probability kernel $P$, induces a distribution denoted by $P^\pi$ on the trajectory $(s_0, a_0, s_1, a_1, \dots)$.
We denote by $\mathbb{E}^\pi$ the expectation over this distribution.
The value of a policy $\pi$ is defined as
$$
 J(\pi) \coloneqq \mathbb{E}^\pi \left[ \sum_{t = 0}^\infty \gamma^t r(s_t, a_t) \right].
$$
The state value function and state-action value function of a policy $\pi$ are defined as
$$
 V^\pi(s) \coloneqq \mathbb{E}^\pi \left[ \sum_{t = 0}^\infty \gamma^t r(s_t, a_t) | s_0 = s \right],\quad
Q^\pi(s, a) \coloneqq \mathbb{E}^\pi \left[ \sum_{t = 0}^\infty \gamma^t r(s_t, a_t) | s_0 = s, a_0 = a \right].
$$
The goal of unconstrained RL is to find a stationary policy $\pi$ that maximizes the value $J(\pi)$:
\begin{equation} \label{eqn:rl}
\max_\pi J(\pi)
\end{equation}

\subsection{Linear Programming Formulation of Unconstrained Reinforcement Learning}
The linear programming (LP) formulation~\cite{manne1960linear} for solving~\eqref{eqn:rl} aims to find an \textit{occupancy measure} that maximizes value, rather than optimizing directly over the policy space. The occupancy measure of a policy $\pi$, denoted $\mu^\pi \in \Delta(\mathcal{S} \times \mathcal{A})$, is defined as:
$$
 \mu^\pi(s, a) \coloneqq (1 - \gamma) \sum_{t = 0}^\infty \gamma^t P^\pi(s_t = s, a_t = a).
$$

Leveraging the relation $(1 - \gamma) J(\pi) = \langle \bm\mu^\pi, \bm{r} \rangle$, the LP optimizes $\langle \bm\mu, \bm{r} \rangle$ over admissible occupancy measures $\bm\mu$:
\begin{equation} \label{eqn:lp}
\begin{aligned}
\max_{\bm\mu \geq 0}&~~~ \langle \bm\mu, \bm{r} \rangle \\
\text{subject to}&~~~ \bm{E}^\top \bm\mu = (1 - \gamma) \bm{d}_0 + \gamma \bm{P}^\top \bm\mu.
\end{aligned}
\end{equation}
where the optimization variable is $\bm\mu \in \mathbb{R}^{\vert \mathcal{S} \times \mathcal{A} \vert}$.
The bold-faced variables $\bm{r} \in \mathbb{R}^{\vert \mathcal{S} \times \mathcal{A} \vert}$, $\bm{d}_0 \in \mathbb{R}^{\vert \mathcal{S} \vert}$ and $\bm{P} \in \mathbb{R}^{\vert \mathcal{S} \times \mathcal{A} \vert \times \vert \mathcal{S} \vert}$ denote the vector or matrix representations of the functions $r$, $d_0$ and $P$, respectively.
The linear operator $\bm{E}$ is such that $[\bm{E}^\top \mu](s) = \sum_{a'} \mu(s, a')$ and $[\bm{E}V](s, a) = V(s)$ for all $s, a$.
The constraint, known as the Bellman flow constraint, ensures $\bm\mu$ is an admissible occupancy measure induced by some policy.

\paragraph{Policy Extraction}
Consider a procedure for extracting policy from an occupancy measure $\mu$, not necessarily admissible:
\begin{equation} \label{eqn:policy-extraction}
[\pi(\mu)](a | s) \coloneqq \frac{\mu(s, a)}{\sum_{a'} \mu(s, a')}~~\text{if}~\sum_{a'} \mu(s, a') > 0,\quad \frac{1}{\vert \mathcal{A} \vert}~\text{otherwise}.
\end{equation}
It is known that given an admissible occupancy measure $\mu$, the extracted policy $\pi(\mu)$ induces the occupancy measure $\mu$.
With this fact, we can find an optimal occupancy measure $\mu^\ast$ by solving the linear program to find optimal $\mu^\ast$, and extracting policy $\pi^\ast = \pi(\mu^\ast)$ from it.

\subsection{Offline Dataset}

In offline constrained RL, we assume access to an offline dataset $\mathcal{D} = \{(s_j, a_j, s_j') \}_{j = 1}^n$ where $(s_j, a_j)$ are generated i.i.d. from a data distribution $\mu_D$ and $s_j'$ is sampled from $P(\cdot~|~s_j, a_j)$.
Such an i.i.d. assumption on the offline dataset is commonly made in the offline RL literature \parencite{xie2021bellman,zhan2022offline,chen2022offline,zhu2023importance} to facilitate analysis of concentration bounds.
Additionally, we make the following concentrability assumption that bounds the ratio between the occupancy measure of the optimal policy and the data distribution, which is necessary for learning an optimal policy from the dataset.

\begin{assumption}[Single-Policy Concentrability] \label{assumption:concentrability}
For an optimal policy $\pi^\ast$, we have $\Vert \mu^{\pi^\ast} / \mu_D \Vert_\infty \leq C^\ast$ and such an upper bound $C^\ast$ is known.
\end{assumption}
This assumption says that the data distribution $\mu_D$ covers the state-action pairs visited by an optimal policy $\pi^\ast$, and the concentrability coefficient that measures the coverage ratio is bounded by $C^\ast$.
If the concentrability coefficient is large, then there are state-action pairs visited by $\pi^\ast$ that rarely appear in the dataset, making sample efficient learning hard.

\section{Main results for offline constrained RL} \label{sec:pdocrl}

We start from the LP formulation of the offline CMDP in \eqref{eq:cmdp-lp}.
Introducing a Lagrange multipliers \(\bm{V} \in \mathbb{R}^{\vert \mathcal{S} \vert}\) for the Bellman flow constraints and \(\bm\lambda \in \mathbb{R}_+^I\) for the safety constraints gives the Lagrangian
\begin{equation} \label{eqn:lagrangian}
 L(\bm\mu; \bm{V}, \bm\lambda) \coloneqq \langle \bm\mu, \bm{r} \rangle + \langle \bm{V}, (1 - \gamma) \bm{d}_0 + \gamma \bm{P}^\top \bm\mu - \bm{E}^\top \bm\mu \rangle + \sum_{i=1}^I \lambda_i (\langle \bm\mu, \bm{r}_i \rangle - \tau_i),
\end{equation}
which is linear in the occupancy measure \(\mu\).
Writing \(\mu(s,a)=w(s,a)\mu_D(s,a)\) turns the \(\mu\)-dependent term into an expectation under the data distribution \(\mu_D\), allowing it to be estimated directly from the offline dataset by sample averaging:
\begin{equation} \label{eqn:ablation}
 \widehat{L}(\bm w; \bm V, \bm \lambda) \coloneqq (1-\gamma)V(s_0)+\frac{1}{n} \sum_{j=1}^n w(s_j, a_j) \left( r_{\bm\lambda}(s_j, a_j) + \gamma V(s_j') - V(s_j) \right) - \sum_{i=1}^I \lambda_i \tau_i,
\end{equation}
where $\bm{r}_{\bm\lambda} = \bm{r}_0 + \sum_{i=1}^I \lambda_i \bm{r}_i$ is the Lagrangian reward.
If \(\mu_D\) were known, a natural approach would be to optimize over the density ratio \(w\), form the induced occupancy measure $\mu = w \mu_D$, and then recover a policy by the statewise normalization in using \eqref{eqn:policy-extraction}.
Appendix~\ref{appendix:known-data-distribution} shows that this approach already admits provable guarantees.
In practice, however, \(\mu_D\) may be unknown, and policy extraction from $w$ becomes nontrivial.

Existing empirical methods~\parencite{lee2021optidice,lee2022coptidice} address this issue by adding an importance-weighted behavior-cloning step motivated by the identity $\pi^\ast = \argmax_\pi \mathbb{E}_{(s, a) \sim \mu^D}[w^{\pi^\ast}(s, a)\log \pi(a | s)]$:
\begin{equation} \label{eqn:policy-extraction-density-ratio}
 \widehat\pi = \argmax_\pi \frac{1}{n} \sum_{j = 1}^n \widehat{w}(s_j, a_j) \log \pi(a_j | s_j),
\end{equation}
where $\widehat{w}$ is the learned density ratio.
However, this approach suffers from a fundamental statistical limitation: empirical policy extraction from density ratios can be unreliable under finite samples, even when the underlying density ratio is well-behaved. The following proposition formalizes this failure mode.
See Appendix~\ref{sec:policy-extraction} for a proof.
\begin{proposition} \label{prop:policy-extraction}
There exist a MDP $(\mathcal{S}, \mathcal{A}, P, r, \gamma, d_0)$ and a policy class $\Pi$ that contains an optimal policy with $\vert \mathcal{S} \vert = \vert \mathcal{A} \vert = \vert \Pi \vert = 2$, data distribution $\mu_D$, and a density-ratio estimate $\widehat{w}$ such that:
\begin{enumerate}[label=(\roman*),nosep]
\item Policy $\widehat{\pi} = \pi(\widehat{w} \, \mu_D)$ extracted from $\widehat{w}$ with the knowledge of $\mu_D$ is optimal.
\item The policy extraction $\widehat{\pi} = \argmax_{\pi \in \Pi} \mathbb{E}_{\mu_D}[\widehat{w}(s, a) \log \pi(a | s)]$ is optimal.
\item But the empirical policy extraction \eqref{eqn:policy-extraction-density-ratio} returns a suboptimal policy with constant probability.
\end{enumerate}
\end{proposition}

This negative result suggests that policy extraction from density ratios is inherently fragile.
It motivates designing algorithms that optimize over policies directly, without requiring a separate policy-extraction step.
In the next section, we introduce a Lagrangian decomposition and reparameterization that make the policy an explicit optimization variable, thereby avoiding the policy-extraction step.

\subsection{Lagrangian decomposition and reparameterization}

The key difficulty is that the occupancy measure variable $\mu$ in the Lagrangian formulation~\eqref{eqn:lagrangian} plays two roles: Lagrangian estimation and policy extraction.
For Lagrangian estimation, we use the change-of-measure $\mu = w \mu_D$ which allows the Lagrangian to be expressed as an expectation under the data distribution $\mu_D$.
After the change of variables, however, policy extraction has to be done on $w \mu_D(s,a)$, and hence requires knowledge of the unknown data distribution $\mu_D$.

Lagrangian decomposition addresses this issue by introducing a copy variable $\nu$ of $\mu$ and separating these two roles: \(\mu\) is used for Lagrangian estimation through the change of measure, while \(\nu\) is used for policy extraction.
The decomposition is obtained by adding a constraint $\nu = \mu$ to the original LP formulation \eqref{eq:cmdp-lp} and decomposing the Bellman flow constraint:
\begin{equation}
\begin{aligned}
\max_{\bm\mu\ge 0,\ \bm\nu\ge 0}\quad & \langle \bm\mu, \bm{r}_0 \rangle \\
\text{subject to}\quad
& \bm{E}^\top \bm\nu = (1-\gamma)\bm{d}_0 + \gamma \bm{P}^\top \bm\mu, \\
& \bm\mu = \bm\nu, \\
& \langle \bm\mu, \bm{r}_i \rangle \ge \tau_i,\qquad i=1,\ldots,I.
\end{aligned}
\end{equation}
Introducing the multiplier $\bm{Q} \in \mathbb{R}^{\vert \mathcal{S} \times \mathcal{A} \vert}$ for the constraint $\bm\mu = \bm\nu$, we get the Lagrangian function
\begin{equation} \label{eqn:lagrangian-decomposition}
 L(\bm\mu, \bm\nu ; \bm{V}, \bm{Q}, \bm{\lambda}) = (1 - \gamma) \langle \bm{d}_0, \bm{V} \rangle + \langle \bm\mu, \bm{r}_{\bm\lambda} + \gamma \bm{P}\bm{V} - \bm{Q} \rangle + \langle \bm\nu, \bm{Q} - \bm{E}\bm{V} \rangle - \sum_{i=1}^I \lambda_i \tau_i.
\end{equation}
Lagrangian decomposition~\parencite{shepardson1980lagrangean,guignard1987lagrangean} is a classical method for decomposing an optimization problem into subproblems by introducing copies of optimization variables.
It is first used in the context of MDP by \textcite{mehta2009q}.
It has since been used in RL in several settings~\parencite{neu2023efficient,gabbianelli2024offline,hong2024primal,neu2024offline}.

One could now change variables by writing \(w=\mu/\mu_D\), estimate the decomposed Lagrangian as \(\widehat L(w,\nu;V,Q,\lambda)\), find a saddle point, and then extract a policy from \(\nu\).
But a simple reparameterization trick allows us to simplify the optimization problem by introducing a policy variable.
Specifically, consider a policy $\pi$ induced by $\nu$. Then, $\nu(s, a) = \pi(a | s) \nu(s)$ and the decomposed Bellman flow constraint suggests the following reparameterization of $\nu$ in terms of $\mu$ and $\pi$:
$$
\nu_{\mu,\pi}(s,a) \coloneqq \pi(a \mid s)\bigl((1-\gamma)d_0(s)+\gamma [\bm P^\top \bm\mu](s)\bigr).
$$
By construction, $\nu = \nu_{\mu, \pi}$ satisfies the decomposed Bellman flow constraint for every $\mu$ and $\pi$, and the policy extracted from $\nu_{\mu, \pi}$ is exactly $\pi$.
Substituting $\nu = \nu_{\mu, \pi}$ into the Lagrangian and using the notation $Q_\pi(s) = \sum_{a'} \pi(a' | s) Q(s, a')$ yield
\begin{equation} \label{eqn:lagrangian-decomposition-reparameterization}
 L(\bm\mu, \pi ; \bm{Q}, \bm\lambda)= (1 - \gamma) \langle \bm{Q}_\pi, \bm{d}_0 \rangle + \langle \bm\mu, \bm{r}_{\bm\lambda} + \gamma \bm{P}\bm{Q}_\pi - \bm{Q} \rangle - \sum_{i=1}^I \lambda_i \tau_i,
\end{equation}

The reparameterized decomposed Lagrangian was first introduced by \textcite{gabbianelli2024offline} for offline RL under linear MDPs.
In that setting, the linear structure further allows $\mu$ and $Q$ to be represented by linear functions of features, reducing the optimization to one over feature weights.
In the general function approximation setting, however, the restricted saddle-point problem over general function classes for $\mu$ and $Q$ can admit spurious saddle points that do not correspond to optimal policies, as discussed next, requiring additional assumptions to rule out this pathology.

\subsection{Spurious saddle points and assumptions for saddle-point optimality}

The next proposition shows that merely realizing an optimal policy, its occupancy measure, and its associated action-value functions is not sufficient to rule this out.

\begin{proposition}
\label{prop:negative2}
There exists a CMDP $\mathcal M=(\mathcal S,\mathcal A,P,r_0,\{r_i\}_{i = 1}^I,\gamma,d_0)$,
function classes \(\mathcal U \subseteq \mathbb R_+^{\mathcal S\times\mathcal A}\),
\(\mathcal Q \subseteq \mathbb R^{\mathcal S\times\mathcal A}\), and a policy class \(\Pi\) such that:
\begin{enumerate}[label=(\roman*),nosep]
\item there exists an optimal feasible policy \(\pi^\ast \in \Pi\) with occupancy measure
\(\mu^\ast \in \mathcal{U}\);
\item for all \(\alpha \in \mathbb{R}_+^I\), $Q^\ast_\alpha \in \mathcal{Q}$, where \(Q_\alpha^\ast\) is the optimal value function for
\(r_0+\sum_{i = 1}^I \alpha_i r_i\);
\item nevertheless, there exists \(\widehat{\pi}\in\Pi\) that is not optimal for the CMDP such that
\((\widehat \mu,\widehat{\pi};\widehat{Q},\widehat{\bm\lambda})\) is a saddle point of the reparameterized Lagrangian
\(L\) over \((\mathcal U \times \Pi)\times (\mathcal Q \times \mathbb R_+^I)\).
\end{enumerate}
\end{proposition}

To deal with the problem of spurious saddle points in the decomposed formulation, we first impose the following Slater-type condition to ensure the existence of a feasible policy that satisfies the constraints with a positive margin:

\begin{assumption}[Slater's condition] \label{assumption:slater}
There exist a constant $\varphi > 0$ and a policy $\pi$ that satisfy $(1 - \gamma) J_i(\pi) \geq \tau_i + \varphi$ for all $i = 1, \dots, I$.
\end{assumption}

Under a mild assumption that a feasible policy exists, the thresholds $\tau_i$ can be adjusted downward by $\varphi$ to ensure Slater's condition holds with margin $\varphi$, if necessary.
This Slater-type condition is standard in constrained RL analyses, where it is commonly used to obtain strong duality and to bound the optimal dual variable \parencite{hong2024primal,mondal2024sample,ying2025policy}.
The following lemma demonstrates that, under this assumption, an optimal dual variable $\bm\lambda^\ast$ is bounded.

\begin{lemma} \label{lemma:dual-variable-bound}
Consider a constrained optimization problem \eqref{eq:cmdp} with thresholds $\bm\tau = (\tau_1, \dots, \tau_I)$.
Suppose the problem satisfies Assumption~\ref{assumption:slater} with margin $\varphi > 0$.
Then, the optimal dual variable $\bm\lambda^\ast$ of the problem satisfies $\Vert \bm\lambda^\ast \Vert_1 \leq \frac{1}{\varphi}$.
\end{lemma}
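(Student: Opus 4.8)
The plan is to invoke Lagrangian strong duality for the constrained problem \eqref{eqn:cmdp} and then evaluate the dual objective at the Slater policy. Define the Lagrangian over policies, $\mathcal{L}(\pi, \bm\lambda) \coloneqq J_0(\pi) + \sum_{i=1}^I \lambda_i \big( J_i(\pi) - \tau_i / (1 - \gamma) \big)$ for $\bm\lambda \in \mathbb{R}_{\geq 0}^I$, and the dual function $g(\bm\lambda) \coloneqq \max_\pi \mathcal{L}(\pi, \bm\lambda)$. First I would record weak duality: since an optimal policy $\pi^\ast$ is feasible, $J_i(\pi^\ast) - \tau_i/(1-\gamma) \geq 0$ for every $i$, so $g(\bm\lambda) \geq \mathcal{L}(\pi^\ast, \bm\lambda) \geq J_0(\pi^\ast)$ for all $\bm\lambda \geq \bm{0}$.

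The crux is strong duality: under Slater's condition there is an optimal dual variable $\bm\lambda^\star \in \argmin_{\bm\lambda \geq \bm0} g(\bm\lambda)$ with $g(\bm\lambda^\star) = J_0(\pi^\ast)$. This does not follow from textbook convex duality applied directly, because the maximization over stationary policies is not a convex program; the standard remedy is to pass to the equivalent occupancy-measure linear program (the program in $\bm\mu$ obtained by eliminating the $\bm\mu = \bm\nu$ copy in the display preceding Assumption~\ref{assumption:slater}), which is a genuine LP, note that the policy from Assumption~\ref{assumption:slater} induces a strictly feasible occupancy measure, and apply LP strong duality; alternatively one can cite the classical CMDP duality theory. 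I expect this justification — selecting the right convex reformulation and verifying that $\bm\lambda^\star$ is precisely the multiplier attached to the reward constraints $\langle \bm\mu, \bm r_i \rangle \geq \tau_i$ — to be the main (though routine) obstacle.

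Granting strong duality, the remainder is a short computation. Let $\bar\pi$ be the policy from Assumption~\ref{assumption:slater}, so $J_i(\bar\pi) - \tau_i/(1-\gamma) \geq \varphi/(1-\gamma)$ for all $i$. Then, using $g(\bm\lambda^\star) = \max_\pi \mathcal{L}(\pi, \bm\lambda^\star) \geq \mathcal{L}(\bar\pi, \bm\lambda^\star)$ and $\lambda_i^\star \geq 0$,
$$
J_0(\pi^\ast) = g(\bm\lambda^\star) \geq \mathcal{L}(\bar\pi, \bm\lambda^\star) = J_0(\bar\pi) + \sum_{i=1}^I \lambda_i^\star \big( J_i(\bar\pi) - \tfrac{\tau_i}{1-\gamma} \big) \geq J_0(\bar\pi) + \frac{\varphi}{1 - \gamma} \sum_{i=1}^I \lambda_i^\star,
$$
hence $\sum_{i=1}^I \lambda_i^\star \leq \frac{1-\gamma}{\varphi}\big( J_0(\pi^\ast) - J_0(\bar\pi) \big)$. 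Finally, since $r_0$ takes values in $[0,1]$ we have $0 \leq J_0(\pi) \leq 1/(1-\gamma)$ for every policy $\pi$, so $J_0(\pi^\ast) - J_0(\bar\pi) \leq 1/(1-\gamma)$, which yields $\Vert \bm\lambda^\star \Vert_1 = \sum_{i=1}^I \lambda_i^\star \leq 1/\varphi$, as claimed. A brief remark on existence of $\bm\lambda^\star$ can be folded in here, since boundedness of the dual optimal set is itself a consequence of Slater's condition.
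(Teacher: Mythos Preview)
Your proposal is correct and follows essentially the same route as the paper: invoke strong duality so that $g(\bm\lambda^\star)$ equals the optimal value, lower bound the dual function by evaluating at the Slater policy, and rearrange using $0 \le J_0(\cdot) \le 1/(1-\gamma)$. The paper carries out the identical computation with the occupancy-measure Lagrangian $L(\bm\mu,\bm\nu;\bm Q,\bm\lambda)$ rather than the policy Lagrangian, but this is only a cosmetic change of variables; your added remark that strong duality must be justified via the LP reformulation (since the policy problem is not convex) is in fact more explicit than the paper, which simply asserts strong duality.
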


Now, to deal with spurious saddle points, we impose the following all-policy value function realizability assumption.

\begin{assumption}[All-Policy Value Function Realizability] \label{assumption:all-policy-state-action-value-realizability}
For every policy $\pi \in \Pi$, we have $Q_0^\pi \in \mathcal{Q}$ and $Q_0^\pi + (1 + \frac{1}{\varphi}) Q_i^\pi \in \mathcal{Q}$ for all $i = 0, 1, \dots, I$.
\end{assumption}

This is a strong all-policy realizability condition, but it is standard in iterative offline actor-critic analyses with general function approximation, where the critic must evaluate the policies generated by the actor \parencite{cheng2022adversarially,zhu2023importance,wei2024adversarially}.
Single-policy realizability is possible in methods that solve saddle/minimax problems such as PRO-RL~\parencite{zhan2022offline} and CORAL~\parencite{rashidinejad2022optimal}, but these methods are not oracle-efficient in our sense.
With the stronger all-policy condition the lemma below shows spurious saddle points vanish.
We leave the question of whether weaker assumptions for iterative methods can rule out spurious saddle points to future work.

\begin{lemma} \label{lemma:saddle-point-lagrangian-decomposition}
Assume Assumption~\ref{assumption:slater} and ~\ref{assumption:all-policy-state-action-value-realizability}.
Consider function classes \(\mathcal{U} \subseteq \mathbb{R}_+^{\vert \mathcal{S} \times \mathcal{A} \vert}\) and
\(\mathcal{Q} \subseteq \mathbb{R}^{\vert \mathcal{S} \times \mathcal{A} \vert}\).
Let \(\Pi\) be a policy class containing an optimal feasible policy \(\pi^\ast\), and let
\(\mu^\ast = \mu^{\pi^\ast} \in \mathcal{U}\).
If \((\widehat\mu, \widehat\pi; \widehat Q, \widehat\lambda)\) is a saddle point of \(L\) over
$(\mathcal{U} \times \Pi) \times (\mathcal{Q} \times (1 + \frac{1}{\varphi}) \Delta^I)$, then \(\widehat\pi\) is an optimal feasible policy.
\end{lemma}

\subsection{Lagrangian estimation and minimax algorithm}

As discussed previously, we can change variable $w = \mu / \mu_D$
to turn the $\mu$-dependent term in the Lagrangian~\eqref{eqn:lagrangian-decomposition-reparameterization} into an expectation under $\mu_D$, which can be estimated by sample averaging from the offline dataset as:
\[
\widehat{L}(\bm{w}, \pi ; \bm{Q}, \bm\lambda)
\coloneqq
(1 - \gamma) Q(s_0, \pi)
+
\frac{1}{n} \sum_{j = 1}^n w(s_j, a_j)
\Bigl(
r_{\bm\lambda}(s_j, a_j)
+ \gamma Q(s_j', \pi)
- Q(s_j, a_j)
\Bigr)
-
\sum_{i=1}^I \lambda_i \tau_i.
\]
Motivated by this estimator, we first consider the following idealized empirical saddle-point problem:
\begin{equation} \label{eqn:primal-dual-lagrangian-decomposition}
\max_{w \in \mathcal{W}, \pi \in \Pi}~\min_{Q \in \mathcal{Q}, \lambda \in (1 + \frac{1}{\varphi}) \Delta^I} \widehat{L}(w, \pi ; Q, \lambda),
\end{equation}
and output the policy component $\widehat{\pi}$ of a saddle point $(\widehat{w}, \widehat\pi ; \widehat{Q}, \widehat{\lambda})$.
Although solving \eqref{eqn:primal-dual-lagrangian-decomposition} exactly is generally computationally intractable, it provides a clean benchmark for the analysis: under suitable realizability and boundedness assumptions, an exact saddle point of $\widehat{L}$ yields a near-optimal and near-feasible policy.
We first require that the density ratio of an optimal feasible policy belong to $\mathcal{W}$.

\begin{assumption} \label{assumption:miw-realizability}
For an optimal policy $\pi^\ast$, we have $w^{\pi^\ast} = \mu^{\pi^\ast} / \mu_D \in \mathcal{W}$.
\end{assumption}

To control the statistical error of $\widehat{L}$, we also impose standard boundedness assumptions on the density-ratio and $Q$-function classes.

\begin{assumption} \label{assumption:boundedness}
For all $w \in \mathcal{W}$, $\Vert w \Vert_\infty \leq C^\ast$ and for all $Q \in \mathcal{Q}$, $\Vert Q \Vert_\infty \leq (2 + \frac{1}{\varphi}) / (1 - \gamma)$.
\end{assumption}

The bound $(2 + \frac{1}{\varphi}) / (1 - \gamma)$ on the $Q$-functions is natural since the Lagrangian reward $r_{\bm\lambda}$ is bounded by $2 + \frac{1}{\varphi}$.
The bound $C^\ast$ on $\mathcal{W}$, together with the assumption $w^{\pi^\ast} \in \mathcal{W}$, implies $C^{\pi^\ast} = \Vert w^{\pi^\ast} \Vert_\infty \leq C^\ast$.
Thus, positing a density-ratio class $\mathcal{W}$ bounded by $C^\ast$ amounts to assuming that the learner knows an upper bound on the concentrability coefficient of the optimal policy. Such an assumption is common in offline RL with function approximation.
Appendix~\ref{sec:saddle-point-analysis} contains the guarantee for $\widehat{\pi}$ obtained by the minimax algorithm \eqref{eqn:primal-dual-lagrangian-decomposition}.

\subsection{Oracle-efficient primal-dual algorithm}

\begin{algorithm}[t] \label{alg:pdocrl}
\KwInput{Dataset $\mathcal{D} = \{ (s_j, a_j, s_j') \}_{j = 1}^n$, function classes $\mathcal{W}$, $\mathcal{Q}$ and $\Pi$, no-regret online linear-optimization oracle $\texttt{OLO}_{\mathcal{W}}$, no-regret policy optimization oracle $\texttt{PO}_{\Pi, \mathcal{Q}}$, linear minimization oracles $\texttt{LMO}_{\mathcal{Q}}$ and $\texttt{LMO}_{B\Delta^I}$, dual bound $B > 0$, number of iterations $T$.}
\For{$t = 1, \dots, T$}{
$w_t \gets \texttt{OLO}_{\mathcal{W}}(\{ \nabla_w \widehat{L}(~\cdot~, \pi_\tau ; Q_\tau, \lambda_\tau) \}_{\tau = 1}^{t-1})$. \\
$\pi_t \gets \texttt{PO}_{\Pi, \mathcal{Q}}(Q_1, \dots, Q_{t - 1})$. \\
$\lambda_t \gets \texttt{LMO}_{B\Delta^I} (\widehat{L}(w_t, \pi_t ; Q_{t - 1}, ~\cdot~))$. \label{algline:lambda-update} \\
$Q_t \gets \texttt{LMO}_{\mathcal{Q}} (\widehat{L}(w_t, \pi_t ; ~\cdot~, \lambda_t))$. \label{algline:q-update}
}
\KwReturn{$\text{Uniform}(\pi_1, \dots, \pi_T)$}
\caption{PDOCRL}
\end{algorithm}  

We have not yet addressed the computational challenges of finding the saddle point.
In this section, we propose an oracle-efficient primal-dual algorithm that solves the saddle point problem~\eqref{eqn:primal-dual-lagrangian-decomposition} by iteratively optimizing for the primal variables $w, \pi$ and the dual variables $Q, \lambda$, using standard oracles.

The \(w\)-player employs a no-regret online linear optimization algorithm over $\mathcal{W}$ defined as follows.
\begin{definition}
An online algorithm $\texttt{OLO}_\mathcal{W}$ is called a \emph{no-regret online linear optimization oracle} for the function class $\mathcal{W} \subseteq \mathbb{R}_+^{\mathcal{S} \times \mathcal{A}}$,
if for every sequence $c_1, \dots, c_T \in \mathcal{W}$, the algorithm sequentially outputs $w_t = \texttt{OLO}_\mathcal{W}(c_1, \dots, c_{t - 1}) \in \mathcal{W}$, $t=1,\dots,T$,
such that, for every comparator $w \in \mathcal{W}$,
$$
 \frac{1}{T} \sum_{t=1}^T \langle c_t, w - w_t \rangle \rightarrow 0, \quad \text{as}~~T \rightarrow \infty.
$$
\end{definition}
For example, with an assumption that $\mathcal{W}$ is convex, the projected gradient ascent can serve as the no-regret oracle, as discussed in Appendix~\ref{sec:w-player}.

The $\pi$-player employs a no-regret policy optimization oracle over $\Pi$ defined as follows.

\begin{definition} \label{def:no-regret-oracle}
An online algorithm $\texttt{PO}_{\Pi, \mathcal{Q}}$ is called a \emph{no-regret policy optimization oracle} for the \(\pi\)-player, function class \(\mathcal Q\subseteq \mathbb R^{\mathcal S\times\mathcal A}\) and policy class $\Pi \subseteq (\Delta(\mathcal{A}))^{\mathcal{S}}$,
if for every sequence \(Q_1,\dots,Q_T \in \mathcal Q\), the algorithm sequentially outputs policies
$\pi_t = \texttt{PO}_{\Pi, \mathcal{Q}}(Q_1,\dots,Q_{t-1}) \in \Pi$, $t=1,\dots,T$,
such that, for every comparator policy $\pi \in \Pi$,
\[
 \frac{1}{T} \sum_{t=1}^T \mathbb{E}_\pi [ Q_t(s, \pi) - Q_t(s, \pi_t)] \rightarrow 0, \quad \text{as}~~T \rightarrow \infty,
\]
where $\mathbb{E}_\pi$ denotes the expectation over $s \sim \sum_a \mu^\pi(\cdot, a)$, the state occupancy measure of $\pi$.
\end{definition}

As discussed by \textcite{wei2024adversarially}, there exist many methods that can serve as the no-regret oracle, for example, the mirror descent approach~\parencite{geist2019theory} and natural policy gradient approach~\parencite{kakade2001natural} of the form $\pi_t(\cdot | s) \propto \pi_{t - 1}(\cdot | s) \exp(\alpha Q_{t - 1}(s, \cdot))$ for some learning rate $\alpha > 0$.

For the $Q$-player, since the Q-dependent part of $\widehat{L}$ is linear in Q, we use a linear minimization oracle $\texttt{LMO}_{\mathcal{Q}}(c) \in \argmin_{Q \in \mathcal{Q}} \langle c, Q \rangle$.
Similarly, since the $\lambda$-dependent part of $\widehat{L}$ is linear in $\lambda$, we use a linear minimization oracle $\texttt{LMO}_{B\Delta^I}(c) \in \argmin_{\lambda \in B\Delta^I} \langle c, \lambda \rangle$.

The full algorithm is described in Algorithm~\ref{alg:pdocrl}. The algorithm runs for $T$ iterations and returns a policy sampled uniformly at random from $\pi_1, \dots, \pi_T$.
The policy returned by the algorithm has the following guarantee.
For the exact statement and the proof, see Theorem~\ref{theorem:crl-pd} in Appendix~\ref{appendix:unknown-data-distribution}.

\begin{theorem}[Informal] \label{theorem-crl-pd}
Under Assumptions~\ref{assumption:slater}, \ref{assumption:all-policy-state-action-value-realizability}, \ref{assumption:miw-realizability} and \ref{assumption:boundedness}, running Algorithm~\ref{alg:pdocrl} with dual bound $B = 1 + \frac{1}{\varphi}$ and $T$ large enough, the output policy $\widehat\pi$ satisfies
$$
(1 - \gamma) J_0(\widehat\pi) \geq (1 - \gamma) J_0(\pi^\ast) - \varepsilon_n, \quad (1 - \gamma) J_i(\widehat\pi) \geq \tau_i - \varepsilon_n,~~i = 1, \dots, I.
$$
with probability at least $1 - \delta$,
where $\varepsilon_n = \widetilde{\mathcal{O}}(C^\ast \sqrt{\log (I \vert \mathcal{W} \vert \vert \Pi \vert \vert \mathcal{Q} \vert / \delta) / n})$.
\end{theorem}


\subsection{Adaptation to offline unconstrained RL setting}

The algorithms we have discussed so far for the constrained RL setting can be adapted to the unconstrained offline RL setting by removing the dual variable $\lambda$ and the associated terms in the Lagrangian, and the analysis can be adapted accordingly.

In fact, with a careful analysis on the guarantee of the saddle point of the Lagrangian, we can get a stronger guarantee in the unconstrained setting.
Specifically, we can compete with any comparator policy covered by $\mathcal{W}$, instead of just the optimal policy as in the constrained setting.
The adapted algorithm is shown in Algorithm~\ref{alg:pdorl}, and the guarantee is stated in Theorem~\ref{theorem:pd-unconstrained}.

\begin{algorithm} \label{alg:pdorl}
\KwInput{Dataset $\mathcal{D} = \{ (s_j, a_j, s_j') \}_{j = 1}^n$, function classes $\mathcal{W}$, $\mathcal{Q}$ and $\Pi$, no-regret online linear-optimization oracle $\texttt{OLO}_{\mathcal{W}}$, no-regret policy optimization oracle $\texttt{PO}_{\Pi, \mathcal{Q}}$, linear minimization oracles $\texttt{LMO}_{\mathcal{Q}}$, number of iterations $T$.}
\For{$t = 1, \dots, T$}{
$w_t \gets \texttt{OLO}_{\mathcal{W}}(\{ \nabla_w \widehat{L}(~\cdot~, \pi_\tau ; Q_\tau) \}_{\tau = 1}^{t-1})$. \\
$\pi_t \gets \texttt{PO}_{\Pi, \mathcal{Q}}(Q_1, \dots, Q_{t - 1})$. \\
$Q_t \gets \texttt{LMO}_{\mathcal{Q}}(\nabla_Q \widehat{L}(w_t, \pi_t ; ~\cdot~))$. \label{algline:argmin}
}
\KwReturn{$\text{Uniform}(\pi_1, \dots, \pi_T)$}
\caption{PDORL}
\end{algorithm}

\begin{theorem}[Informal] \label{theorem:pd-unconstrained}
Assume $Q^\pi \in \mathcal{Q}$ for all $\pi \in \Pi$, and $w^{\pi_c} \in \mathcal{W}$ for some comparator policy $\pi_c \in \Pi$, and $\Vert w \Vert_\infty \leq C$ for all $w \in \mathcal{W}$ and $\Vert Q \Vert_\infty \leq 1 / (1 - \gamma)$ for all $Q \in \mathcal{Q}$.
Running Algorithm~\ref{alg:pdorl} with $T$ large enough, the output policy $\widehat\pi$ satisfies
$$
(1 - \gamma) J(\widehat\pi) \geq (1 - \gamma) J(\pi_c) - \varepsilon_n
$$
with probability at least $1 - \delta$, where $\varepsilon_n = \mathcal{O}(\frac{C}{1 - \gamma} \sqrt{\log (\vert \mathcal{W} \vert \vert \Pi \vert \vert \mathcal{Q} \vert / \delta) / n})$.
\end{theorem}
\section{Experiments} \label{sec:exp}

We compare PDOCRL with  offline constrained RL algorithms~\cite{liu2023datasetsbenchmarksofflinesafe}, assessing constraint violation and reward maximization.\footnotemark[1]\footnotetext[1]{Source code: \url{https://github.com/komin0407/PDOCRL}} 
For each task, training is performed for 100,000 gradient steps using three cost thresholds and three random seeds.
Each trained model is then evaluated over 20 episodes, and we report the average return.
See Appendix \ref{appendix:exp} for details on the experimental setup.

\paragraph{Benchmark and datasets}
The experiments are conducted on the BulletGym benchmark~\cite{gronauer2022bullet}, consisting of continuous-control tasks with explicit safety constraints.
We use the offline datasets provided by the DSRL package~\cite{liu2023datasetsbenchmarksofflinesafe}, which offers D4RL-styled~\cite{fu2020d4rl} datasets for safe RL tasks.
For each BulletGym task, the dataset is collected by training multiple safe RL expert policies under varying cost thresholds, then merging the resulting trajectories. 
We follow the standard evaluation protocol of~\cite{liu2023datasetsbenchmarksofflinesafe}, reporting normalized reward and normalized cost, where the normalized cost is defined as the ratio of the evaluated cost return to the cost threshold. 

\paragraph{Baselines}
The baselines are taken from the benchmark paper \textcite{liu2023datasetsbenchmarksofflinesafe}.
BC-All and BC-Safe~\cite{liu2022constrained, xu2022trustworthy} are imitation-learning-based methods.
CDT~\cite{liu2022constrained} uses a transformer architecture conditioned on return and cost.
BCQ-Lag~\cite{xu2022constraints}, BEAR-Lag~\cite{xu2022constraints}, and CPQ~\cite{xu2022constraints} are Q-learning-based methods, while COptiDICE~\cite{lee2022coptidice} is a LP-based approach, most closely related to our method.

\paragraph{Evaluation metrics} We evaluate each method using the normalized reward return and normalized cost return. The reward is normalized by the empirical minimum and maximum reward values of each task, making the results comparable across different environments. The cost is normalized by the target cost threshold, so that a value below 1 indicates that the safety constraint is satisfied. For simplicity, we refer to the normalized reward return and normalized cost return as \emph{reward} and \emph{cost}, respectively. For more details, refer to \ref{appendix:metric}.

\begin{table}[htb]
\caption{\small Evaluation results of the normalized reward and cost on BulletGym tasks. The cost threshold is 1.
$\uparrow$: higher reward is better. $\downarrow$: lower cost (up to threshold 1) is better.
\textbf{Bold}: Safe agents (normalized cost $\le 1$).
{\color[HTML]{656565} Gray}: Unsafe agents.
{\color[HTML]{0000FF} \textbf{Blue}}: Safe agent with the highest reward.}
\vspace{-2pt}
\label{tab:bulletgym-result}
\begin{center}
\resizebox{1.\linewidth}{!}{
\begin{tabular}{c cc cc cc cc cc cc cc cc}
\toprule
\multirow{2}{*}{Task}
& \multicolumn{2}{c}{BC-All}
& \multicolumn{2}{c}{BC-Safe}
& \multicolumn{2}{c}{CDT}
& \multicolumn{2}{c}{BCQ-Lag}
& \multicolumn{2}{c}{BEAR-Lag}
& \multicolumn{2}{c}{CPQ}
& \multicolumn{2}{c}{COptiDICE}
& \multicolumn{2}{c}{\textbf{PDOCRL (Ours)}} \\
\cmidrule(lr){2-3} \cmidrule(lr){4-5} \cmidrule(lr){6-7} \cmidrule(lr){8-9} \cmidrule(lr){10-11} \cmidrule(lr){12-13} \cmidrule(lr){14-15} \cmidrule(lr){16-17}
& Rew.\,$\uparrow$ & Cost\,$\downarrow$
& Rew.\,$\uparrow$ & Cost\,$\downarrow$
& Rew.\,$\uparrow$ & Cost\,$\downarrow$
& Rew.\,$\uparrow$ & Cost\,$\downarrow$
& Rew.\,$\uparrow$ & Cost\,$\downarrow$
& Rew.\,$\uparrow$ & Cost\,$\downarrow$
& Rew.\,$\uparrow$ & Cost\,$\downarrow$
& Rew.\,$\uparrow$ & Cost\,$\downarrow$ \\
\midrule
 
BallRun
& {\color[HTML]{656565} $0.60_{\scriptscriptstyle\pm0.10}$} & {\color[HTML]{656565} $5.08_{\scriptscriptstyle\pm0.74}$}
& {\color[HTML]{656565} $0.27_{\scriptscriptstyle\pm0.14}$} & {\color[HTML]{656565} $1.46_{\scriptscriptstyle\pm0.39}$}
& {\color[HTML]{656565} $0.39_{\scriptscriptstyle\pm0.09}$} & {\color[HTML]{656565} $1.16_{\scriptscriptstyle\pm0.19}$}
& {\color[HTML]{656565} $0.76_{\scriptscriptstyle\pm0.01}$} & {\color[HTML]{656565} $3.91_{\scriptscriptstyle\pm0.35}$}
& {\color[HTML]{656565} $-0.47_{\scriptscriptstyle\pm0.00}$} & {\color[HTML]{656565} $5.03_{\scriptscriptstyle\pm0.00}$}
& {\color[HTML]{656565} $0.22_{\scriptscriptstyle\pm0.00}$} & {\color[HTML]{656565} $1.27_{\scriptscriptstyle\pm0.12}$}
& {\color[HTML]{656565} $0.59_{\scriptscriptstyle\pm0.00}$} & {\color[HTML]{656565} $3.52_{\scriptscriptstyle\pm0.00}$}
& {\color[HTML]{0000FF} \boldmath\textbf{$0.48_{\scriptscriptstyle\pm0.33}$}} & {\color[HTML]{0000FF} \boldmath\textbf{$0.86_{\scriptscriptstyle\pm0.82}$}} \\
 
CarRun
& {\boldmath\textbf{$0.97_{\scriptscriptstyle\pm0.02}$}} & {\boldmath\textbf{$0.33_{\scriptscriptstyle\pm0.05}$}}
& {\boldmath\textbf{$0.94_{\scriptscriptstyle\pm0.00}$}} & {\boldmath\textbf{$0.22_{\scriptscriptstyle\pm0.02}$}}
& {\color[HTML]{0000FF} \boldmath\textbf{$0.99_{\scriptscriptstyle\pm0.01}$}} & {\color[HTML]{0000FF} \boldmath\textbf{$0.65_{\scriptscriptstyle\pm0.31}$}}
& {\boldmath\textbf{$0.94_{\scriptscriptstyle\pm0.01}$}} & {\boldmath\textbf{$0.15_{\scriptscriptstyle\pm0.91}$}}
& {\color[HTML]{656565} $0.68_{\scriptscriptstyle\pm0.01}$} & {\color[HTML]{656565} $7.78_{\scriptscriptstyle\pm0.09}$}
& {\color[HTML]{656565} $0.95_{\scriptscriptstyle\pm0.01}$} & {\color[HTML]{656565} $1.79_{\scriptscriptstyle\pm0.18}$}
& {\boldmath\textbf{$0.87_{\scriptscriptstyle\pm0.00}$}} & {\boldmath\textbf{$0.00_{\scriptscriptstyle\pm0.00}$}}
& {\boldmath\textbf{$0.80_{\scriptscriptstyle\pm0.08}$}} & {\boldmath\textbf{$0.11_{\scriptscriptstyle\pm0.15}$}} \\
 
DroneRun
& {\color[HTML]{656565} $0.24_{\scriptscriptstyle\pm0.02}$} & {\color[HTML]{656565} $2.13_{\scriptscriptstyle\pm0.62}$}
& {\boldmath\textbf{$0.28_{\scriptscriptstyle\pm0.25}$}} & {\boldmath\textbf{$0.74_{\scriptscriptstyle\pm0.97}$}}
& {\color[HTML]{0000FF} \boldmath\textbf{$0.63_{\scriptscriptstyle\pm0.04}$}} & {\color[HTML]{0000FF} \boldmath\textbf{$0.79_{\scriptscriptstyle\pm0.68}$}}
& {\color[HTML]{656565} $0.72_{\scriptscriptstyle\pm0.12}$} & {\color[HTML]{656565} $5.54_{\scriptscriptstyle\pm0.81}$}
& {\color[HTML]{656565} $0.42_{\scriptscriptstyle\pm0.10}$} & {\color[HTML]{656565} $2.47_{\scriptscriptstyle\pm0.34}$}
& {\color[HTML]{656565} $0.33_{\scriptscriptstyle\pm0.10}$} & {\color[HTML]{656565} $3.52_{\scriptscriptstyle\pm0.58}$}
& {\color[HTML]{656565} $0.67_{\scriptscriptstyle\pm0.02}$} & {\color[HTML]{656565} $4.15_{\scriptscriptstyle\pm0.10}$}
& {\boldmath\textbf{$0.10_{\scriptscriptstyle\pm0.15}$}} & {\boldmath\textbf{$0.84_{\scriptscriptstyle\pm1.01}$}} \\
 
AntRun
& {\color[HTML]{656565} $0.72_{\scriptscriptstyle\pm0.06}$} & {\color[HTML]{656565} $2.93_{\scriptscriptstyle\pm2.40}$}
& {\color[HTML]{656565} $0.65_{\scriptscriptstyle\pm0.15}$} & {\color[HTML]{656565} $1.09_{\scriptscriptstyle\pm0.84}$}
& {\color[HTML]{0000FF} \boldmath\textbf{$0.72_{\scriptscriptstyle\pm0.04}$}} & {\color[HTML]{0000FF} \boldmath\textbf{$0.91_{\scriptscriptstyle\pm0.42}$}}
& {\color[HTML]{656565} $0.76_{\scriptscriptstyle\pm0.07}$} & {\color[HTML]{656565} $5.11_{\scriptscriptstyle\pm2.39}$}
& {\boldmath\textbf{$0.15_{\scriptscriptstyle\pm0.02}$}} & {\boldmath\textbf{$0.73_{\scriptscriptstyle\pm0.07}$}}
& {\boldmath\textbf{$0.03_{\scriptscriptstyle\pm0.02}$}} & {\boldmath\textbf{$0.02_{\scriptscriptstyle\pm0.09}$}}
& {\boldmath\textbf{$0.61_{\scriptscriptstyle\pm0.01}$}} & {\boldmath\textbf{$0.94_{\scriptscriptstyle\pm0.69}$}}
& {\boldmath\textbf{$0.03_{\scriptscriptstyle\pm0.02}$}} & {\boldmath\textbf{$0.00_{\scriptscriptstyle\pm0.00}$}} \\
 
\midrule
 
BallCircle
& {\color[HTML]{656565} $0.74_{\scriptscriptstyle\pm0.15}$} & {\color[HTML]{656565} $4.71_{\scriptscriptstyle\pm1.79}$}
& {\boldmath\textbf{$0.52_{\scriptscriptstyle\pm0.08}$}} & {\boldmath\textbf{$0.65_{\scriptscriptstyle\pm0.17}$}}
& {\color[HTML]{656565} $0.77_{\scriptscriptstyle\pm0.06}$} & {\color[HTML]{656565} $1.07_{\scriptscriptstyle\pm0.27}$}
& {\color[HTML]{656565} $0.69_{\scriptscriptstyle\pm0.11}$} & {\color[HTML]{656565} $2.36_{\scriptscriptstyle\pm1.04}$}
& {\color[HTML]{656565} $0.86_{\scriptscriptstyle\pm0.18}$} & {\color[HTML]{656565} $3.09_{\scriptscriptstyle\pm1.53}$}
& {\color[HTML]{0000FF} \boldmath\textbf{$0.64_{\scriptscriptstyle\pm0.01}$}} & {\color[HTML]{0000FF} \boldmath\textbf{$0.76_{\scriptscriptstyle\pm0.00}$}}
& {\color[HTML]{656565} $0.70_{\scriptscriptstyle\pm0.04}$} & {\color[HTML]{656565} $2.61_{\scriptscriptstyle\pm0.79}$}
& {\boldmath\textbf{$0.61_{\scriptscriptstyle\pm0.15}$}} & {\boldmath\textbf{$0.21_{\scriptscriptstyle\pm0.31}$}} \\
 
CarCircle
& {\color[HTML]{656565} $0.58_{\scriptscriptstyle\pm0.25}$} & {\color[HTML]{656565} $3.74_{\scriptscriptstyle\pm2.20}$}
& {\boldmath\textbf{$0.50_{\scriptscriptstyle\pm0.22}$}} & {\boldmath\textbf{$0.84_{\scriptscriptstyle\pm0.67}$}}
& {\color[HTML]{0000FF} \boldmath\textbf{$0.75_{\scriptscriptstyle\pm0.06}$}} & {\color[HTML]{0000FF} \boldmath\textbf{$0.95_{\scriptscriptstyle\pm0.61}$}}
& {\color[HTML]{656565} $0.63_{\scriptscriptstyle\pm0.19}$} & {\color[HTML]{656565} $1.89_{\scriptscriptstyle\pm1.37}$}
& {\color[HTML]{656565} $0.74_{\scriptscriptstyle\pm0.10}$} & {\color[HTML]{656565} $2.18_{\scriptscriptstyle\pm1.33}$}
& {\boldmath\textbf{$0.71_{\scriptscriptstyle\pm0.02}$}} & {\boldmath\textbf{$0.33_{\scriptscriptstyle\pm0.00}$}}
& {\color[HTML]{656565} $0.49_{\scriptscriptstyle\pm0.05}$} & {\color[HTML]{656565} $3.14_{\scriptscriptstyle\pm2.98}$}
& {\boldmath\textbf{$0.63_{\scriptscriptstyle\pm0.05}$}} & {\boldmath\textbf{$0.11_{\scriptscriptstyle\pm0.21}$}} \\
 
DroneCircle
& {\color[HTML]{656565} $0.72_{\scriptscriptstyle\pm0.04}$} & {\color[HTML]{656565} $3.03_{\scriptscriptstyle\pm0.29}$}
& {\boldmath\textbf{$0.56_{\scriptscriptstyle\pm0.18}$}} & {\boldmath\textbf{$0.57_{\scriptscriptstyle\pm0.27}$}}
& {\color[HTML]{0000FF} \boldmath\textbf{$0.63_{\scriptscriptstyle\pm0.07}$}} & {\color[HTML]{0000FF} \boldmath\textbf{$0.98_{\scriptscriptstyle\pm0.27}$}}
& {\color[HTML]{656565} $0.80_{\scriptscriptstyle\pm0.07}$} & {\color[HTML]{656565} $3.07_{\scriptscriptstyle\pm0.89}$}
& {\color[HTML]{656565} $0.78_{\scriptscriptstyle\pm0.04}$} & {\color[HTML]{656565} $3.68_{\scriptscriptstyle\pm0.44}$}
& {\color[HTML]{656565} $-0.22_{\scriptscriptstyle\pm0.05}$} & {\color[HTML]{656565} $1.28_{\scriptscriptstyle\pm0.97}$}
& {\color[HTML]{656565} $0.26_{\scriptscriptstyle\pm0.03}$} & {\color[HTML]{656565} $1.02_{\scriptscriptstyle\pm0.46}$}
& {\boldmath\textbf{$-0.26_{\scriptscriptstyle\pm0.00}$}} & {\boldmath\textbf{$0.26_{\scriptscriptstyle\pm0.39}$}} \\
 
AntCircle
& {\color[HTML]{656565} $0.58_{\scriptscriptstyle\pm0.19}$} & {\color[HTML]{656565} $4.90_{\scriptscriptstyle\pm3.55}$}
& {\color[HTML]{0000FF} \boldmath\textbf{$0.40_{\scriptscriptstyle\pm0.16}$}} & {\color[HTML]{0000FF} \boldmath\textbf{$0.96_{\scriptscriptstyle\pm2.67}$}}
& {\color[HTML]{656565} $0.54_{\scriptscriptstyle\pm0.20}$} & {\color[HTML]{656565} $1.78_{\scriptscriptstyle\pm4.33}$}
& {\color[HTML]{656565} $0.58_{\scriptscriptstyle\pm0.25}$} & {\color[HTML]{656565} $2.87_{\scriptscriptstyle\pm3.08}$}
& {\color[HTML]{656565} $0.65_{\scriptscriptstyle\pm0.20}$} & {\color[HTML]{656565} $5.48_{\scriptscriptstyle\pm3.33}$}
& {\boldmath\textbf{$0.00_{\scriptscriptstyle\pm0.00}$}} & {\boldmath\textbf{$0.00_{\scriptscriptstyle\pm0.00}$}}
& {\color[HTML]{656565} $0.17_{\scriptscriptstyle\pm0.10}$} & {\color[HTML]{656565} $5.04_{\scriptscriptstyle\pm6.74}$}
& {\boldmath\textbf{$0.00_{\scriptscriptstyle\pm0.00}$}} & {\boldmath\textbf{$0.00_{\scriptscriptstyle\pm0.00}$}} \\
 
\midrule
 
\textbf{Average}
& {\color[HTML]{656565} $0.64_{\scriptscriptstyle\pm0.25}$} & {\color[HTML]{656565} $3.36_{\scriptscriptstyle\pm3.31}$}
& {\color[HTML]{0000FF} \boldmath\textbf{$0.52_{\scriptscriptstyle\pm0.27}$}} & {\color[HTML]{0000FF} \boldmath\textbf{$0.82_{\scriptscriptstyle\pm1.27}$}}
& {\color[HTML]{656565} $0.68_{\scriptscriptstyle\pm0.19}$} & {\color[HTML]{656565} $1.04_{\scriptscriptstyle\pm1.65}$}
& {\color[HTML]{656565} $0.74_{\scriptscriptstyle\pm0.25}$} & {\color[HTML]{656565} $3.11_{\scriptscriptstyle\pm3.55}$}
& {\color[HTML]{656565} $0.48_{\scriptscriptstyle\pm0.27}$} & {\color[HTML]{656565} $3.80_{\scriptscriptstyle\pm3.95}$}
& {\color[HTML]{656565} $0.33_{\scriptscriptstyle\pm0.29}$} & {\color[HTML]{656565} $1.12_{\scriptscriptstyle\pm1.85}$}
& {\color[HTML]{656565} $0.55_{\scriptscriptstyle\pm0.24}$} & {\color[HTML]{656565} $2.55_{\scriptscriptstyle\pm3.62}$}
& {\boldmath\textbf{$0.30_{\scriptscriptstyle\pm0.38}$}} & {\boldmath\textbf{$0.30_{\scriptscriptstyle\pm0.60}$}} \\
 
\bottomrule
\end{tabular}
}
\end{center}
\end{table}

\ref{tab:bulletgym-result} shows that PDOCRL is the only algorithm that consistently satisfies the safety constraint across all BulletGym tasks, with normalized cost remaining below 1 in every case.
At the same time, PDOCRL achieves strong reward performance, remaining competitive with or outperforming existing offline constrained RL baselines on several tasks.
\subsection{Ablation study}
 
We conduct an ablation study to demonstrate the effectiveness of the reparameterization trick used in our algorithm, which avoids the policy extraction step and eliminates the need for knowledge of the data-generating distribution. The ablated variant obtains $\widehat{w}$ by optimizing for the estimated Lagrangian \eqref{eqn:ablation}, then using the policy extraction step \eqref{eqn:policy-extraction} to obtain a policy.
 
As shown in \ref{tab:ablation-bulletgym}, PDOCRL shows better average reward and cost performance than the variant that requires policy extraction, suggesting that the policy extraction step can be a meaningful bottleneck in offline constrained reinforcement learning.
 
\begin{table}[htb]
\caption{\small Ablation study on BulletGym tasks. We compare PDOCRL (Ours) with an ablated variant, \textit{Policy extraction required}. $\uparrow$: higher reward is better. $\downarrow$: lower cost is better.}
\vspace{-2pt}
\label{tab:ablation-bulletgym}
\centering
\small
\begin{tabular}{c rr rr}
\toprule
\multirow{2}{*}{Task}
& \multicolumn{2}{c}{\textbf{PDOCRL (Ours)}}
& \multicolumn{2}{c}{Policy extraction required} \\
\cmidrule(lr){2-3} \cmidrule(lr){4-5}
& Reward\,$\uparrow$ & Cost\,$\downarrow$
& Reward\,$\uparrow$ & Cost\,$\downarrow$ \\
\midrule
 
BallRun
& $0.48_{\scriptscriptstyle\pm0.33}$ & $0.86_{\scriptscriptstyle\pm0.82}$
& $-1.06_{\scriptscriptstyle\pm0.15}$ & $2.69_{\scriptscriptstyle\pm1.44}$ \\

CarRun
& $0.80_{\scriptscriptstyle\pm0.08}$ & $0.11_{\scriptscriptstyle\pm0.15}$
& $-0.59_{\scriptscriptstyle\pm0.26}$ & $4.63_{\scriptscriptstyle\pm2.51}$ \\

DroneRun
& $0.10_{\scriptscriptstyle\pm0.15}$ & $0.84_{\scriptscriptstyle\pm1.01}$
& $-0.26_{\scriptscriptstyle\pm0.09}$ & $0.35_{\scriptscriptstyle\pm1.00}$ \\

AntRun
& $0.03_{\scriptscriptstyle\pm0.02}$ & $0.00_{\scriptscriptstyle\pm0.00}$
& $0.21_{\scriptscriptstyle\pm0.06}$ & $0.01_{\scriptscriptstyle\pm0.02}$ \\
 
\midrule
 
BallCircle
& $0.61_{\scriptscriptstyle\pm0.15}$ & $0.21_{\scriptscriptstyle\pm0.31}$
& $0.01_{\scriptscriptstyle\pm0.04}$ & $2.19_{\scriptscriptstyle\pm2.92}$ \\

CarCircle
& $0.63_{\scriptscriptstyle\pm0.05}$ & $0.11_{\scriptscriptstyle\pm0.21}$
& $-0.04_{\scriptscriptstyle\pm0.05}$ & $3.87_{\scriptscriptstyle\pm4.65}$ \\

DroneCircle
& $-0.26_{\scriptscriptstyle\pm0.00}$ & $0.26_{\scriptscriptstyle\pm0.39}$
& $-0.11_{\scriptscriptstyle\pm0.13}$ & $1.30_{\scriptscriptstyle\pm0.60}$ \\

AntCircle
& $0.00_{\scriptscriptstyle\pm0.00}$ & $0.00_{\scriptscriptstyle\pm0.00}$
& $0.06_{\scriptscriptstyle\pm0.02}$ & $0.65_{\scriptscriptstyle\pm0.35}$ \\
 
\midrule
 
\textbf{Average}
& $0.30_{\scriptscriptstyle\pm0.38}$ & $0.30_{\scriptscriptstyle\pm0.60}$
& $-0.22_{\scriptscriptstyle\pm0.41}$ & $1.96_{\scriptscriptstyle\pm2.74}$ \\
 
\bottomrule
\end{tabular}
\end{table}
\section{Conclusion}

We proposed PDOCRL, an oracle-efficient algorithm for offline constrained RL with general function approximation under partial data coverage.
Our analysis shows that restricted Lagrangian saddle-point formulations can admit spurious solutions, and that Slater's condition together with all-policy action-value realizability rules out this pathology.
Building on this guarantee, PDOCRL combines Lagrangian decomposition with an occupancy-ratio reparameterization to optimize policies directly, without access to the data-generating distribution.
A natural direction for future work is to relax the all-policy realizability condition, ideally toward a single-policy realizability assumption.
\section*{References}
\printbibliography[heading=none]

\newpage

\appendix

\section{Proof of Proposition~\ref{prop:policy-extraction}} \label{sec:policy-extraction}

We construct a single one-step contextual bandit ($\gamma = 0$), data distribution, and policy
class that work for all $n$. Only the learned ratio $\widehat w_n$ depends on
$n$.
Let $\mathcal S=\{c,r\}$, $\mathcal A=\{0,1\}$, $d_0(c) = d_0(r) = \frac{1}{2}$.
The behavior policy $\pi_B$ is uniform such that the induced occupancy measure is $\mu_D(s, a) = \frac{1}{4}$ for every $(s, a) \in \mathcal{S} \times \mathcal{A}$.
The reward function is $r(c,0)=r(c,1)=0$ and $r(r,1)=1, R(r,0)=0$.
Therefore, for any policy $\pi$,
$J(\pi) = \mathbb E_\pi[r(s,a)] = \frac12 \pi(1\mid r)$ and it follows that any policy $\pi$ with $\pi(1 | r) = 1$ is optimal with optimal value $J(\pi) = \frac{1}{2}$.

Let $\Pi = \{ \pi^\ast, \widetilde\pi \}$ where $\pi^\star$ is an optimal policy with $\pi^\star(0\mid c)=\frac18$, $\pi^\star(1\mid c)=\frac78$,  $\pi^\star(1\mid r)=1$, $\pi^\star(0\mid r)=0$;
and $\widetilde{\pi}$ is $\widetilde{\pi}(0\mid c)=\frac78$, $\widetilde\pi(1\mid c)=\frac18$, $\widetilde\pi(1\mid r)=\frac17$, $\widetilde\pi(0\mid r)=\frac67$.
Then, $J(\pi^\ast) = \frac{1}{2}$ and $J(\widetilde\pi) = \frac{1}{14}$.

We first verify boundedness of the true optimal density ratio.
Since this is a one-step contextual bandit, $\mu^{\pi^\star}(s,a) = \mathbb P(S=s)\pi^\star(a\mid s) = \frac12\pi^\star(a\mid s)$.
Because $\mu_D(s,a)=1/4$, we have $w^\star(s,a) = \frac{\mu^{\pi^\star}(s,a)}{\mu_D(s,a)} = 2\pi^\star(a\mid s)$.
Therefore, $w^\star(c,0)=\frac14$, $w^\star(c,1)=\frac74$, and $w^\star(r,1)=2$, $w^\star(r,0)=0$.
Thus $0\le w^\star(s,a)\le 2$ for all $(s,a)$.

Now, consider a learned density-ratio $\widehat{w}_n$ with $\widehat w_n(c,0)=2-\frac1n$, $\widehat w_n(c,1)=\frac1n$, $\widehat w_n(r,1)=2$ and $\widehat w_n(r,0)=0$.
Then, $0\le \widehat w_n(s,a)\le 2$ for all $(s, a)$.
Define $\widehat\mu_n(s,a):=\mu_D(s,a)\widehat w_n(s,a)$.
Since $\mu_D(s,a)=1/4$, we have
$\widehat\mu_n(c,0) = \frac14\left(2-\frac1n\right)$, $\widehat\mu_n(c,1) = \frac1{4n}$, $\widehat\mu_n(r,1) = \frac12$, and $\widehat\mu_n(r,0)=0$.
Hence, the policy extracted from $\widehat\mu$ puts weight 1 on state $r$, and is optimal, showing the first claim.

Next we analyze the population weighted-log extraction over the finite class $\Pi$:
$$
\argmax_{\pi \in \Pi} L_n(\pi) \coloneqq \mathbb{E}_{\mu_D}[ \widehat{w}_n \log \pi(a | s)].
$$
It can be shown that
$$
L_n(\pi^\ast) - L_n (\widetilde\pi) = \frac{\log 7}{2n} > 0,
$$
Hence the population weighted-log extraction over $\Pi$ uniquely selects $\pi^\star$, showing the second claim.

It remains to show that the empirical weighted-log extraction from the dataset $\mathcal{D}$
$$
\argmax_{\pi \in \Pi} \widehat{L}_n(\pi) \coloneqq \frac{1}{n} \sum_{i = 1}^n \widehat{w}_n(s_i, a_i) \log \pi(a_i | s_i),
$$
selects the bad policy with constant probability.
Define the counts
\[
    N_{s, a} := \sum_{i=1}^n \mathbb{I}\{(s_i,a_i)=(s,a)\}.
\]
Then,
\[
\begin{aligned}
\widehat L_n(\pi^\star)-\widehat L_n(\widetilde\pi)
&= \frac1n \sum_{i=1}^n \widehat w_n(S_i,A_i) \log\frac{\pi^\star(A_i\mid S_i)}{\widetilde\pi(A_i\mid S_i)} \\
&= \frac{\log 7}{n} \left[ -2 (N_{c0} - N_{r1}) + \frac{1}{n} (N_{c0} + N_{c1}) \right] \\
&\leq \frac{\log 7}{n} \left[ -2 (N_{c0} - N_{r1}) + 1 \right],
\end{aligned}
\]
where the inequality follows since $N_{c0} - N_{r1} \leq n$.
Hence, $\widehat{L}_n (\pi^\ast) < \widehat{L}_n(\widetilde\pi)$ and the empirical policy extraction procedure will select the suboptimal $\widehat\pi$, as long as $N_{c0} - N_{r1} \geq 1$.

Now, we lower-bound $\mathbb P(N_{c0} - N_{r1} \geq 1)$. For each $i$, define
$$
X_i := \mathbb{I}\{(s_i,a_i)=(c,0)\} - \mathbb{I}\{(s_i,a_i)=(r,1)\}.
$$
Because $\mu_D$ is uniform over the four state-action pairs, we have
\[
    \mathbb P(X_i=1)=\frac14,
    \qquad
    \mathbb P(X_i=-1)=\frac14,
    \qquad
    \mathbb P(X_i=0)=\frac12.
\]
Let $X:=\sum_{i=1}^n X_i = N_{c0} - N_{r1}$.
The distribution of $X$ is symmetric around zero, because $X_i$ and $-X_i$ have the same distribution.
Therefore, $\mathbb P(X>0)=\mathbb P(X<0)$ and it follows that
$$
\mathbb P(X>0) = \frac{1-\mathbb P(X=0)}{2}.
$$
We now prove that $\mathbb P(X=0)\le 1/2$. Let $K:=\sum_{i=1}^n \mathbf 1\{|X_i|=1\}$.
Then $K\sim \operatorname{Binomial}\left(n,\frac12\right)$.
Conditional on $K=k$, the $k$ nonzero signs are i.i.d. Rademacher random
variables. If $k=0$, then $X=0$ with probability $1$. If $k=1$, then
$X\neq 0$ with probability $1$. If $k\ge 2$, then
\[
    \mathbb P(X=0\mid K=k)\le \frac12.
\]
Indeed, for odd $k$, the conditional probability is $0$. For even $k\ge 2$,
\[
    \mathbb P(X=0\mid K=k)
    =
    \binom{k}{k/2}2^{-k}.
\]
For $k=2$, this equals $1/2$, and for every even $k\ge 4$, the central
binomial probability is strictly smaller than $1/2$.
Hence
\[
\begin{aligned}
\mathbb P(X=0)
&= \sum_{k=0}^n \mathbb P(X=0\mid K=k)\mathbb P(K=k) \\
&\le \mathbb P(K=0) + \frac12\mathbb P(K\ge 2) \\
&= 2^{-n} + \frac12 \left( 1-2^{-n}-n2^{-n} \right) \\
&= \frac12+\frac{1-n}{2^{n+1}} \le \frac12,
\end{aligned}
\]
where the second equality uses $\mathbb P(K=0)=2^{-n}$ and $\mathbb P(K=1)=n2^{-n}$.
Consequently,
\[
\mathbb P(N_{c0} - N_{r1} > 0) = \mathbb P(X>0) = \frac{1-\mathbb P(X=0)}{2} \ge \frac14.
\]
Hence, the suboptimal policy $\widetilde\pi$ is chosen with probability at least $1/4$, completing the proof.

\section{Known data distribution case} \label{appendix:known-data-distribution}

Introducing Lagrangian multipliers $\bm{V} \in \mathbb{R}^{\vert \mathcal{S} \vert}$ for the Bellman flow constraints and \(\bm\lambda \in \mathbb{R}_+^I\) for the safety constraints in the linear program, the Lagrangian function is
$$
L(\bm\mu, \bm{V}, \bm\lambda) \coloneqq \langle \bm\mu, \bm{r} \rangle + \langle \bm{V}, (1 - \gamma) \bm{d}_0 + \gamma \bm{P}^\top \bm\mu - \bm{E}^\top \bm\mu \rangle + \sum_{i=1}^I \lambda_i (\langle \bm\mu, \bm{r}_i \rangle - \tau_i),
$$
In the general function approximation setting, the restricted saddle-point problem over $\mathcal{F}(\mathcal{U}) \times (\mathcal{V} \times \mathbb{R}_+^I)$ can admit spurious saddle points that do not correspond to optimal policies.
The below proposition shows that merely realizing an optimal policy, its occupancy measure, and its associated action-value functions is not sufficient to rule this out.

\begin{proposition} \label{prop:negative}
There exists a CMDP $(\mathcal{S}, \mathcal{A}, r_0, \{ r_i \}_{i = 1}^I, \gamma, d_0)$ and function classes $\mathcal{U}\subseteq \mathbb{R}_+^{\mathcal{S} \times \mathcal{A}}$ and $\mathcal{V} \subseteq \mathbb{R}^{\mathcal{S}}$ with $\mu^\ast \in \mathcal{U}$ and $V^\ast \in \mathcal{V}$ with the following property.
The function classes satisfy realizability assumption: $\mu^\ast \in \mathcal{U}$, $V_i^\ast \in \mathcal{V}$ for all $i = 0, 1, \dots, I$. However, there exists a saddle point $(\widehat\mu ; \widehat{V}, \widehat\lambda)$ of the Lagrangian function $L$ associated with the linear programming formulation of the MDP over $\mathcal{U} \times (\mathcal{V} \times \mathbb{R}_+^I)$, where the extracted policy $\widehat\pi$ is not optimal.
\end{proposition}

\begin{proof}[Proof of Proposition~\ref{prop:negative}]
We explicitly construct a CMDP
$(\mathcal{S}, \mathcal{A}, r_0, \{r_i\}_{i=1}^I, \gamma, d_0)$ and
function classes $\mathcal{U}$, $\mathcal{V}$ as follows; see
Figure~\ref{fig:cmdp-negative} for an illustration.

For notational simplicity, we identify the restricted primal class
$\mathcal{U}$ with the set of admissible occupancy measures,
and construct it so that
\[
\mathcal{U} = \{ \mu^\ast, \widetilde{\mu} \}.
\]
We explicitly construct a CMDP
$(\mathcal{S}, \mathcal{A}, r_0, \{r_i\}_{i=1}^I, \gamma, d_0)$ and
function classes $\mathcal{U}$, $\mathcal{V}$ as follows.

Let $I = 1$, $\mathcal{S} = \{ s_0, l_1, r_1, l_2, r_2 \}$, and
$\mathcal{A} = \{ L, R \}$.
Let the objective reward $r_0$ be defined by
\[
r_0(l_1,a)=1,\qquad r_0(l_2,a)=4
\qquad \text{for all } a\in\mathcal{A},
\]
and all other entries of $r_0$ are zero.
We choose the constraint reward to be identical:
\[
r_1 := r_0, \qquad \tau_1 := 0.
\]
Let $\gamma = 1/2$ and $d_0(s_0)=1$, so that the process starts from
$s_0$ deterministically.

The states $l_1$, $l_2$, and $r_2$ are absorbing, i.e.,
\[
P(s\mid s,L)=P(s\mid s,R)=1
\qquad \text{for } s\in\{l_1,l_2,r_2\}.
\]
At state $s_0$, action $L$ transitions uniformly to the next layer:
\[
P(l_1\mid s_0,L)=P(r_1\mid s_0,L)=1/2.
\]
At state $r_1$, action $L$ also transitions uniformly:
\[
P(l_2\mid r_1,L)=P(r_2\mid r_1,L)=1/2.
\]
On the other hand, action $R$ is biased to the right:
\[
P(l_1\mid s_0,R)=P(l_2\mid r_1,R)=1/4,
\qquad
P(r_1\mid s_0,R)=P(r_2\mid r_1,R)=3/4.
\]

Since $r_1=r_0\ge 0$ and $\mu\ge 0$, the safety constraint
\[
\langle \mu, r_1\rangle \ge \tau_1 = 0
\]
is redundant for every nonnegative $\mu$.
Hence the set of optimal policies of this CMDP coincides with that of the
underlying unconstrained MDP with reward $r_0$.

Let $V^\ast := V_0^\ast = V_1^\ast$.
A direct calculation shows that
\[
V^\ast(s_0)=1,\qquad V^\ast(l_1)=2,\qquad V^\ast(r_1)=2,\qquad
V^\ast(l_2)=8,\qquad V^\ast(r_2)=0.
\]
Moreover, an optimal occupancy measure is induced by the policy that always
chooses action $L$:
\[
\mu^\ast(s_0,L)=\frac12,\qquad
\mu^\ast(l_1,L)=\frac14,\qquad
\mu^\ast(r_1,L)=\frac18,\qquad
\mu^\ast(l_2,L)=\mu^\ast(r_2,L)=\frac1{16},
\]
and all other entries are zero.

The CMDP Lagrangian is
\[
L(\mu;V,\lambda)
=
\langle \mu,r_0\rangle
+
\langle V,(1-\gamma)d_0+\gamma P^\top\mu-E^\top\mu\rangle
+
\lambda(\langle \mu,r_1\rangle-\tau_1).
\]
At $(V^\ast,0)$, this becomes
\[
L(\mu;V^\ast,0)
=
\langle \mu,r_0\rangle
+
\langle V^\ast,(1-\gamma)d_0+\gamma P^\top\mu-E^\top\mu\rangle.
\]
Using Bellman optimality of $V^\ast$, every action has zero Bellman residual
except $(r_1,R)$, for which
\[
r_0(r_1,R)+\gamma \sum_{s'} P(s'\mid r_1,R)V^\ast(s')-V^\ast(r_1)
=
0+\frac12\Bigl(\frac14\cdot 8+\frac34\cdot 0\Bigr)-2
=
-1.
\]
Therefore,
\[
L(\mu;V^\ast,0)
=
(1-\gamma)V^\ast(s_0)-\mu(r_1,R)
=
\frac12 V^\ast(s_0)-\mu(r_1,R).
\]
Hence $L(\mu;V^\ast,0)$ depends on $\mu$ only through $\mu(r_1,R)$, and any
$\mu$ with $\mu(r_1,R)=0$ is a best response to $(V^\ast,0)$.

Now define $\widetilde{\mu}$ by
\[
\widetilde{\mu}(s_0,R)=\frac12,\qquad
\widetilde{\mu}(l_1,L)=\widetilde{\mu}(l_1,R)=\frac14,
\]
and all other entries zero.
Then $\widetilde{\mu}(r_1,R)=0$, so $\widetilde{\mu}$ is also a maximizer of
$L(\cdot;V^\ast,0)$ over $\mathcal{F}(\mathcal{U})$.
Let
\[
\mathcal{V}=\{V^\ast\}.
\]
Since $r_1=r_0\ge 0$ and $\tau_1=0$, for any $\lambda\ge 0$,
\[
L(\widetilde{\mu};V^\ast,\lambda)
=
L(\widetilde{\mu};V^\ast,0)+\lambda\langle \widetilde{\mu},r_0\rangle
\ge
L(\widetilde{\mu};V^\ast,0).
\]
Thus $\lambda=0$ is a minimizing dual variable for $\widetilde{\mu}$.
Because $\mathcal{V}$ is a singleton and $\widetilde{\mu}$ maximizes
$L(\cdot;V^\ast,0)$, we conclude that
\[
(\widetilde{\mu};V^\ast,0)
\]
is a saddle point of $L$ over
$\mathcal{U}\times(\mathcal{V}\times\mathbb{R}_+)$.

The realizability condition also holds. By construction,
$\mu^\ast\in\mathcal{U}$, and since $r_1=r_0$, we have
\[
V_1^\ast = V_0^\ast = V^\ast \in \mathcal{V}.
\]

Finally, we show that the policy extracted from $\widetilde{\mu}$ is not
optimal. Define the extracted policy by
\[
\pi_\mu(a\mid s)
=
\begin{cases}
\dfrac{\mu(s,a)}{\sum_{a'}\mu(s,a')},
& \text{if } \sum_{a'}\mu(s,a')>0,\\[1.2ex]
\mathbf{1}\{a=R\},
& \text{otherwise}.
\end{cases}
\]
Under this rule, the policy $\widetilde{\pi}:=\pi_{\widetilde{\mu}}$ chooses
action $R$ at $s_0$, and since
$\widetilde{\mu}(r_1,L)=\widetilde{\mu}(r_1,R)=0$, it also chooses the default
action $R$ at $r_1$.
Therefore,
\[
V_{r_0}^{\widetilde{\pi}}(r_1)
=
\frac12\Bigl(\frac14 V^\ast(l_2)+\frac34 V^\ast(r_2)\Bigr)
=
\frac12\Bigl(\frac14\cdot 8+\frac34\cdot 0\Bigr)
=
1,
\]
and hence
\[
V_{r_0}^{\widetilde{\pi}}(s_0)
=
\frac12\Bigl(\frac14 V^\ast(l_1)+\frac34 V_{r_0}^{\widetilde{\pi}}(r_1)\Bigr)
=
\frac12\Bigl(\frac14\cdot 2+\frac34\cdot 1\Bigr)
=
\frac58
<
1
=
V^\ast(s_0).
\]
Thus $\widetilde{\pi}$ is not optimal, even though
$(\widetilde{\mu};V^\ast,0)$ is a saddle point of the restricted CMDP
Lagrangian.
This proves the proposition.
\end{proof}

\begin{figure}[t]
\centering
\begin{tikzpicture}[
    state/.style={draw, circle, minimum size=0.9cm, line width=0.5pt, inner sep=0pt},
    info/.style={font=\scriptsize},
    arrow/.style={-{Latex}, line width=0.8pt},
    dashedarrow/.style={-{Latex}, dashed, line width=0.8pt}
]

\node[state] (S)  at (0,0) {$s_0$};
\node[state] (L1) at (-1.8,-1.6) {$l_1$};
\node[state] (R1) at (1.8,-1.6) {$r_1$};
\node[state] (L2) at (0.6,-3.2) {$l_2$};
\node[state] (R2) at (3.0,-3.2) {$r_2$};

\draw[arrow] (S)  to[out=225, in=45]  node[midway, above left] {$L$} (L1);
\draw[arrow] (S)  to[out=-45, in=135] node[midway, above right] {$R$} (R1);
\draw[arrow] (R1) to[out=-135, in=45] node[midway, left] {$L$} (L2);
\draw[dashedarrow] (R1) to[out=-45, in=135] node[midway, right] {$R$} (R2);

\draw[arrow] (L1) to[out=-135, in=-45, looseness=4] (L1);
\draw[arrow] (L2) to[out=-135, in=-45, looseness=4] (L2);
\draw[arrow] (R2) to[out=-135, in=-45, looseness=4] (R2);

\node[info] at (-3.45,-2.55) {$r_0=r_1=1$};
\node[info] at (-0.25,-4.35) {$r_0=r_1=4$};
\node[info] at (2.15,-4.35) {$r_0=r_1=0$};

\node[info] at (3.65,0.0) {$\tau_1=0$};

\end{tikzpicture}
\caption{
Counterexample for Proposition~\ref{prop:negative}. Here $r_1=r_0$ and $\tau_1=0$, so the constraint is redundant. Nevertheless, the restricted saddle point $(\widetilde{\mu};V^\ast,0)$ induces a suboptimal extracted policy.
}
\label{fig:cmdp-negative}
\end{figure}

This proposition suggests that finding a saddle point of $L$ over $\mathcal{U} \times (\mathcal{V} \times \mathcal{\lambda})$ may not give an optimal solution even when $\mathcal{U} \times (\mathcal{V} \times \mathcal{\lambda})$ contains an optimal solution.
As a workaround, \textcite{zhan2022offline} add a regularization term $- \alpha \mathbb{E}_\mu[f(\mu / \mu_D)]$ to enforce strong concavity of $L$ in terms of $\mu$.
However, this approach leads to sample complexity for finding $\varepsilon$ near optimal policy scales with $1 / \varepsilon^{4}$.
Rather than regularizing the saddle problem, we make explicit a stronger realizability requirement on the function class $\mathcal{V}$:

\begin{assumption}[All-Policy Value Function Realizability] \label{assumption:all-policy-state-value-realizability}
For every policy $\pi \in \Pi$, we have $V_0^\pi \in \mathcal{V}$ and $V_0^\pi + (1 + \frac{1}{\varphi}) V_i^\pi \in \mathcal{V}$ for all $i = 0, 1, \dots, I$.
\end{assumption}

Compared with prior LP-based analyses that only require realizability of an optimal value function~\parencite{zhan2022offline}, this is a stronger all-policy assumption.
It is precisely this strengthening that rules out spurious saddle points in the restricted saddle problem.

With this stronger assumption on $\mathcal{V}$, we can show that a saddle point of $L$ over $\mathcal{U} \times \mathcal{V}$ is optimal, as formally stated in the following:

\begin{lemma} \label{lemma:saddle-point}
Let $B \coloneqq1 + \frac{1}{\varphi} $.
Assume Assumption~\ref{assumption:slater} and ~\ref{assumption:all-policy-state-value-realizability}, and let \(\Lambda \coloneqq \{ \lambda \in \mathbb{R}_+^I : \|\lambda\|_1 \le  B\}\).
Consider function classes \(\mathcal{U} \subseteq \mathbb{R}_+^{\vert \mathcal{S} \times \mathcal{A} \vert}\) and
\(\mathcal{V} \subseteq \mathbb{R}^{\vert \mathcal{S}  \vert}\).
If \((\widehat\mu, ; \widehat V, \widehat\lambda)\) is a saddle point of \(L\) over
$\mathcal{U} \times (\mathcal{V} \times \Lambda)$, then \(\widehat\pi\) is an optimal feasible policy.
\end{lemma}
\begin{proof}
Let \(\bm{e}_i \in \mathbb{R}^I\) denote the \(i\)-th standard basis vector.
Using the reparameterized form of the Lagrangian, for any \(\mu \in \mathcal{U}\), the Bellman equations imply
\[
L(\mu; V_0^\pi, \bm{0}) = (1 - \gamma) J_0(\pi),
\]
and, since \(V_0^\pi + B V_i^\pi\) is the value function of \(\pi\) under reward \(r_0 + B r_i\),
\[
L(\mu; V_0^\pi + B V_i^\pi, B e_i)
=
(1 - \gamma) J_0(\pi) + B \bigl( (1 - \gamma) J_i(\pi) - \tau_i \bigr)
\]
for each \(i = 1, \dots, I\).

Next, since \(\mu^\ast = \mu^{\pi^\ast}\), for any \(V \in \mathcal{V}\) and \(\lambda \in \Lambda\),
\[
L(\mu^\ast; V, \lambda)
=
(1 - \gamma) J_0(\pi^\ast)
+
\sum_{i=1}^I \lambda_i \bigl( (1 - \gamma) J_i(\pi^\ast) - \tau_i \bigr).
\]
Since \(\pi^\ast\) is feasible, the second term is nonnegative, and hence
\[
L(\mu^\ast; V, \lambda)
\ge
(1 - \gamma) J_0(\pi^\ast).
\]

By the saddle-point property, choosing \((V,\lambda) = (V_0^{\widehat\pi}, 0)\) gives
\[
(1 - \gamma) J_0(\pi^\ast)
\le
L(\mu^\ast; \widehat V, \widehat\lambda)
\le
L(\widehat\mu, ; V_0^{\widehat\pi}, 0)
=
(1 - \gamma) J_0(\widehat\pi).
\]
Thus \(\widehat\pi\) attains objective value at least that of the optimal feasible policy \(\pi^\ast\).

It remains to show feasibility of \(\widehat\pi\).
Let \(\lambda^\ast\) be an optimal dual variable of the original CMDP.
By Lemma~\ref{lemma:dual-variable-bound}, \(\|\lambda^\ast\|_1 \le \frac{1}{\varphi} < B\).
Define
\[
m \coloneqq \min_{i=1,\dots,I} \bigl( (1 - \gamma) J_i(\widehat\pi) - \tau_i \bigr),
\]
and let \(j \in \arg\min_{i=1,\dots,I} \bigl( (1 - \gamma) J_i(\widehat\pi) - \tau_i \bigr)\).

By strong duality for the original CMDP and optimality of \(\lambda^\ast\), we have
\[
(1 - \gamma) J_0(\pi^\ast)
\ge
(1 - \gamma) J_0(\widehat\pi)
+
\sum_{i=1}^I \lambda_i^\ast \bigl( (1 - \gamma) J_i(\widehat\pi) - \tau_i \bigr)
\ge
(1 - \gamma) J_0(\widehat\pi) + m \|\lambda^\ast\|_1.
\]
On the other hand, applying the saddle-point property with
\((V,\lambda) = (V_0^{\widehat\pi} + B V_j^{\widehat\pi}, B \bm{e}_j)\) yields
\[
(1 - \gamma) J_0(\pi^\ast)
\le
L(\widehat\mu, ; V_0^{\widehat\pi} + B V_j^{\widehat\pi}, B \bm{e}_j)
=
(1 - \gamma) J_0(\widehat\pi) + B m.
\]
Combining the previous two displays, we obtain
\[
m \|\lambda^\ast\|_1 \le B m.
\]
If \(m < 0\), then dividing by \(m\) reverses the inequality and gives
\(\|\lambda^\ast\|_1 \ge B\), which contradicts \(\|\lambda^\ast\|_1 < B\).
Hence \(m \ge 0\), that is,
\[
(1 - \gamma) J_i(\widehat\pi) \ge \tau_i,
\qquad i = 1, \dots, I.
\]
So \(\widehat\pi\) is feasible.

We have shown that \(\widehat\pi\) is feasible and that
\(J_0(\widehat\pi) \ge J_0(\pi^\ast)\).
Since \(\pi^\ast\) is an optimal feasible policy, it follows that \(\widehat\pi\) is also optimal.
\end{proof}

With the result in the proposition above, we consider an algorithm that finds a saddle point $(\widehat\mu, \widehat{V}, \widehat{\lambda})$ that solves $\max_{\mu \in \mathcal{U}} \min_{V \in \mathcal{V}, \lambda \in \mathbb{R}_+^I} L(\mu, V, \lambda)$, then returns the policy $\widehat\pi = \pi(\widehat\mu)$ extracted from $\widehat\mu$. 
Since computing the Lagrangian function requires the knowledge of the transition probability kernel $P$, we need to estimate the Lagrangian function.
To facilitate the estimation, we change the measure from $\mu$ to $\mu_D$ by multiplying the importance weight $w = \mu / \mu_D$ and express the Lagrangian in terms of $w$ instead of $\mu$ as follows.
$$
L(\bm{w}, \bm{V}, \bm{\lambda}) \coloneqq (1 - \gamma) V(s_0) + \mathbb{E}_{\mu_D}[(w(s, a)(r(s, a) + \gamma [PV](s, a) - V(s))]+ \sum_{i=1}^I \lambda_i (\langle \bm\mu, \bm{r}_i \rangle - \tau_i).
$$
Analogous to Lemma~\ref{lemma:saddle-point}, we can show that if a function class $\mathcal{W}$ for the MIW contains the optimal $w^\ast = \mu^\ast / \mu_D$ and a function class $\mathcal{V}$ contains $V_0^\pi + (1 + \frac{1}{\varphi}) V_i^\pi$ for all policies $\pi \in \Pi$, then a saddle point of $L(w, V, \lambda)$ over $\mathcal{W} \times \mathcal{V} \times \Lambda$ is optimal.
In addition to Assumption~\ref{assumption:all-policy-state-value-realizability}, we make the standard optimal-density-ratio realizability assumption on $\mathcal{W}$ (Assumption~\ref{assumption:miw-realizability}).

With the realizability assumptions, we consider an algorithm that finds a saddle point of the estimated Lagrangian:
\begin{equation}\label{eqn:primal-dual}
\begin{aligned}
&\max_{w \in \mathcal{W}} \min_{V \in \mathcal{V},\, \lambda \in \Lambda}
\ \widehat{L}(w;V,\lambda) \\
&:= (1-\gamma)V(s_0)
+ \frac{1}{n}\sum_{j=1}^n w(s_j,a_j)
\Bigl(
r_{\bm\lambda}(s_j,a_j)
+ \gamma V(s_j') - V(s_j)
\Bigr)
- \sum_{i=1}^I \lambda_i \tau_i .
\end{aligned}
\end{equation}
where $r_{\bm\lambda}= r_0 +\sum_{i=1}^I\lambda_ir_i$, and $\widehat{L}(w, V, \lambda)$ is an unbiased estimate of $L(w, V, \lambda)$.
After finding a saddle point $(\widehat{w}, \widehat{V}, \widehat{\lambda})$, we need to extract policy from $\widehat{w}$.
In general, the policy extraction from an importance weight $w \in \mathcal{W}$ can be done by the policy extraction from the corresponding occupancy measure $\mu = w \cdot \mu_D$ and using \eqref{eqn:policy-extraction}:
\begin{equation} \label{eqn:policy-extraction-w}
[\pi(w)](a | s) = \frac{w(s, a) \mu_D(s, a)}{\sum_{a'} w(s, a') \mu_D(s, a')}
~~\text{if}~\sum_{a'} w(s, a') \mu_D(s, a') > 0,\quad \frac{1}{\vert \mathcal{A} \vert}~\text{otherwise}.
\end{equation}
The policy $\pi(\widehat{w})$ extracted from a solution $(\widehat{w}, \widehat{V}, \widehat{\lambda})$ of the saddle point problem \eqref{eqn:primal-dual} gives the guarantee. We show the guarantee below.

\begin{lemma} \label{lemma:concentration-state-value}
Consider the empirical Lagrangian estimate
\[
\widehat{L}(w; V, \lambda)
=
(1-\gamma)V(s_0)
+
\frac{1}{n}\sum_{j=1}^n
w(s_j,a_j)
\Bigl(
r_{\bm\lambda}(s_j,a_j)
+
\gamma V(s_j') - V(s_j)
\Bigr)
-
\sum_{i=1}^I \lambda_i \tau_i.
\]
where $r_{\bm\lambda}= r_0 +\sum_{i=1}^I\lambda_ir_i$. 
Given a function class $\mathcal{W}$ and $\mathcal{V}$ for $w$ and $V$, with boundedness condition $\|w\|_\infty \le C$ for all $w \in \mathcal{W}$, $\|V\|_\infty \le \frac{1}{1-\gamma} $ for all $V \in \mathcal{V}$, we have with probability at least $1 -\delta$ that, for all $w \in \mathcal{W}$, $V \in \mathcal{V}$ and $\lambda \in B \Delta^I$

\[
\bigl|L(w;V,\lambda)-\widehat{L}(w;V,\lambda)\bigr|
\le
\epsilon_n
\]

where 
$
\epsilon_n = 
\mathcal{O}\!\left(
\left(
1+\frac{C}{1-\gamma}(1+B)
\right)
\sqrt{
{\log\bigl((I+1) |\mathcal{W}||\mathcal{V}| / \delta\bigr)} /{n}
}
\right).
$
\end{lemma}

\begin{proof}
Fix $w \in \mathcal{W}$, $V \in \mathcal{V}$, and $\lambda \in \{ 0, B \bm{e}_1, \dots, B \bm{e}_I\}$.
Define
\[
Z_j
:=
(1-\gamma)V(s_0)
+
w(s_j,a_j)
\Bigl(
r_0(s_j,a_j)
+
\sum_{i=1}^I \lambda_i r_i(s_j,a_j)
+
\gamma V(s_j')-V(s_j)
\Bigr)
-
\sum_{i=1}^I \lambda_i\tau_i.
\]
Then
\[
\widehat{L}(w;V,\lambda)=\frac{1}{n}\sum_{j=1}^n Z_j,
\qquad
L(w;V,\lambda)=\mathbb{E}[Z_j].
\]

We now bound $|Z_j|$ uniformly.
Using $\|V\|_\infty \le \frac{1}{1-\gamma}$, we have
\[
|(1-\gamma)V(s_0)| \le 1.
\]
Next, since $\|w\|_\infty \le C$, $\|r_0\|_\infty \le 1$, $\|r_i\|_\infty \le 1$,
and $\|\lambda\|_1 \le B$,
\[
\left|
r_0(s_j,a_j)+\sum_{i=1}^I \lambda_i r_i(s_j,a_j)
\right|
\le
1+B.
\]
Also,
\[
|\gamma V(s_j')-V(s_j)|
\le
\gamma \|V\|_\infty + \|V\|_\infty
\le
\frac{1+\gamma}{1-\gamma}
\le
\frac{2}{1-\gamma}.
\]
Therefore,
\[
\left|
w(s_j,a_j)
\Bigl(
r_0(s_j,a_j)
+\sum_{i=1}^I \lambda_i r_i(s_j,a_j)
+\gamma V(s_j')-V(s_j)
\Bigr)
\right|
\le
C\left(1+B+\frac{2}{1-\gamma}\right).
\]
Finally,
\[
\left|\sum_{i=1}^I \lambda_i\tau_i\right|
\le
\|\lambda\|_1 \|\tau\|_\infty
\le
B\|\tau\|_\infty.
\]
Hence,
\[
|Z_j|
\le
1
+
C\left(1+B+\frac{2}{1-\gamma}\right)
+
B\|\tau\|_\infty
=
\mathcal{O}\!\left(
1+\frac{C}{1-\gamma}(1+B)
\right).
\]

Since $Z_1,\dots,Z_n$ are i.i.d. and uniformly bounded, Hoeffding's inequality
implies that for any fixed triple $(w,V,\lambda)$,
\[
\bigl|L(w;V,\lambda)-\widehat{L}(w;V,\lambda)\bigr|
=
\left|
\mathbb{E}[Z_j]-\frac{1}{n}\sum_{j=1}^n Z_j
\right|
\le
\mathcal{O}\!\left(
\left(
1+\frac{C}{1-\gamma}(1+B)
\right)
\sqrt{\frac{\log(1/\delta)}{n}}
\right)
\]
with probability at least $1-\delta$.

Applying a union bound over all triples in
$\mathcal{W}\times\mathcal{V}\times \{ 0, B \bm{e}_1, \dots, B \bm{e}_I\}$ yields that, with probability at least
$1-\delta$, the above bound holds uniformly for all
$w\in\mathcal{W}$, $V\in\mathcal{V}$, and $\lambda\in \{ 0, B \bm{e}_1, \dots, B \bm{e}_I\}$:
\[
\bigl|L(w;V,\lambda)-\widehat{L}(w;V,\lambda)\bigr|
\le
\mathcal{O}\!\left(
\left(
1+\frac{C}{1-\gamma}(1+B)
\right)
\sqrt{
\frac{\log\bigl((I+1) |\mathcal{W}||\mathcal{V}|/\delta\bigr)}{n}
}
\right).
\]
Now, for any $\lambda \in B \Delta_I$, we can express $\lambda = B \sum_{i = 1}^I \alpha_i \bm{e}_i$ for some $\alpha_0, \alpha_1, \dots, \alpha_I \geq 0$ with $\sum_{i = 0}^I \alpha_i = 1$.
Since $L(w, V, \lambda)$ and $\widehat{L}(w, V, \lambda)$ are affine in $\lambda$, we have
$$
L(w, V, \lambda) = \alpha_0 L(w, V, 0) + \sum_{i = 1}^I \alpha_i L(w, V, \bm{e}_i), \quad
\widehat{L}(w, V, \lambda) = \alpha_0 \widehat{L}(w, V, 0) + \sum_{i = 1}^I \alpha_i \widehat{L}(w, V, \bm{e}_i),
$$
and the uniform concentration inequality we obtained for $\lambda \in \{0, B \bm{e}_1, \dots, B\bm{e}_I\}$ gives concentration inequality for all $\lambda \in B \Delta^I$.
This completes the proof.
\end{proof}

\begin{theorem}\label{theorem:constrained-saddle-point}
Let $\mathcal{W}$ be a function class such that $\|w\|_\infty \le C$ and $\mathcal{V}$ be a function class such that $\|V\|_\infty \le \frac{1}{1-\gamma} $. 
Under Assumptions 
~\ref{assumption:slater},
\ref{assumption:miw-realizability},
and
\ref{assumption:all-policy-state-value-realizability},
if \((\widehat{w}; \widehat{V}, \widehat{\lambda})\) is a saddle point that solves
\[
\max_{w \in \mathcal{W}} \min_{V \in \mathcal{V},\, \lambda \in (1 + \frac{1}{\varphi}) \Delta^I}
\widehat{L}(w; V, \lambda),
\]
then the extracted policy \(\widehat\pi = \pi(\widehat{w})\) satisfies, with
probability at least \(1-\delta\),
\[
J_0(\widehat\pi) \ge J_0(\pi^\ast) - \frac{2\varepsilon_n}{1-\gamma},
\qquad
(1-\gamma)J_i(\widehat\pi) \ge \tau_i - \frac{4\varepsilon_n}{B},
\qquad i=1,\dots,I,
\]
where
\[
\varepsilon_n
=
\mathcal{O}\!\left(
\left(
1+\frac{C}{1-\gamma}(1+B)
\right)
\sqrt{
\frac{\log\bigl((I+1) |\mathcal{W}||\mathcal{V}|/\delta\bigr)}{n}
}
\right).
\]
\end{theorem}

\begin{proof}
Let
\[
\mathcal{E}
:=
\left\{
\sup_{w \in \mathcal{W},\, V \in \mathcal{V},\, \lambda \in (1 + \frac{1}{\varphi})\Delta^I}
\bigl|L(w,V,\lambda)-\widehat{L}(w,V,\lambda)\bigr|
\le \varepsilon_n
\right\}.
\]
By Lemma~\ref{lemma:concentration-state-value}, \(\mathbb{P}(\mathcal{E}) \ge 1-\delta\).
We work under the event \(\mathcal{E}\).

Let \(w^\ast = \mu^{\pi^\ast}/\mu_D \in \mathcal{W}\), which exists by
Assumption~\ref{assumption:miw-realizability}.
Since \(\pi^\ast\) is feasible, the corresponding occupancy measure \(\mu^{\pi^\ast}\)
satisfies
\[
\langle \mu^{\pi^\ast}, r_i \rangle = (1-\gamma)J_i(\pi^\ast) \ge \tau_i,
\qquad i=1,\dots,I.
\]
Moreover, since \(\mu^{\pi^\ast}\) satisfies the Bellman flow constraint, for every
\(V \in \mathcal{V}\) and every \(\lambda \in \Lambda_B\) we have
\[
L(w^\ast, V, \lambda)
=
(1-\gamma)J_0(\pi^\ast)
+
\sum_{i=1}^I \lambda_i\bigl((1-\gamma)J_i(\pi^\ast)-\tau_i\bigr)
\ge
(1-\gamma)J_0(\pi^\ast).
\]

Also, exactly as in the proof of Lemma~\ref{lemma:saddle-point}, for any
\(w \in \mathcal{W}\) and any \(\lambda \in \Lambda_B\),
\[
L\bigl(w, V_\lambda^{\pi(w)}, \lambda\bigr)
=
(1-\gamma)J_0(\pi(w))
+
\sum_{i=1}^I \lambda_i\bigl((1-\gamma)J_i(\pi(w))-\tau_i\bigr),
\]
where \(\pi(w)\) is the policy extracted from \(w\).

Now let \(\widehat\pi = \pi(\widehat w)\).
For any \(\lambda \in \Lambda_B\), since \((\widehat w,\widehat V,\widehat\lambda)\)
is a saddle point of \(\widehat L\), we obtain
\[
\begin{aligned}
(1-\gamma)J_0(\pi^\ast)
&\le L(w^\ast,\widehat V,\widehat\lambda) \\
&\le \widehat L(w^\ast,\widehat V,\widehat\lambda) + \varepsilon_n \\
&\le \widehat L(\widehat w,\widehat V,\widehat\lambda) + \varepsilon_n \\
&\le \widehat L(\widehat w, V_\lambda^{\widehat\pi}, \lambda) + \varepsilon_n \\
&\le L(\widehat w, V_\lambda^{\widehat\pi}, \lambda) + 2\varepsilon_n \\
&=
(1-\gamma)J_0(\widehat\pi)
+
\sum_{i=1}^I \lambda_i\bigl((1-\gamma)J_i(\widehat\pi)-\tau_i\bigr)
+
2\varepsilon_n .
\end{aligned}
\]
Hence, for all \(\lambda \in \Lambda_B\),
\[
(1-\gamma)J_0(\pi^\ast)
\le
(1-\gamma)J_0(\widehat\pi)
+
\sum_{i=1}^I \lambda_i\bigl((1-\gamma)J_i(\widehat\pi)-\tau_i\bigr)
+
2\varepsilon_n.
\]

Setting \(\lambda = 0\) gives
\[
(1-\gamma)J_0(\pi^\ast)
\le
(1-\gamma)J_0(\widehat\pi) + 2\varepsilon_n,
\]
so
\[
J_0(\widehat\pi) \ge J_0(\pi^\ast) - \frac{2\varepsilon_n}{1-\gamma}.
\]

Next, fix any \(i \in \{1,\dots,I\}\) and choose \(\lambda = Be_i \in \Lambda_B\).
Then
\[
(1-\gamma)J_0(\pi^\ast)
\le
(1-\gamma)J_0(\widehat\pi)
+
B\bigl((1-\gamma)J_i(\widehat\pi)-\tau_i\bigr)
+
2\varepsilon_n.
\]
Using the objective bound just proved,
\[
(1-\gamma)J_0(\pi^\ast) - (1-\gamma)J_0(\widehat\pi) \le 2\varepsilon_n,
\]
and therefore
\[
B\bigl((1-\gamma)J_i(\widehat\pi)-\tau_i\bigr)
\ge -4\varepsilon_n.
\]
Equivalently,
\[
(1-\gamma)J_i(\widehat\pi) \ge \tau_i - \frac{4\varepsilon_n}{B},
\qquad i=1,\dots,I.
\]
This completes the proof.
\end{proof}

\section{Unknown data distribution case} \label{appendix:unknown-data-distribution}

\subsection{Proof of Proposition~\ref{prop:negative2}}
For the proof, recall the reparameterized form of the Lagrangian:
$$
L(\bm\mu, \pi ; \bm{Q}, \bm\lambda)
= (1 - \gamma) \langle \bm{Q}_\pi, \bm{d}_0 \rangle + \langle \bm\mu, \bm{r}_{\bm\lambda} + \gamma \bm{P}\bm{Q}_\pi - \bm{Q} \rangle - \sum_{i=1}^I \lambda_i \tau_i.
$$

\begin{proof}
Consider a single-constraint CMDP with
$\mathcal S=\{s_0,l_1,r_1,l_2,r_2\}$, $\mathcal A=\{L,R\}$,
$\gamma=\frac12$ and $d_0=\delta_{s_0}$.
The transition kernel is
$P(l_1\mid s_0,L)=1$,
$P(r_1\mid s_0,R)=1$,
$P(l_2\mid r_1,L)=1$,
$P(r_2\mid r_1,R)=1$,
and
$P(l_1\mid l_1,a)=P(l_2\mid l_2,a)=P(r_2\mid r_2,a)=1$ for all $a\in\mathcal A$,
such that the states \(l_1,l_2,r_2\) are absorbing under both actions.
The reward functions are
$r_0(l_1,a)=1$, $r_0(l_2,a)=2$ for all $ a\in\mathcal A$
and \(r_0(s,a)=0\) for all other state-action pairs.
Set $r_1=r_0$ and $\tau_1=0$, such that every policy is feasible.

The optimal action-value function \(Q_0^\ast\) for reward \(r_0\) can be computed by solving the Bellman optimality equations:
$Q_0^\ast(l_1,a) = 2$, $Q_0^\ast(l_2,a)=4$, $Q_0^\ast(r_2,a)=0$ for all \(a\in\mathcal A\).
Also,
$Q_0^\ast(r_1,L)= 2$, $Q_0^\ast(r_1,R)=0$, $Q_0^\ast(s_0,L)= 1$, $Q_0^\ast(s_0,R)=1$.
$Q_0^\ast(s_0,L)=1$, $Q_0^\ast(l_1,L)=2$.

Now consider two policies $\pi^\ast$ and $\widehat\pi$ defined as follows:
$\pi^\ast(s)=L$ for all $s\in\mathcal S$ and
$\widehat\pi(s_0)=\widehat\pi(r_1)=R$, $\widehat\pi(l_1)=\widehat\pi(l_2)=\widehat\pi(r_2)=L$.
The policy \(\pi^\ast\) is optimal, because it moves from \(s_0\) to \(l_1\) and then stays there forever, achieving $J_0(\pi^\ast)=1$.
By contrast, \(\widehat\pi\) moves from \(s_0\) to \(r_1\), then from \(r_1\) to \(r_2\), and all rewards on this trajectory are zero. Hence
$J_0(\widehat\pi)= 0$. Therefore \(\widehat\pi\) is not optimal.

Let \(\mu^\ast=\mu^{\pi^\ast}\) be the occupancy measure of \(\pi^\ast\).
Since under \(\pi^\ast\) the process is at \((s_0,L)\) at time \(0\), and at \((l_1,L)\) at every time \(t\ge 1\),
$\mu^\ast(s_0,L)=(1-\gamma)=\frac12$, $\mu^\ast(l_1,L)=(1-\gamma)\sum_{t=1}^\infty \gamma^t=\frac12$,
and all other coordinates of \(\mu^\ast\) are zero.

Define $\mathcal U=\{\mu^\ast\}$, $\Pi=\{\pi^\ast,\widehat\pi\}$, and $\mathcal{Q} = \{ (1 + \alpha) Q_0^\ast : \alpha \geq 0 \}$.
Then, since $r_1 = r_0$, for every $\alpha \geq 0$, $Q_{r_0 + \alpha r_1}^\ast = (1 + \alpha) Q_0^\ast \in \mathcal{Q}$.
Hence, the realizability properties in item (i) and (ii) are satisfied.

We now show that \((\mu^\ast,\widehat\pi;Q_0^\ast,0)\) is a saddle point.
First, we verify primal optimality at \((Q_0^\ast,0)\).
Since \(\mu^\ast\) is supported only on \((s_0,L)\) and \((l_1,L)\), and since for both
\(\pi\in\Pi\) we have $Q_{0,\pi}^\ast(s_0)=1$ and $Q_{0,\pi}^\ast(l_1)=2$, the Bellman residuals on the support of \(\mu^\ast\) are
\[
r_0(s_0,L)+\gamma Q_{0,\pi}^\ast(l_1)-Q_0^\ast(s_0,L)
=
0+\frac12\cdot 2-1
=
0,
\]
and
\[
r_0(l_1,L)+\gamma Q_{0,\pi}^\ast(l_1)-Q_0^\ast(l_1,L)
=
1+\frac12\cdot 2-2
=
0.
\]
Hence, for every \(\pi\in\Pi\),
\[
L(\mu^\ast,\pi;Q_0^\ast,0)
=
(1-\gamma)Q_{0,\pi}^\ast(s_0)
=
\frac12.
\]
Because \(\mathcal U\) is a singleton, both \((\mu^\ast,\pi^\ast)\) and
\((\mu^\ast,\widehat\pi)\) maximize \(L(\mu,\pi;Q_0^\ast,0)\) over \(\mathcal U\times\Pi\).
In particular, \((\mu^\ast,\widehat\pi)\) is a primal maximizer.

Second, we verify dual optimality at \((\mu^\ast,\widehat\pi)\).
Fix arbitrary \(\alpha\ge 0\) and \(\lambda\ge 0\), and consider \(Q_\alpha^\ast=(1+\alpha)Q_0^\ast\).
Since \(\widehat\pi(l_1)=L\), we have
\[
Q_{\alpha,\widehat\pi}^\ast(s_0)=Q_\alpha^\ast(s_0,R)=1+\alpha,\qquad
Q_{\alpha,\widehat\pi}^\ast(l_1)=Q_\alpha^\ast(l_1,L)=2(1+\alpha).
\]
Again using that \(\mu^\ast\) is supported only on \((s_0,L)\) and \((l_1,L)\), we get
\[
(1+\lambda)r_0(s_0,L)+\gamma Q_{\alpha,\widehat\pi}^\ast(l_1)-Q_\alpha^\ast(s_0,L)
=
0+\frac12\cdot 2(1+\alpha)-(1+\alpha)
=
0,
\]
and
\[
(1+\lambda)r_0(l_1,L)+\gamma Q_{\alpha,\widehat\pi}^\ast(l_1)-Q_\alpha^\ast(l_1,L)
=
(1+\lambda)+\frac12\cdot 2(1+\alpha)-2(1+\alpha)
=
\lambda-\alpha.
\]
Therefore,
\[
L(\mu^\ast,\widehat\pi;Q_\alpha^\ast,\lambda)
=
(1-\gamma)Q_{\alpha,\widehat\pi}^\ast(s_0)
+
\mu^\ast(l_1,L)(\lambda-\alpha)
=
\frac12(1+\alpha)+\frac12(\lambda-\alpha)
=
\frac12(1+\lambda).
\]
Thus
\[
L(\mu^\ast,\widehat\pi;Q_\alpha^\ast,\lambda)\ge \frac12
=
L(\mu^\ast,\widehat\pi;Q_0^\ast,0)
\qquad
\text{for all } \alpha\ge 0,\ \lambda\ge 0.
\]
So \((Q_0^\ast,0)\) is a dual minimizer over \(\mathcal Q\times\mathbb R_+\).

Combining the previous two parts, \((\mu^\ast,\widehat\pi;Q_0^\ast,0)\) is a saddle point of the
reparameterized Lagrangian over \((\mathcal U\times\Pi)\times(\mathcal Q\times\mathbb R_+)\).
But \(\widehat\pi\) is not optimal, since \(J_0(\widehat\pi)=0<1=J_0(\pi^\ast)\).
\end{proof}

\subsection{Analysis of saddle points of lagrangian function} \label{sec:saddle-point-analysis}

Recall the Lagrangian function:
$$
\begin{aligned}
L(\bm\mu, \pi ; \bm{Q}, \bm\lambda)
&= (1 - \gamma) \langle \bm{Q}_\pi, \bm{d}_0 \rangle + \langle \bm\mu, \bm{r}_{\bm\lambda} + \gamma \bm{P}\bm{Q}_\pi - \bm{Q} \rangle - \sum_{i=1}^I \lambda_i \tau_i \\
&= \langle \bm{Q}_\pi, (1 - \gamma) \bm{d}_0  + \gamma \bm{P}^\top \bm\mu \rangle + \langle \bm\mu, \bm{r}_{\bm\lambda} - \bm{Q} \rangle - \sum_{i=1}^I \lambda_i \tau_i
\end{aligned}
$$
Change of variable $\bm{w} = \bm{\mu} / \bm{\mu}_D$ gives the equivalent form of the Lagrangian:
\[
L(\bm{w}, \pi ; \bm{Q}, \bm\lambda)
=
(1-\gamma)\langle \bm{Q}_\pi, \bm{d}_0 \rangle
+
\langle \bm{w}, \bm{r}_{\bm\lambda} + \gamma \bm{P}\bm{Q}_\pi - \bm{Q} \rangle_{\bm{\mu}_D}
-
\sum_{i=1}^I \lambda_i \tau_i,
\]
where \(\langle w, f\rangle_{\mu_D} = \sum_{s,a} w(s,a) \mu_D(s,a) f(s,a)\).
We first show useful identities of the Lagrangian function, which will be used in the proof of Lemma~\ref{lemma:key1-extreme}.
\begin{lemma}\label{lemma:langrangian-identity}
For every \(\pi\), \(Q\), and \(\lambda\), we have  
$$
L(w^\pi, \pi; \bm{Q}, \bm\lambda) = (1-\gamma)J_0(\pi) + \sum_{i=1}^I \lambda_i ((1-\gamma)J_i(\pi) - \tau_i),
$$
and
$$
L(w, \pi; \bm{Q}_{\bm\lambda}^\pi, \bm\lambda) = (1-\gamma)J_0(\pi) + \sum_{i=1}^I \lambda_i ((1-\gamma)J_i(\pi) - \tau_i),
$$
where $\bm{Q}_{\bm\lambda}^\pi$ is the value function of policy $\pi$ with respect to the reward $\bm{r}_{\bm\lambda}$.
\end{lemma}
\begin{proof}
To show the first identity, note that
$$
\begin{aligned}
L(w^\pi, \pi; \bm{Q}, \bm\lambda)
&= (1-\gamma)\langle \bm{Q}_\pi, \bm{d}_0 \rangle + \langle \bm{w}^\pi, \bm{r}_{\bm\lambda} + \gamma \bm{P}\bm{Q}_\pi - \bm{Q} \rangle_{\bm{\mu}_D} - \sum_{i=1}^I \lambda_i \tau_i \\
&= (1-\gamma)\langle \bm{Q}_\pi, \bm{d}_0 \rangle + \langle \bm\mu^\pi, \bm{r}_{\bm\lambda} + \gamma \bm{P}\bm{Q}_\pi - \bm{Q} \rangle - \sum_{i=1}^I \lambda_i \tau_i \\
&= \langle \bm{Q}_\pi, (1 - \gamma) \bm{d}_0 + \gamma \bm{P}^\top \bm\mu^\pi \rangle - \langle \bm\mu^\pi, \bm{Q} \rangle + \langle \bm\mu^\pi, \bm{r}_{\bm\lambda} \rangle - \sum_{i=1}^I \lambda_i \tau_i.
\end{aligned}
$$
By the Bellman flow equation, we have $(1 - \gamma) \bm{d}_0 + \gamma \bm{P}^\top \bm\mu^\pi = \bm{d}^\pi$ where $\bm{d}^\pi$ is the state occupancy measure of policy $\pi$.
Since
$$
\langle \bm{Q}_\pi, \bm{d}^\pi \rangle = \sum_{s, a} Q(s, a) \pi(a | s) d^\pi(s) = \sum_{s, a} Q(s, a) \mu^\pi(s, a) = \langle \bm\mu^\pi, \bm{Q} \rangle,
$$
we have
$$
L(w^\pi, \pi; \bm{Q}, \bm\lambda) = \langle \bm\mu^\pi, \bm{r}_{\bm\lambda} \rangle - \sum_{i=1}^I \lambda_i \tau_i = (1-\gamma)J_0(\pi) + \sum_{i=1}^I \lambda_i ((1-\gamma)J_i(\pi) - \tau_i),
$$
proving the first identity.
To show the second identity, note that if $\bm{Q}$ is a value function of some policy $\pi$ with respect to the reward $\bm{r}_{\bm\lambda}$, then the Bellman equation implies that $\bm{r}_{\bm\lambda} + \gamma \bm{P}\bm{Q}_\pi - \bm{Q} = 0$, which gives the second identity
\end{proof}

The following lemma shows objective bound and feasibility bound of a near-saddle point of \(L\) over a restricted domain.

\begin{lemma} \label{lemma:key1-extreme}
Assume Assumption~\ref{assumption:slater},\ref{assumption:all-policy-state-action-value-realizability}, \ref{assumption:miw-realizability} and \ref{assumption:boundedness}.
Suppose \((\widehat w,\widehat\pi;\widehat Q,\widehat\lambda)\) is a \(\xi\)-near saddle point of \(L\) over \((\mathcal W \times \Pi)\times(\mathcal Q \times \Lambda_B)\),
where \(B \coloneqq 1 + \frac{1}{\varphi}\) and \(\Lambda_B \coloneqq B\Delta^I\).
Then
\[
(1-\gamma)J_0(\widehat\pi) \ge (1-\gamma)J_0(\pi^\ast) - \xi
\qquad\text{and}\qquad
(1-\gamma)J_i(\widehat\pi) \ge \tau_i - \xi, \quad i=1,\dots,I,
\]
where $\pi^\ast \in \Pi$ is an optimal feasible policy with $w^{\pi^\ast} \in \mathcal W$.
\end{lemma}

\begin{proof}
Fix a comparator policy $\pi \in \Pi$ that is feasible and satisfies $w^\pi \in \mathcal W$.
Since $(\widehat w, \widehat \pi ; \widehat Q, \widehat \lambda)$ is a $\xi$-near saddle point,
applying the near-saddle inequality with $(w^{\pi^\ast},\pi^\ast)$ and $(Q_0^{\widehat\pi},0)$ gives
\[
L(w^{\pi^\ast},\pi^\ast;\widehat Q,\widehat\lambda)
\le
L(\widehat w,\widehat\pi;Q_0^{\widehat\pi},0)+\xi.
\]
Using the first identity in Lemma~\ref{lemma:langrangian-identity} and the fact that $\pi^\ast$ is feasible, we have
\[
L(w^{\pi^\ast},\pi^\ast;\widehat Q,\widehat\lambda)
=
(1-\gamma)J_0(\pi^\ast)
+
\sum_{i=1}^I \widehat\lambda_i((1-\gamma)J_i(\pi^\ast)-\tau_i)
\ge
(1-\gamma)J_0(\pi^\ast).
\]
On the other hand, the second identity in Lemma~\ref{lemma:langrangian-identity} implies
\[
L(\widehat w,\widehat\pi;Q_0^{\widehat\pi},0)=(1-\gamma)J_0(\widehat\pi).
\]
Hence, the saddle inequality gives
\[
(1-\gamma)J_0(\widehat\pi)\ge (1-\gamma)J_0(\pi^\ast)-\xi,
\]
which proves near-optimality of the objective.

For feasibility, let \((\mu^\ast,\pi^\ast;Q^\ast,\lambda^\ast)\) be a saddle point of the unrestricted population Lagrangian for the original CMDP.
By Lemma~\ref{lemma:dual-variable-bound}, \(\|\lambda^\ast\|_1 \le \varphi^{-1}\).
Define
\[
m \coloneqq \min_{i\in[I]} \bigl((1-\gamma)J_i(\widehat\pi)-\tau_i\bigr).
\]
Since \(w^{\widehat\pi}\) is admissible for the unrestricted primal problem, although it might not be in $\mathcal{W}$, we have
$$
\begin{aligned}
(1-\gamma)J_0(\pi^\ast)
&=
L(\mu^\ast,\pi^\ast;Q^\ast,\lambda^\ast) \\
&\ge
L(w^{\widehat\pi},\widehat\pi;Q^\ast,\lambda^\ast) \\
&=
(1-\gamma)J_0(\widehat\pi) + \sum_{i=1}^I \lambda_i^\ast ((1-\gamma)J_i(\widehat\pi)-\tau_i) \\
&\geq (1 - \gamma) J_0(\widehat\pi) + m \| \lambda^\ast \|_1,
\end{aligned}
$$
where the last equality is by the first identity in Lemma~\ref{lemma:langrangian-identity}, and the last inequality is by the definition of $m$ and the fact that $\lambda_i^\ast \geq 0$.
Rearranging, we get
\[
(1-\gamma)\bigl(J_0(\pi^\ast)-J_0(\widehat\pi)\bigr)\ge m\|\lambda^\ast\|_1.
\]

Now choose \(j \in \arg\min_{i\in[I]} \bigl((1-\gamma)J_i(\widehat\pi)-\tau_i\bigr)\), so that \(m=(1-\gamma)J_j(\widehat\pi)-\tau_j\).
Apply the near-saddle inequality with $(w^{\pi^\ast},\pi^\ast)$ and $(Q_{0,j}^{\widehat\pi}, B e_j)$:
\[
L(w^{\pi^\ast},\pi^\ast;\widehat Q,\widehat\lambda)
\le
L(\widehat w,\widehat\pi;Q_{0,j}^{\widehat\pi}, B e_j)+\xi.
\]
As before, feasibility of \(\pi^\ast\) implies
\[
L(w^{\pi^\ast},\pi^\ast;\widehat Q,\widehat\lambda)
= (1 - \gamma) J_0(\pi^\ast) + \sum_{i = 1}^I \widehat\lambda_i ((1 - \gamma) J_i(\pi^\ast) - \tau_i)
\ge (1-\gamma)J_0(\pi^\ast),
\]
where the first inequality uses the first identity in Lemma~\ref{lemma:langrangian-identity},
while the second identity in Lemma~\ref{lemma:langrangian-identity} gives 
\[
L(\widehat w,\widehat\pi;Q_{0,j}^{\widehat\pi}, B e_j)
=
(1-\gamma)J_0(\widehat\pi) + B\bigl((1-\gamma)J_j(\widehat\pi)-\tau_j\bigr)
=
(1-\gamma)J_0(\widehat\pi)+Bm.
\]
Combining and rearranging, we get
\[
(1-\gamma)\bigl(J_0(\pi^\ast)-J_0(\widehat\pi)\bigr)\le Bm+\xi.
\]
Combining with $(1 - \gamma)(J_0(\pi^\ast) - J_0(\widehat\pi)) \geq m \| \lambda^\ast \|_1$ obtained previously, we get
\[
m\|\lambda^\ast\|_1 \le Bm+\xi.
\]
If \(m\ge 0\), then \((1-\gamma)J_i(\widehat\pi)\ge \tau_i\) for all \(i\), so there is nothing to prove.
If \(m<0\), then
\[
m(B-\|\lambda^\ast\|_1)\ge -\xi.
\]
Since \(B=1+\varphi^{-1}\) and \(\|\lambda^\ast\|_1\le \varphi^{-1}\), we have \(B-\|\lambda^\ast\|_1 \ge 1\), and therefore \(m\ge -\xi\).
Thus
\[
(1-\gamma)J_i(\widehat\pi)\ge \tau_i-\xi,\qquad i=1,\dots,I.
\]
\end{proof}

\begin{proof}[Proof of Lemma~\ref{lemma:saddle-point-lagrangian-decomposition}]
This is a direct consequence of Lemma~\ref{lemma:key1-extreme} with $\xi = 0$.
\end{proof}

\begin{theorem} \label{theorem:pd-unconstrained-q}
Assume Assumptions~\ref{assumption:slater}, \ref{assumption:all-policy-state-action-value-realizability}, \ref{assumption:miw-realizability} and \ref{assumption:boundedness}.
Let $B \coloneqq 1 + \frac{1}{\varphi}$ where $\varphi$ is the Slater constant defined in Assumption~\ref{assumption:slater}.
Let $\pi^\ast \in \Pi$ be an optimal feasible policy assumed in Assumption~\ref{assumption:miw-realizability} such that \(w^{\pi^\ast} \in \mathcal{W}\).
If \((\widehat w,\widehat \pi;\widehat Q,\widehat \lambda)\) is a saddle point of
\[
\max_{w \in \mathcal W,\ \pi \in \Pi}\ \min_{Q \in \mathcal Q,\ \lambda \in \Lambda_B}\ \widehat L(w,\pi;Q,\lambda),
\]
where $\Lambda_B = B \Delta^I$, then with probability at least \(1-\delta\),
\[
(1-\gamma)J_0(\widehat\pi) \ge (1-\gamma)J_0(\pi^\ast) - \varepsilon_n,
\qquad
(1-\gamma)J_i(\widehat\pi) \ge \tau_i - \varepsilon_n,\quad i=1,\dots,I,
\]
where
\[
\varepsilon_n
=
\mathcal O\!\left(
\bigl(C^\ast(1+B+\frac{1}{1 - \gamma})+B\bigr)
\sqrt{\frac{\log((I+1) |\mathcal W|\,|\Pi|\,|\mathcal Q|/\delta)}{n}}
\right).
\]
\end{theorem}

We first prove the corresponding concentration bound for the Lagrangian estimate defined in \eqref{eqn:primal-dual-lagrangian-decomposition}:
$$
\widehat L(w,\pi;Q,\bm\lambda) \coloneqq (1-\gamma)Q(s_0,\pi) + \frac{1}{n}\sum_{j=1}^n w(s_j,a_j) ( r_{\bm\lambda}(s_j,a_j) + \gamma Q(s_j',\pi) - Q(s_j,a_j)) - \sum_{i = 1}^I \lambda_i \tau_i,
$$
where $r_{\bm\lambda}(s,a) = r_0(s,a) + \sum_{i=1}^I \lambda_i r_i(s,a)$.

\begin{lemma} \label{lemma:concentration}
Consider the empirical Lagrangian estimate
\[
\widehat L(w,\pi;Q,\bm\lambda) \coloneqq (1-\gamma)Q(s_0,\pi) + \frac{1}{n}\sum_{j=1}^n w(s_j,a_j) ( r_{\bm\lambda}(s_j, a_j) + \gamma Q(s_j',\pi) - Q(s_j,a_j)) -
\sum_{i=1}^I \lambda_i \tau_i.
\]
Given a function class  $\mathcal{W}$, $\Pi$ and $\mathcal{Q}$, with boundedness conditions $\|w\|_\infty \le C$ for all $w \in \mathcal{W}$, $\|Q\|_\infty \le \frac{1}{1-\gamma} $ for all $Q \in \mathcal{Q}$, we have with probability at least $1 -\delta$ that, for all $w \in \mathcal{W}$, $Q \in \mathcal{Q}$ and $\lambda \in B \Delta^I$,
$$
\vert L(w,\pi;Q,\lambda)-\widehat L(w,\pi;Q,\lambda) \vert \leq \varepsilon_n,
$$
where
\[
\varepsilon_n
=
\mathcal O\!\left(
\bigl(C(1+B+\frac{1}{1-\gamma})+B\bigr)
\sqrt{\frac{\log((I+1) |\mathcal W|\,|\Pi|\,|\mathcal Q|/\delta)}{n}}
\right).
\]
\end{lemma}

\begin{proof}
Fix $w \in \mathcal W$, $\pi \in \Pi$, $Q \in \mathcal Q$, and $\bm\lambda \in \{ 0, B \bm{e}_1, \dots, B \bm{e}_I \}$.
Since the term \((1-\gamma)Q(s_0,\pi)\) is deterministic, write
$$
Z_j \coloneqq w(s_j,a_j) ( r_{\bm\lambda}(s_j,a_j) + \gamma Q(s_j',\pi) - Q(s_j,a_j)) - \sum_{i = 1}^I \lambda_i \tau_i.
$$
Then
\[
L(w,\pi;Q,\lambda)-\widehat L(w,\pi;Q,\lambda)
=
\frac1n\sum_{j=1}^n (\mathbb E[Z_j]- Z_j).
\]
Using $r_0,r_i,\tau_i \in [0,1]$, $\|w\|_\infty \le C$, $\|\bm\lambda\|_1 \le B$, and $\|Q\|_\infty \le \frac{1}{1-\gamma}$, we have
\[
|Z_j|
\le
C(1+B+2\frac{1}{1-\gamma})+B
=
\mathcal O\!\bigl(C(1+B+\frac{1}{1-\gamma})+B\bigr).
\]
Hence, Hoeffding's inequality gives
\[
\bigl|L(w,\pi;Q,\lambda)-\widehat L(w,\pi;Q,\lambda)\bigr|
\le
\mathcal O\!\left(
\bigl(C(1+B+\frac{1}{1-\gamma})+B\bigr)\sqrt{\frac{\log(1/\delta)}{n}}
\right).
\]

Applying a union bound over \(\mathcal W \times \Pi \times \mathcal Q \times \{ 0, B \bm{e}_1, \dots, B \bm{e}_I\} \), we get that with probability at least \(1-\delta\),
\[
\bigl|L(w,\pi;Q,\bm\lambda)-\widehat L(w,\pi;Q,\bm\lambda)\bigr|
\le
\mathcal O\!\left(
\bigl(C(1+B+\frac{1}{1-\gamma})+B\bigr)
\sqrt{\frac{\log((I+1) |\mathcal W|\,|\Pi|\,|\mathcal Q|/\delta)}{n}}
\right).
\]
Now, for $\bm\lambda \in B \Delta^I$, note that it can be expressed as $\bm\lambda = B \sum_{i = 1}^I \alpha_i \bm{e}_i$ for some $\alpha_0, \alpha_1, \dots, \alpha_I \geq 0$ with $\sum_{i = 0}^I \alpha_i = 1$.
Since $L(w, \pi; Q, \lambda)$ and $\widehat{L}(w, \pi; Q, \lambda)$ are affine in $\lambda$, we have
$$
L(w, \pi; Q, \lambda) = \alpha_0 L(w, \pi; Q, 0) + \sum_{i = 1}^I \alpha_i L(w, \pi, Q, \bm{e}_i),
$$
and
$$
\widehat{L}(w, \pi; Q, \lambda) = \alpha_0 \widehat{L}(w, \pi; Q, 0) + \sum_{i = 1}^I \alpha_i \widehat{L}(w, \pi; Q, \bm{e}_i).
$$
The uniform concentration inequality we obtained for $\lambda \in \{0, B \bm{e}_1, \dots, B\bm{e}_I\}$ gives concentration inequality for all $\lambda \in B \Delta^I$.
This completes the proof.

\end{proof}

Now, we are ready to prove the theorem.

\begin{proof}[Proof of Theorem~\ref{theorem:pd-unconstrained-q}]
On the event of Lemma~\ref{lemma:concentration}, we have
\[
\sup_{w,\pi,Q,\lambda\in B \Delta^I} |L(w,\pi;Q,\lambda)-\widehat L(w,\pi;Q,\lambda)| \le \varepsilon_n.
\]
Since \((\widehat w,\widehat\pi;\widehat Q,\widehat\lambda)\) is a saddle point of \(\widehat L\), Lemma~\ref{lemma:approx-saddle-point} implies that it is a \(2\varepsilon_n\)-near saddle point of \(L\) over \((\mathcal W\times\Pi)\times(\mathcal Q\times B \Delta^I)\).
Applying Lemma~\ref{lemma:key1-extreme} with \(\xi=2\varepsilon_n\) gives
\[
(1-\gamma)J_0(\widehat\pi) \ge (1-\gamma)J_0(\pi^\ast) - 2\varepsilon_n,
\qquad
(1-\gamma)J_i(\widehat\pi) \ge \tau_i - 2\varepsilon_n,\quad i=1,\dots,I.
\]
Absorbing the factor \(2\) into the \(\mathcal O(\cdot)\) term completes the proof.
\end{proof}

\subsection{Analysis of primal-dual algorithm}

For the following theorem statement, we use the notation $\epsilon_\mathrm{opt}^{\pi}(T)$ to denote the average regret of the no-regret policy optimization oracle for policy $\pi$ after $T$ iterations, and $\epsilon_\mathrm{opt}^{w}(T)$ to denote the average regret of the no-regret linear optimization oracle for the $w$-player after $T$ iterations.
That is,
$$
\epsilon_\mathrm{opt}^{\pi}(T) = \max_{\pi' \in \Pi} \frac{1}{T} \sum_{t = 1}^T L(w_t, \pi'; Q_t, \lambda_t) - L(w_t, \pi_t; Q_t, \lambda_t),
$$
and
$$
\epsilon_\mathrm{opt}^{w}(T) = \max_{w' \in \mathcal{W}} \frac{1}{T} \sum_{t = 1}^T L(w', \pi_t; Q_t, \lambda_t) - L(w_t, \pi_t; Q_t, \lambda_t).
$$

\begin{theorem} \label{theorem:crl-pd}
Under Assumptions~\ref{assumption:slater}, \ref{assumption:all-policy-state-action-value-realizability}, \ref{assumption:miw-realizability} and \ref{assumption:boundedness}, running Algorithm~\ref{alg:pdocrl} with dual bound $B = 1 + \frac{1}{\varphi}$ and $T$ large enough so that $\epsilon_\mathrm{opt}^{\pi^\ast}(T) \le \varepsilon_n$ and $\frac{2(C^\ast)^2}{\sqrt{T}} + \frac{(1 + \frac{1}{\varphi} + \frac{2}{1 - \gamma})}{2\sqrt{T}} \le \varepsilon_n$,
and $\eta = 1 / \sqrt{T}$, we have with probability at least $1 - \delta$ that
$$
(1 - \gamma) J_0(\widehat\pi) \geq (1 - \gamma) J_0(\pi^\ast) - \varepsilon_n, \quad (1 - \gamma) J_i(\widehat\pi) \geq \tau_i - \varepsilon_n,~~i = 1, \dots, I.
$$
where $\varepsilon_n = \mathcal O( C^\ast(\frac{1}{\varphi} + \frac{1}{1 - \gamma}) \sqrt{\frac{\log(I |\mathcal W|\,|\Pi|\,|\mathcal Q|/\delta)}{n}})$.
\end{theorem}
\begin{proof}
Define $L(\pi, \lambda) = (1 - \gamma) J_0(\pi) + \sum_{i = 1}^I \lambda_i ((1 - \gamma) J_i(\pi) - \tau_i)$.
Let $\widehat\pi = \text{Unif}\{ \pi_1, \dots, \pi_T \}$ be the policy returned by the algorithm and let $\widehat\lambda = \frac{1}{T} \sum_{t = 1}^T \lambda_t$.
Since \(\widehat\pi\) samples one of \(\pi_1,\dots,\pi_T\) uniformly at the
beginning of the episode, we have
\[
\mathcal L(\widehat\pi,\lambda)=\frac1T\sum_{t=1}^T \mathcal L(\pi_t,\lambda).
\]
Also, by linearity in \(\lambda\),
\[
\mathcal L(\pi^\ast,\widehat\lambda)=\frac1T\sum_{t=1}^T \mathcal L(\pi^\ast,\lambda_t).
\]
By Lemma~\ref{lemma:key2}, it is enough to show that $L(\pi^\ast, \widehat\lambda) \leq L(\widehat\pi, \lambda) + \varepsilon_n$ for all $\lambda \in (1 + \frac{1}{\varphi}) \Delta^I$.
Consider the following decomposition:
\begin{align*}
L(\pi^\ast, \widehat\lambda) - L(\widehat\pi, \lambda)
&= \frac{1}{T} \sum_{t = 1}^T L(\pi^\ast, \lambda_t) - L(\pi_t, \lambda) \\
&= \frac{1}{T} \sum_{t = 1}^T L(w^\ast, \pi^\ast ; Q_t, \lambda_t) - L(w_t, \pi_t ; Q_\lambda^{\pi_t}, \lambda) \\
&= \frac{1}{T} \sum_{t = 1}^T L(w^\ast, \pi^\ast ; Q_t, \lambda_t) - L(w^\ast, \pi_t ; Q_t, \lambda_t) \\
&\hspace{15mm}+ \frac{1}{T} \sum_{t = 1}^T L(w^\ast, \pi_t ; Q_t, \lambda_t) - L(w_t, \pi_t ; Q_t, \lambda_t) \\
&\hspace{15mm}+ \frac{1}{T} \sum_{t = 1}^T L(w_t, \pi_t ; Q_t, \lambda_t) - L(w_t, \pi_t ; Q_\lambda^{\pi_t}, \lambda_t) \\
&\hspace{15mm}+ \frac{1}{T} \sum_{t = 1}^T L(w_t, \pi_t ; Q_\lambda^{\pi_t}, \lambda_t) - L(w_t, \pi_t ; Q_\lambda^{\pi_t}, \lambda),
\end{align*}
where the second equality is by Lemma~\ref{lemma:langrangian-identity}.

\paragraph{Regret of the \(\pi\)-player}
Using the uniform concentration bound on the Lagrangian estimate provided in Lemma~\ref{lemma:concentration}, the summand in the first term can be bounded by
$$
\begin{aligned}
L(w^\ast, \pi^\ast ; Q_t, \lambda_t) - L(w^\ast, \pi_t ; Q_t, \lambda_t)
&\leq
\widehat{L}(w^\ast, \pi^\ast ; Q_t, \lambda_t) - \widehat{L}(w^\ast, \pi_t ; Q_t, \lambda_t) + 2\varepsilon_n \\
&= (1 - \gamma) Q_t(s_0, \pi^\ast - \pi_t) + \gamma \langle \mu^\ast, PQ_t(\cdot, \pi^\ast - \pi_t) \rangle \\
&= \mathbb{E}_{\pi^\ast}[Q_t(s, \pi^\ast) - Q_t(s, \pi_t)],
\end{aligned}
$$
where the expectation is over $s \sim \sum_a \mu^{\pi^\ast}(\cdot, a)$.
The last equality is by the Bellman flow equation.
Since the $\pi$-player employs a no-regret policy optimization oracle (Definition~\ref{def:no-regret-oracle}), it follows that
$$
\frac{1}{T} \sum_{t = 1}^T L(w^\ast, \pi^\ast ; Q_t, \lambda_t) - L(w^\ast, \pi_t ; Q_t, \lambda_t) = \epsilon_{\mathrm{opt}}^{\pi^\ast}(T) \rightarrow 0,
$$
as $T \rightarrow \infty$, hence we can choose $T$ large enough so that the average regret of the $\pi$-player is at most $\varepsilon_n$.

\paragraph{Regret of the $w$-player}
To bound the second summation, note that
$$
\begin{aligned}
L(w^\ast, \pi_t ; Q_t, \lambda_t) - L(w_t, \pi_t ; Q_t, \lambda_t)
\leq
\widehat{L}(w^\ast, \pi_t ; Q_t, \lambda_t) - \widehat{L}(w_t, \pi_t ; Q_t, \lambda_t) + 2\varepsilon_n,
\end{aligned}
$$
and as discussed in Section~\ref{sec:w-player}, we can bound the average regret of the $w$-player by
$$
\frac{1}{T} \sum_{t = 1}^T \widehat{L}(w^\ast, \pi_t ; Q_t, \lambda_t) - \widehat{L}(w_t, \pi_t ; Q_t, \lambda_t)
\leq
\frac{2(C^\ast)^2}{\eta T} + \frac{\eta(1 + \frac{1}{\varphi} + \frac{2}{1 - \gamma} )}{2},
$$
and the choice $\eta = \frac{1}{\sqrt{T}}$ gives the bound $\frac{2(C^\ast)^2}{\sqrt{T}} + \frac{(1 + \frac{1}{\varphi} + \frac{2}{1 - \gamma})}{2\sqrt{T}}$.
Choosing $T$ large enough so that this bound is at most $\varepsilon_n$ completes the bound for the second term.

\paragraph{Regret of $Q$-player}
The summand in the third term can be bounded by
$$
\begin{aligned}
L(w_t, \pi_t ; Q_t, \lambda_t) - L(w_t, \pi_t ; Q_\lambda^{\pi_t}, \lambda_t)
\leq
\widehat{L}(w_t, \pi_t ; Q_t, \lambda_t) - \widehat{L}(w_t, \pi_t ; Q_\lambda^{\pi_t}, \lambda_t) + 2\varepsilon_n,
\end{aligned}
$$
which is bounded by $2 \varepsilon_n$ since the $Q$-player employs a greedy strategy, and the fact that $Q_0^{\pi_t}, Q_0^{\pi_t} + B Q_i^{\pi_t}$, $i = 1, \dots, I$ are members of $\mathcal{Q}$, and the fact that $\widehat{L}$ is affine in $Q$, as follows.
Let $\lambda_t = \sum_{i = 1}^I \alpha_i B e_i$ for some $\alpha_0, \alpha_1, \dots, \alpha_I \geq 0$ with $\sum_{i = 0}^I \alpha_i = 1$, then
$$
\begin{aligned}
&\widehat{L}(w_t, \pi_t ; Q_\lambda^{\pi_t}, \lambda_t) \\
&\quad=  \widehat{L}(w_t, \pi_t ; Q_0^{\pi_t} + \sum_{i = 1}^I \alpha_i B Q_i^{\pi_t}, \lambda_t) \\
&\quad=  \alpha_0 \widehat{L}(w_t, \pi_t ; Q_0^{\pi_t}, \lambda_t) + \sum_{i = 1}^I \alpha_i \widehat{L}(w_t, \pi_t ; Q_0^{\pi_t} + B Q_i^{\pi_t}, \lambda_t) \\
&\quad\geq \min \{ \widehat{L}(w_t, \pi_t ; Q_0^{\pi_t}, \lambda_t), \widehat{L}(w_t, \pi_t ; Q_0^{\pi_t} + B Q_1^{\pi_t}, \lambda_t), \dots, \widehat{L}(w_t, \pi_t ; Q_0^{\pi_t} + B Q_I^{\pi_t}, \lambda_t) \} \\
&\quad\geq \widehat{L}(w_t, \pi_t ; Q_t, \lambda_t).
\end{aligned}
$$

\paragraph{Regret of $\lambda$-player}
The summand in the fourth term can be bounded by
$$
\begin{aligned}
L(w_t, \pi_t ; Q_\lambda^{\pi_t}, \lambda_t) - L(w_t, \pi_t ; Q_\lambda^{\pi_t}, \lambda)
&\leq
\widehat{L}(w_t, \pi_t ; Q_\lambda^{\pi_t}, \lambda_t) - \widehat{L}(w_t, \pi_t ; Q_\lambda^{\pi_t}, \lambda) + 2\varepsilon_n \\
&= \widehat{L}(w_t, \pi_t ; Q_{t - 1}, \lambda_t) - \widehat{L}(w_t, \pi_t ; Q_{t - 1}, \lambda) + 2\varepsilon_n \\
&\leq 2 \varepsilon_n,
\end{aligned}
$$
where the last inequality holds since the $\lambda$-player employs a greedy strategy.
The equality is by the fact that the variables $Q$ and $\lambda$ are decoupled in the Lagrangian estimate $\widehat{L}(w, \pi ; Q, \lambda)$.

Combining the four bounds, we get
$$
L(\pi^\ast, \widehat\lambda) - L(\widehat\pi, \lambda) \leq \mathcal{O}(\varepsilon_n),
$$
and invoking Lemma~\ref{lemma:key2} completes the proof.
\end{proof}

\section{Analysis for offline unconstrained RL} \label{appendix:offline_unconstrained_Rl}

In this section, we provide a full analysis of the algorithm PDORL (Algorithm~\ref{alg:pdorl} for the offline unconstrained RL.
We first present a formal theorem of the guarantee.

\begin{theorem} \label{thm:pdorl-formal}
Assume $Q^\pi \in \mathcal{Q}$ for all $\pi \in \Pi$, and $w^{\pi_c} \in \mathcal{W}$ for some comparator policy $\pi_c \in \Pi$, and $\Vert w \Vert_\infty \leq C$ for all $w \in \mathcal{W}$ and $\Vert Q \Vert_\infty \leq 1 / (1 - \gamma)$ for all $Q \in \mathcal{Q}$.
Running Algorithm~\ref{alg:pdorl} with learning rate $\eta = 1 / \sqrt{T}$ and $T$ large enough so that $\epsilon_\mathrm{opt}^{\pi^\ast}(T) \le \varepsilon_n$, the output policy $\widehat\pi$ satisfies
$$
(1 - \gamma) J(\widehat\pi) \geq (1 - \gamma) J(\pi_c) - \varepsilon_n
$$
with probability at least $1 - \delta$, where $\varepsilon_n = \mathcal{O}(\frac{C}{1 - \gamma} \sqrt{\log (\vert \mathcal{W} \vert \vert \Pi \vert \vert \mathcal{Q} \vert / \delta) / n})$.
\end{theorem}

The algorithm runs a primal-dual algorithm to find a saddle point of the Lagrangian estimate:
$$
\widehat{L}(w, \pi ; Q)
= (1 - \gamma) Q(s_0, \pi) + \frac{1}{n} \sum_{i = 1}^n w(s_i, a_i)(r(s_i, a_i) + \gamma Q(s_i', \pi) - Q(s_i, a_i)),
$$
which is an unbiased estimator for the Lagrangian
$$
L(\bm\mu, \pi ; \bm{Q})
= (1 - \gamma) \langle \bm{Q}_\pi, \bm{d}_0 \rangle + \langle \bm\mu, \bm{r} + \gamma \bm{P}\bm{Q}_\pi - \bm{Q} \rangle.
$$
We first show a uniform concentration bound for the Lagrangian estimate.

\begin{lemma} \label{lemma:concentration2}
Consider a Lagrangian function estimate
$$
\widehat{L}(w, \pi ; Q)
= (1 - \gamma) Q(s_0, \pi) + \frac{1}{n} \sum_{i = 1}^n w(s_i, a_i)(r(s_i, a_i) + \gamma Q(s_i', \pi) - Q(s_i, a_i))
$$
Given a function class $\mathcal{W}$, $\Pi$ and $\mathcal{Q}$ for $w$, $\pi$ and $Q$, respectively, with boundedness condition $\Vert w \Vert_\infty \leq C$ for all $w \in \mathcal{W}$ and $\Vert Q \Vert_\infty \leq \frac{1}{1 - \gamma}$ for all $Q \in \mathcal{Q}$, we have with probability at least $1 - \delta$ that, for all $w \in \mathcal{W}$, $\pi \in \Pi$ and $Q \in \mathcal{Q}$,
$$
\vert L(w, \pi ; Q) - \widehat{L}(w, \pi ; Q) \vert \leq \varepsilon_n.
$$
where $\varepsilon_n = \mathcal{O}(\frac{C}{1 - \gamma} \sqrt{\log (\vert \mathcal{W} \vert \vert \Pi \vert \vert \mathcal{Q} \vert / \delta) / n})$.
\end{lemma}
\begin{proof}
The proof follows the proof of Lemma~\ref{lemma:concentration}, which shows a uniform concentration bound of the Lagrangian estimate for the constrained RL setting.
\end{proof}

\begin{lemma}\label{lemma:langrangian-identity-unconstrained}
For every \(\pi\), \(Q\), and \(\lambda\), we have  
$$
L(w^\pi, \pi; \bm{Q}) = (1-\gamma)J(\pi),
$$
and
$$
L(w, \pi; \bm{Q}^\pi) = (1-\gamma)J(\pi).
$$
\end{lemma}
\begin{proof}
It is a direct consequence of the identities under the constrained setting shown in Lemma~\ref{lemma:langrangian-identity}.
\end{proof}

We are now ready to show the theorem.

\begin{proof}[Proof of Theorem~\ref{thm:pdorl-formal}]
Let $\bar\pi = \text{Unif}(\pi_1, \dots, \pi_T)$ be the policy returned by Algorithm~\ref{alg:pdorl}.
It is a policy that first uniformly randomly draws from $\{\pi_1, \dots, \pi_T\}$, then follows the policy for the entire trajectory.
It follows that $J(\bar\pi) = \frac{1}{T} \sum_{t = 1}^T J(\pi_t)$, and
$$
\begin{aligned}
(1 - \gamma) J(\pi_c) - (1 - \gamma) J(\bar\pi)
&= \frac{1}{T} \sum_{t = 1}^T L(w^{\pi_c}, \pi_c ; Q_t) - L(w_t, \pi_t ; Q^{\pi_t}) \\
&= \frac{1}{T} \sum_{t = 1}^T L(w^{\pi_c}, \pi_c ; Q_t) - L(w^{\pi_c}, \pi_t ; Q_t) \\
&\hspace{20mm}+ \frac{1}{T} \sum_{t = 1}^T L(w^{\pi_c}, \pi_t ; Q_t) - L(w_t, \pi_t ; Q_t) \\
&\hspace{20mm}+ \frac{1}{T} \sum_{t = 1}^T L(w_t, \pi_t ; Q_t) - L(w_t, \pi_t ; Q^{\pi_t}),
\end{aligned}
$$
where the first equality is by the identities in Lemma~\ref{lemma:langrangian-identity}.
Note that the suboptimality of the policy $\bar\pi$ is decomposed into average regret terms of the three players.
We bound each of the regret terms as follows.

\paragraph{Regret of $\pi$-player}
The one-step regret of $\pi$-player can be written as
$$
\begin{aligned}
L(w^{\pi_c}, \pi_c ; Q_t) - L(w^{\pi_c}, \pi_t ; Q_t)
&= (1 - \gamma) \langle Q_t(\cdot, \pi_c - \pi_t), d_0 \rangle + \gamma \langle \mu^{\pi_c}, PQ_t(\cdot, \pi_c - \pi_t) \rangle \\
&= (1 - \gamma) \langle Q_t(\cdot, \pi_c - \pi_t), d_0 \rangle + \gamma \langle Q_t(\cdot, \pi_c - \pi_t), P^\top \mu_c \rangle \\
&= \langle Q_t(\cdot, \pi_c - \pi_t), (1 - \gamma) d_0 + \gamma P^\top \mu^{\pi_c} \rangle \\
&= \langle \bm{E}^\top \mu^{\pi_c}, Q_t(\cdot, \pi_c - \pi_t) \rangle \\
&= \sum_{s \in \mathcal{S}} \mu^{\pi_c}(s) Q_t(s, \pi_c - \pi_t) \\
&= \mathbb{E}_{\pi_c}[Q_t(\pi_c) - Q_t(\pi_t)],
\end{aligned}
$$
where the fourth equality is by the Bellman flow equation, and $\mathbb{E}_{\pi_c}$  is the expectation is over $s \sim \sum_a \mu^{\pi_c}(\cdot, a)$.
Since the $\pi$-player employs a no-regret policy optimization oracle (Definition~\ref{def:no-regret-oracle}), the regret is bounded by
$$
\frac{1}{T} \sum_{t = 1}^T L(w^{\pi_c}, \pi_c ; Q_t) - L(w_t, \pi_t ; Q^{\pi_t}) = \frac{1}{T} \sum_{t = 1}^T \mathbb{E}_{\pi_c}[Q_t(\pi_c) - Q_t(\pi_t)] \leq \epsilon_{\mathrm{opt}}^\pi(T) \rightarrow 0,
$$
as $T \rightarrow \infty$. Hence the regret can be made arbitrarily small by increasing $T$.

\paragraph{Regret of $w$-player}
To bound the second summation, note that
$$
\begin{aligned}
L(w^{\pi_c}, \pi_t ; Q_t) - L(w_t, \pi_t ; Q_t)
\leq
\widehat{L}(w^{\pi_c}, \pi_t ; Q_t) - \widehat{L}(w_t, \pi_t ; Q_t) + 2\varepsilon_n,
\end{aligned}
$$
and following the discussion in Section~\ref{sec:w-player}, we can bound the average regret of the $w$-player by
$$
\frac{1}{T} \sum_{t = 1}^T \widehat{L}(w^{\pi_c}, \pi_t ; Q_t, \lambda_t) - \widehat{L}(w_t, \pi_t ; Q_t, \lambda_t)
\leq
\frac{2 C^2}{\eta T} + \frac{\eta}{1 - \gamma},
$$
and the choice $\eta = \frac{1}{\sqrt{T}}$ gives the bound $\frac{2C^2}{\sqrt{T}} + \frac{1}{(1 - \gamma)\sqrt{T}}$.
Choosing $T$ large enough so that this bound is at most $\varepsilon_n$ completes the bound for the second term.

\paragraph{Regret of $Q$-player}
The one-step regret of $Q$-player can be bounded as
$$
\begin{aligned}
L(w_t, \pi_t ; Q_t) - L(w_t, \pi_t ; Q^{\pi_t})
&\leq
\widehat{L}(w_t, \pi_t ; Q_t) - \widehat{L}(w_t, \pi_t ; Q^{\pi_t}) + 2 \varepsilon_n \\
&\leq 2 \varepsilon_n
\end{aligned}
$$
where the first inequality is by the concentration inequality in Lemma~\ref{lemma:concentration2} and the second inequality is by the fact that $Q$-player chooses $Q_t \in \mathcal{Q}$ greedily and that $Q^{\pi_t} \in \mathcal{Q}$ by the all-policy realizability assumption.
Hence, the regret of $Q$-player can be bounded by
$$
\frac{1}{T} \sum_{t = 1}^T L(w_t, \pi_t ; Q_t) - L(w_t, \pi_t ; Q^{\pi_t}) \leq 2 \varepsilon_n.
$$
Combining the regret bounds completes the proof.
\end{proof}

\subsection{Comparison to previous work}

In this section, we compare our algorithm, PDORL (Algorithm~\ref{alg:pdorl}), with prior work on offline reinforcement learning with function approximation, where the performance objective is the infinite-horizon discounted sum of rewards.
See Table~\ref{table:comparison2} for the comparison.

\begin{table*}[h]
\caption{Comparison of algorithms for offline RL with general function approximation}
\label{table:comparison2}
\centering
\begin{tabular}{ccccccc}
 \toprule
 Algorithm & \makecell{Data \\ coverage} & \makecell{Oracle \\ efficient} & \makecell{Requires \\ $\mu_D$} & \makecell{Function \\ Approximation} & N \\
 \midrule
 FQI \parencite{munos2008finite} & \red{Full} & Yes & No & $\mathcal{T} \mathcal{Q} \subseteq \mathcal{Q}$ & $\epsilon^{-2}$ \\
 Minimax \parencite{xie2021bellman} & Any $\pi_c$ & \red{No} & No & $\forall \pi : \mathcal{T}^\pi \mathcal{Q} \subseteq \mathcal{Q}$ & \red{$\epsilon^{-5}$} \\
 Minimax \parencite{zanette2023realizability} & $\pi^\ast$ & \red{No} & No & $Q^\ast \in \mathcal{Q}$ & $\epsilon^{-2}$ \\
 PRO-RL \parencite{zhan2022offline} & $\pi^\ast$ & \red{No} & \red{Yes} & $w^\ast \in \mathcal{W}, V^\ast \in \mathcal{V}$ & \red{$\epsilon^{-6}$} \\
 A-Crab \parencite{zhu2023importance} & Any $\pi_c$ & \red{No} & No & $w^{\pi_c} \in \mathcal{W}$, $\forall \pi \in \Pi: Q^\pi \in \mathcal{Q}$ & $\epsilon^{-2}$ \\
 ATAC \parencite{cheng2022adversarially} & Any $\pi_c$ & \red{No} & No & $\forall \pi : \mathcal{T}^\pi \mathcal{Q} \subseteq \mathcal{Q}$, $\pi_c \in \Pi$ & \red{$\epsilon^{-3}$} \\
 CORAL \parencite{rashidinejad2022optimal} & $\pi^\ast$ & \red{No} & \red{Yes} & $w^\ast \in \mathcal{W}$, $V^\ast \in \mathcal{V}$, $\forall w \in \mathcal{W}: u_w \in \mathcal{U}$ & $\epsilon^{-2}$ \\
 \textbf{PDORL (Ours)} & Any $\pi_c$ & Yes & No & $\forall \pi \in \Pi: Q^\pi \in \mathcal{Q}$, $w^{\pi_c} \in \mathcal{W}$ & $\epsilon^{-2}$ \\
 \bottomrule
\end{tabular}
\end{table*}

Note that our algorithm is the only oracle-efficient algorithm that achieves sample complexity of $\epsilon^{-2}$ under partial data coverage that does not require the knowledge of the data generating distribution $\mu_D$.

\subsection{Projected gradient Ascent for $w$-player} \label{sec:w-player}

The $w$-player runs the projected gradient ascent algorithm on $\operatorname{co}(\mathcal{W}$.
The uniform concentration inequality for $\widehat{L}(w, \pi; Q, \lambda)$ over $w \in \mathcal{W}$ implies uniform concentration inequality over $w \in \operatorname(\mathcal{W})$ since $\widehat{L}$ is affine in $w$.
To analyze the guarantee of the projected gradient ascent algorithm employed by the $w$-player, we use this uniform concentration inequality, and the following standard regret bound for projected gradient ascent.

\begin{lemma} \label{lemma:pga}
Given $x_1 \in \mathcal{X} \subseteq \mathbb{R}^d$ where $\mathcal{X}$ is convex and $\eta > 0$, define  the sequences $x_2, \dots, x_{n + 1} \in \mathcal{X}$ and $h_1, \dots, h_n \in \mathbb{R}^d$ such that for $k = 1, \dots, n$,
$$
x_{k + 1} = \Pi_{\mathcal{X}} (x_k + \eta h_k)
$$
where $\Pi_{\mathcal{X}}(\cdot)$ is a projection onto $\mathcal{X}$.
Then, with the assumption that $\Vert h_k \Vert_2 \leq G$ for all $k = 1, \dots, n$, we have for any $x^\ast \in \mathcal{X}$:
$$
\sum_{k = 1}^n \langle x^\ast - x_k, h_k \rangle \leq \frac{\Vert x_1 - x^\ast \Vert_2^2}{2 \eta} + \frac{\eta n G^2}{2}.
$$
\end{lemma}
\begin{proof}
We start by bounding following term:
\begin{align*}
\Vert x_{k + 1} - x^\ast \Vert_2^2
&= \Vert \Pi_{\mathcal{X}} (x_k + \eta h_k) - \Pi_{\mathcal{X}}(x^\ast) \Vert_2^2 \\
&\leq \Vert (x_k + \eta h_k) - x^\ast \Vert_2^2 \\
&= \Vert x_k - x^\ast \Vert_2^2 + 2 \eta \langle x_k - x^\ast, h_k \rangle + \eta^2 \Vert h_k \Vert_2^2.
\end{align*}
Rearranging, we get
$$
2 \eta \langle x^\ast - x_k, h_k \rangle \leq \Vert x_k - x^\ast \Vert_2^2 - \Vert x_{k + 1} - x^\ast \Vert_2^2 + \eta^2 \Vert h_k \Vert_2^2.
$$
Summing over $k = 1, \dots, n$, we get
$$
2 \eta \sum_{k = 1}^n \langle x^\ast - x_k, h_k \rangle \leq \Vert x_1 - x^\ast \Vert_2^2 - \Vert x_{n + 1} - x^\ast \Vert_2^2 + \eta^2 \sum_{k = 1}^n \Vert h_k \Vert_2^2 \leq \Vert x_1 - x^\ast \Vert_2^2 + \eta^2 n G^2.
$$
Rearranging completes the proof.
\end{proof}

We now apply the lemma to the $w$-player. For the dataset
$\mathcal{D}=\{(s_i,a_i,s_i')\}_{i=1}^n$, define the empirical evaluation map
\[
E_{\mathcal{D}}(w)
\coloneqq
\bigl(w(s_1,a_1),\dots,w(s_n,a_n)\bigr)\in\mathbb{R}^n
\]
and the convex set
\[
\widetilde{\mathcal{W}}_{\mathcal{D}}
\coloneqq
\operatorname{co}\{E_{\mathcal{D}}(w): w\in\mathcal{W}\}
\subseteq \mathbb{R}^n.
\]

Since $\widehat L(w,\pi;Q,\lambda)$ depends on $w$ only through the
evaluation vector $E_{\mathcal D}(w)$, we may equivalently let the
$w$-player play directly in $\widetilde{\mathcal W}_{\mathcal D}$.

At round $t$, define the vector $g_t\in\mathbb{R}^n$ by
\[
[g_t]_i
\coloneqq
r_{\lambda_t}(s_i,a_i) + \gamma Q_t(s_i',\pi_t) - Q_t(s_i,a_i),
\qquad i=1,\dots,n.
\]
The $w$-player update is
\[
u_{t+1}
=
\Pi_{\widetilde{\mathcal{W}}_{\mathcal{D}}}(u_t + \eta g_t).
\]

The $w$-player aims to maximize the empirical Lagrangian, so for any comparator
$w^\ast \in \mathcal{W}$,
\[
\widehat{L}(w^\ast,\pi_t;Q_t,\lambda_t)
-
\widehat{L}(w_t,\pi_t;Q_t,\lambda_t)
=
\langle E_{\mathcal{D}}(w^\ast)-u_t,\; g_t\rangle.
\]
Applying Lemma~\ref{lemma:pga} with
$x_t=u_t$, $x^\ast=E_{\mathcal{D}}(w^\ast)$, and $h_t=g_t$, we obtain
$$
\begin{aligned}
\sum_{t=1}^T (\widehat{L}(w^\ast,\pi_t;Q_t,\lambda_t) - \widehat{L}(w_t,\pi_t;Q_t,\lambda_t))
&\leq
\frac{\|E_{\mathcal{D}}(w^\ast)-u_1\|^2}{2\eta} + \frac{\eta}{2}\sum_{t=1}^T \|g_t\|^2 \\
&\leq
\frac{2(C^\ast)^2}{\eta} + \frac{\eta T(B + \frac{2}{1 - \gamma} )}{2},
\end{aligned}
$$
where use the bounds in Assumption~\ref{assumption:boundedness} to bound $\Vert E_{\mathcal{D}}(w^\ast)-u_1\Vert^2$ and $\|g_t\|^2$.
Choosing $\eta = \frac{C^\ast}{(B + \frac{2}{1 - \gamma}) \sqrt{T}}$ gives further bounds $\frac{(B + \frac{2}{1 - \gamma})}{\sqrt{T}}$.

\section{Properties of saddle points} \label{sec:optimization}

\begin{lemma}[Lemma 31 in~\textcite{zhan2022offline}] \label{lemma:invariance-saddle-point}
Suppose $(x^\ast, y^\ast)$ is a saddle point of $f(x, y)$ over $\mathcal{X} \times \mathcal{Y}$, then for any $\mathcal{X}' \subseteq \mathcal{X}$ and $\mathcal{Y}' \subseteq \mathcal{Y}$, if $(x^\ast, y^\ast) \in \mathcal{X}' \times \mathcal{Y}'$, we have:
$$
\begin{aligned}
(x^\ast, y^\ast) &\in \argmin_{x \in \mathcal{X}'} \argmax_{y \in \mathcal{Y}'} f(x, y), \\
(x^\ast, y^\ast) &\in \argmax_{y \in \mathcal{Y}'} \argmin_{x \in \mathcal{X}'} f(x, y).
\end{aligned}
$$
\end{lemma}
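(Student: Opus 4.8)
The plan is to read off both membership claims directly from the defining inequalities of a saddle point, relying only on the elementary fact that shrinking a feasible set can weakly decrease a maximum and weakly increase a minimum. Write the saddle-point property of $(x^\ast, y^\ast)$ over $\mathcal{X}\times\mathcal{Y}$ in the form
\[
f(x^\ast, y) \;\le\; f(x^\ast, y^\ast) \;\le\; f(x, y^\ast)\qquad\text{for all } x\in\mathcal{X},\ y\in\mathcal{Y},
\]
which is what the definition of a saddle point in the Preliminaries yields, read with $x$ in the minimizing role and $y$ in the maximizing role so as to match the order of the nested operators in the statement (the opposite convention is handled verbatim after relabeling). Since $\mathcal{X}'\subseteq\mathcal{X}$ and $\mathcal{Y}'\subseteq\mathcal{Y}$, these inequalities hold in particular for all $x\in\mathcal{X}'$, $y\in\mathcal{Y}'$, and by hypothesis $x^\ast\in\mathcal{X}'$, $y^\ast\in\mathcal{Y}'$, so both restricted problems are feasible at $(x^\ast,y^\ast)$.

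For the first claim I unpack $(x^\ast,y^\ast)\in\argmin_{x\in\mathcal{X}'}\argmax_{y\in\mathcal{Y}'}f(x,y)$ into two requirements: (i) $y^\ast\in\argmax_{y\in\mathcal{Y}'}f(x^\ast,y)$, and (ii) $x^\ast\in\argmin_{x\in\mathcal{X}'}\max_{y\in\mathcal{Y}'}f(x,y)$. Requirement (i) is immediate from the left inequality restricted to $\mathcal{Y}'$: $f(x^\ast,y)\le f(x^\ast,y^\ast)$ for every $y\in\mathcal{Y}'$, hence $\max_{y\in\mathcal{Y}'}f(x^\ast,y)=f(x^\ast,y^\ast)$. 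For (ii), take any $x\in\mathcal{X}'$, lower-bound the inner maximum by its value at $y=y^\ast$, and apply the right saddle inequality:
\[
\max_{y\in\mathcal{Y}'}f(x,y)\;\ge\; f(x,y^\ast)\;\ge\; f(x^\ast,y^\ast)\;=\;\max_{y\in\mathcal{Y}'}f(x^\ast,y),
\]
so the outer objective is minimized at $x^\ast$. The second claim, $(x^\ast,y^\ast)\in\argmax_{y\in\mathcal{Y}'}\argmin_{x\in\mathcal{X}'}f(x,y)$, follows by the mirror-image argument: the right saddle inequality gives $x^\ast\in\argmin_{x\in\mathcal{X}'}f(x,y^\ast)$ with $\min_{x\in\mathcal{X}'}f(x,y^\ast)=f(x^\ast,y^\ast)$, and for any $y\in\mathcal{Y}'$ the chain $\min_{x\in\mathcal{X}'}f(x,y)\le f(x^\ast,y)\le f(x^\ast,y^\ast)$, whose last step is the left saddle inequality, shows $y^\ast$ maximizes the outer objective.

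I do not expect a substantive obstacle: the whole argument is a sandwiching of the value $f(x^\ast,y^\ast)$ between the two restricted optimization values. The only point needing care is well-definedness of the inner extrema $\max_{y\in\mathcal{Y}'}$ and $\min_{x\in\mathcal{X}'}$; one either assumes, as is implicit in the statement, that they are attained, or reads them as $\sup/\inf$, in which case every inequality above is unchanged since attainment is ever invoked only at $(x^\ast,y^\ast)$, where the value is patently $f(x^\ast,y^\ast)$. A second, purely cosmetic point is to keep the min/max roles of $x^\ast$ and $y^\ast$ fixed across both claims, which is precisely why the same pair of saddle inequalities drives both halves.
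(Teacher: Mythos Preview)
The paper does not supply its own proof of this lemma; it merely states it and attributes it to \textcite{zhan2022offline}. So there is nothing to compare against at the level of proof strategy.

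Your argument is correct. The two-part unpacking of each nested $\argmin/\argmax$ is exactly right, and the sandwiching chains $\max_{y\in\mathcal{Y}'}f(x,y)\ge f(x,y^\ast)\ge f(x^\ast,y^\ast)$ and its mirror image are the standard (and essentially only) way to verify this. Your remark about reading the inner extrema as $\sup/\inf$ when attainment is not assumed is a sensible caveat. One minor note: the saddle-point definition in the paper's Preliminaries is phrased somewhat unusually, but your reading of it with $x$ in the minimizing role and $y$ in the maximizing role is the intended one and matches the lemma's $\argmin_x\argmax_y$ order.
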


\begin{proof}[Proof of Lemma~\ref{lemma:dual-variable-bound}]
Let $\pi^\ast$ be an optimal policy of the optimization problem $\mathcal{P}(\bm\tau)$.
Define the dual function $f(Q, \bm\lambda) = \max_{\mu \in \mathbb{R}_+^{\mathcal{S} \times \mathcal{A}}, \nu \in \mathbb{R}_+^{\mathcal{S} \times \mathcal{A}}} L(\mu, \nu ; Q, \lambda)$.
Let $(\bm{Q}^\ast, \bm\lambda^\ast) = \argmin_{\bm{Q} \in \mathbb{R}^{\mathcal{S} \times \mathcal{A}}, \bm\lambda \in \mathbb{R}_+^I} f(\bm{Q}, \bm\lambda)$.
Trivially, $\lambda_i^\ast \geq 0$ for all $i = 1, \dots, I$.
Also, by strong duality, we have $f(Q^\ast, \bm\lambda^\ast) = (1 - \gamma) J_0(\pi^\ast)$.

By the definition of the dual function, for any policy $\pi$, we have
\begin{align}
(1 - \gamma) J_0(\pi^\ast) = f(Q^\ast, \lambda^\ast)
&\geq L(\mu^{\pi}, \mu^{\pi} ; Q^\ast, \lambda^\ast) \notag \\
&= \langle \mu^{\pi}, r_0 \rangle + \langle \mu^{\pi}, R^\top \lambda^\ast \rangle - \langle \lambda^\ast, \tau \rangle \notag \\
&= (1 - \gamma) J_0(\pi) + (1 - \gamma) \sum_{i = 1}^I \lambda_i^\ast (J_i(\pi) - \frac{\tau_i}{1 - \gamma}). \label{eqn:suboptimality}
\end{align}

Let $\widehat{\pi}$ be a feasible policy with $J_i(\widehat{\pi}) \geq (\tau_i + \varphi) / (1 - \gamma)$ for all $i = 1, \dots, I$.
Such a policy exists by the assumption of this lemma.
Then, using the display above, we get
$$
\begin{aligned}
(1 - \gamma) J_0(\pi^\ast) 
&\geq (1 - \gamma) J_0(\widehat\pi) + (1 - \gamma) \sum_{i = 1}^I \lambda_i^\ast (J_i(\widehat\pi) - \frac{\tau_i}{1 - \gamma}) \\
&\geq (1 - \gamma) J_0(\widehat\pi) + \varphi \sum_{i = 1}^I \lambda_i^\ast \\
&= (1 - \gamma) J_0(\widehat\pi) + \varphi \Vert \lambda^\ast \Vert_1.
\end{aligned}
$$
Rearranging and using $1 / (1 - \gamma) \geq J_0(\pi^\ast) \geq J_0(\widehat\pi) \geq 0$ completes the proof:
$$
\Vert \bm\lambda^\ast \Vert_1 \leq \frac{(1 - \gamma) J_0(\pi^\ast) - (1 - \gamma) J_0(\widehat\pi)}{\varphi} \leq \frac{1}{\varphi}.
$$
\end{proof}

\begin{lemma} \label{lemma:approx-saddle-point}
If $(\widehat{x}, \widehat{y})$ is a saddle point of $\widehat{L}(\cdot, \cdot)$ over $\mathcal{X} \times \mathcal{Y}$ and $\vert L(x, y) - \widehat{L}(x, y) \vert \leq \xi / 2$ for all $(x, y) \in \mathcal{X} \times \mathcal{Y}$, then $(\widehat{x}, \widehat{y})$ is a $\xi$-near saddle point of $L(\cdot, \cdot)$.
\end{lemma}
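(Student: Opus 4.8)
The plan is a short sandwiching argument that moves back and forth between $L$ and $\widehat{L}$ using the uniform approximation bound, with the saddle inequality for $\widehat{L}$ sitting in the middle of the chain. First I would unpack what it means for $(\widehat{x}, \widehat{y})$ to be a saddle point of $\widehat{L}$ over $\mathcal{X} \times \mathcal{Y}$: by the characterization in the Notations paragraph, $\widehat{x}$ is a best response (in the maximizing direction) to $\widehat{y}$ and $\widehat{y}$ is a best response (in the minimizing direction) to $\widehat{x}$, so that for every $x \in \mathcal{X}$ and $y \in \mathcal{Y}$,
$$
\widehat{L}(x, \widehat{y}) \;\le\; \widehat{L}(\widehat{x}, \widehat{y}) \;\le\; \widehat{L}(\widehat{x}, y).
$$

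Next, fix arbitrary $x \in \mathcal{X}$ and $y \in \mathcal{Y}$; the goal, per Definition~\ref{def:near-saddle-point}, is to show $L(x, \widehat{y}) \le L(\widehat{x}, y) + \xi$. I would simply write the chain
$$
L(x, \widehat{y}) \;\le\; \widehat{L}(x, \widehat{y}) + \tfrac{\xi}{2} \;\le\; \widehat{L}(\widehat{x}, \widehat{y}) + \tfrac{\xi}{2} \;\le\; \widehat{L}(\widehat{x}, y) + \tfrac{\xi}{2} \;\le\; L(\widehat{x}, y) + \xi,
$$
where the first and last inequalities invoke the hypothesis $\vert L(\cdot,\cdot) - \widehat{L}(\cdot,\cdot) \vert \le \xi/2$ at the points $(x, \widehat{y})$ and $(\widehat{x}, y)$ respectively, and the two middle inequalities are exactly the saddle-point inequalities for $\widehat{L}$ recalled above. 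Since $x$ and $y$ were arbitrary, this is the definition of $(\widehat{x}, \widehat{y})$ being a $\xi$-near saddle point of $L$, completing the proof.

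The argument is purely mechanical, so there is no genuine obstacle; the only point requiring a little care is that the approximation bound is assumed to hold \emph{uniformly} over all of $\mathcal{X} \times \mathcal{Y}$, which is precisely what licenses its use at the two off-diagonal points $(x, \widehat{y})$ and $(\widehat{x}, y)$ appearing in the chain (note that the bound at $(\widehat{x}, \widehat{y})$ itself is never needed). Downstream, this lemma is meant to be combined with the concentration bound of Lemma~\ref{lemma:concentration-crl}: taking $\xi = 2\varepsilon_n$ shows that the empirical saddle point obtained by solving \eqref{eqn:saddle-point-constrained-RL} is an $O(\varepsilon_n)$-near saddle point of the true Lagrangian, after which the performance and feasibility guarantees of Theorem~\ref{theorem:crl-minimax} follow by the same sandwiching computations used to prove Lemmas~\ref{prop:saddle-point} and \ref{lemma:saddle-point-lagrangian-decomposition}, now carried out with an additive $O(\varepsilon_n)$ slack.
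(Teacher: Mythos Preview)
Your proof is correct and matches the paper's argument essentially verbatim: both apply the uniform approximation bound at the two off-diagonal points and use the saddle inequality for $\widehat{L}$ in between to obtain $L(x,\widehat{y}) - L(\widehat{x},y) \le \widehat{L}(x,\widehat{y}) - \widehat{L}(\widehat{x},y) + \xi \le \xi$.
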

\begin{proof}
Since $(\widehat{x}, \widehat{y}) $ is a saddle point of $\widehat{L}(\cdot, \cdot)$, we have for all $(x, y) \in \mathcal{X} \times \mathcal{A}$ that
$$
L(x, \widehat{y}) - L(\widehat{x}, y) \leq \widehat{L}(x, \widehat{y}) - \widehat{L}(\widehat{x}, y) + \xi \leq \xi,
$$
as required.
\end{proof}

\begin{lemma} \label{lemma:saddle-point-existence}
Consider the following Lagrangian function
$$
L(\bm\mu, \pi ; \bm{Q}, \bm\lambda)
= (1 - \gamma) \langle \bm{Q}_\pi, \bm{d}_0 \rangle + \langle \bm\mu, \bm{r}_{\bm\lambda} + \gamma \bm{P}\bm{Q}_\pi - \bm{Q} \rangle - \sum_{i=1}^I \lambda_i \tau_i,
$$
where we use the notation $Q_\pi(s) = \sum_{a'} \pi(a' | s) Q(s, a')$.
Then, the Lagrangian function has a saddle point over $(\mathcal{U} \times \Pi) \times (\mathcal{Q} \times \mathbb{R}_+^I)$, where $\mathcal{U}$, $\Pi$ and $\mathcal{Q}$ contain optimal variables $\mu^{\pi^\ast}$, $\pi^\ast$ and $Q_0^{\pi^\ast} + \sum_{i = 1}^I \lambda^\ast_i Q_i^{\pi^\ast}$ where $\pi^\ast$ is an optimal policy and $\lambda^\ast$ is an optimal dual variable.
\end{lemma}
\begin{proof}
By linear-programming strong duality, the decomposed Lagrangian~\eqref{eqn:lagrangian-decomposition} admits a saddle point over the unrestricted domains of $\mu$, $\nu$, $V$, $Q$, and $\lambda$.
Moreover, if $(\mu^\ast,\nu^\ast)$ is any optimal primal pair and $\pi^\ast$ is the policy extracted from $\nu^\ast$, then $\nu^\ast=\nu_{\mu^\ast,\pi^\ast}$, so the reparameterized Lagrangian~\eqref{eqn:lagrangian-decomposition-reparameterization} also has a saddle point over the unrestricted space of stationary policies.
Hence, when restricting the variables to $(\mathcal{U}, \Pi) \times (\mathcal{Q} \times \mathbb{R}_+^I)$ for some function classes $\mathcal{U}$, $\mathcal{Q}$, policy class $\Pi$, saddle points of the restricted Lagrangian are guaranteed to exist as long as the classes contain an optimal primal-dual pair, due to the invariance property of saddle points shown in Lemma~\ref{lemma:invariance-saddle-point}.
\end{proof}

\begin{lemma}\label{lemma:key2}
Consider a CMDP \eqref{eq:cmdp} that satisfies the Slater's condition with margin $\varphi$ (Assumption~\ref{assumption:slater}).
Define
\[
L(\pi,\lambda)
:=
(1-\gamma)J_0(\pi)
+
\sum_{i=1}^I \lambda_i\bigl((1-\gamma)J_i(\pi)-\tau_i\bigr).
\]
Let $\pi^\star$ be an optimal feasible policy. Suppose that
$\widehat\pi$ is a policy and $\widehat\lambda\in\Lambda_B \coloneqq \{ \lambda \in \mathbb{R}_+^I : \|\lambda\|_1 \le B \}$ satisfy
\[
L(\pi^\star,\widehat\lambda)\le L(\widehat\pi,\lambda)+\xi
\qquad\text{for all }\lambda\in\Lambda_B.
\]
Then, $\widehat\pi$ is a near-optimal, near-feasible solution:
\[
(1 - \gamma) J_0(\widehat\pi)\ge (1 - \gamma) J_0(\pi^\star)-\xi, \qquad (1-\gamma)J_i(\widehat\pi)\ge \tau_i-\xi.
\]
for every $i=1,\dots,I$,
\end{lemma}

\begin{proof}
Since $\pi^\ast$ is feasible, we have
\[
L(\pi^\ast, \widehat\lambda) = (1-\gamma) J_0(\pi^\ast) + \sum_{i=1}^I \widehat\lambda_i ((1 - \gamma) J_i(\pi^\ast) - \tau_i) \geq (1 - \gamma) J_0(\pi^\ast).
\]
Because $\bm{0} \in \Lambda_B$, the assumed inequality with $\bm\lambda=0$ gives
\[
(1-\gamma)J_0(\pi^\ast)
\le
L(\pi^\ast,\widehat\lambda)
\le
L(\widehat\pi,0)+\xi
=
(1-\gamma)J_0(\widehat\pi)+\xi,
\]
which shows the near optimality of $\widehat\pi$.

It remains to prove near feasibility. Consider the linear program formulation \eqref{eq:cmdp-lp} of the CMDP:
$$
\begin{aligned}
\max_{\bm\mu\ge 0}\quad & \langle \bm\mu, \bm{r}_0 \rangle \\
\text{subject to}\quad
& \bm{E}^\top \bm\mu = (1-\gamma)\bm{d}_0 + \gamma \bm{P}^\top \bm\mu, \\
& \langle \bm\mu, \bm{r}_i \rangle \ge \tau_i,\qquad i=1,\ldots,I.
\end{aligned}
$$
By strong duality for this linear program and Lemma~\ref{lemma:dual-variable-bound}, there exists a saddle point $(\bm\mu^\ast, \bm\lambda^\ast)$ of the corresponding Lagrangian $L(\bm\mu, \bm\lambda)$ where $\bm\lambda^\ast \in \Lambda_B$.
Also, it can be easily checked that $L(\pi, \bm\lambda) = L(\mu^\pi, \bm\lambda)$.
Hence,
$$
L(\pi, \bm\lambda^\ast) = L(\mu^\pi, \bm\lambda^\ast) \leq L(\bm\mu^\ast, \bm\lambda) = L(\pi^\ast, \bm\lambda),
$$
for every $\bm\lambda \geq \bm{0}$, where the inequality follows since $(\bm\mu^\ast, \bm\lambda^\ast)$ is a saddle point.
In particular, setting $\bm\lambda = \bm{0}$ and $\pi = \widehat\pi$, we get
\[
L(\widehat\pi,\lambda^\ast)\le (1-\gamma)J_0(\pi^\ast).
\]

Now define
$$
m \coloneqq \min_{i = 1, \dots, I} (1 - \gamma)J_i(\widehat\pi) - \tau_i,
$$
and choose $j\in\arg\min_{i} g_i(\widehat\pi)$, so that $g_j(\widehat\pi)=m$.
Then, the above inequality gives
\[
(1-\gamma)J_0(\pi^\ast)
\ge
L(\widehat\pi,\lambda^\ast)
=
(1-\gamma)J_0(\widehat\pi)
+
\sum_{i=1}^I \lambda_i^\ast ((1 - \gamma) J_i(\widehat\pi) - \tau_i)
\ge
(1-\gamma)J_0(\widehat\pi)+m\|\lambda^\ast\|_1.
\]
On the other hand, since $Be_j\in\Lambda_B$, the assumed inequality gives
\[
(1-\gamma)J_0(\pi^\ast)
\le
L(\pi^\ast,\widehat\lambda)
\le
L(\widehat\pi,Be_j)+\xi
=
(1-\gamma)J_0(\widehat\pi)+Bm+\xi.
\]
Combining the last two displays, we obtain
\[
m\|\lambda^\ast\|_1 \le Bm+\xi.
\]

If $m\ge 0$, then $(1 - \gamma) J_i(\widehat\pi) - \tau_i \geq 0$ for all $i$, so $\widehat\pi$ is feasible, and there is nothing to prove.
Suppose now that $m<0$. Then, rearranging the above inequality, we get
$$
m \geq \frac{-\xi}{B - \Vert \lambda^\ast \Vert_1} \geq -\xi
$$
where we use Lemma~\ref{lemma:dual-variable-bound} which gives $\Vert \lambda^\ast \Vert_1 \leq \frac{1}{\varphi} = B - 1$.
By the definition of $m$, this means
\[
(1 - \gamma) J_i(\widehat\pi) - \tau_i \geq -\xi
\qquad\text{for every }i=1,\dots,I,
\]
which shows the desired near-feasibility of $\widehat\pi$.
\end{proof}

\section{Experimental setup}\label{appendix:exp}
In this section, we explain the evaluation metric, the practical implementation of PDOCRL and the experimental setting of \ref{tab:bulletgym-result} and \ref{tab:ablation-bulletgym}. 

\subsection{Evaluation metrics} \label{appendix:metric}
In the metric setting, we follow a metric formula \cite{liu2023datasetsbenchmarksofflinesafe}. Denote $R^\pi$, $R_\text{max}(\mathcal{T})$ and $R_\text{min}(\mathcal{T})$ as empirical total reward evaluation of the policy, maximum, and minimum empirical total reward for task $\mathcal{T}$, respectively. Then the normalized reward is calculated by:
$$
R_{\text{normalized}} = \frac{R^\pi - R_{\text{min}}(\mathcal{T})}{R_{\text{max}}(\mathcal{T})-R_{\text{min}}(\mathcal{T})} 
$$

The normalized cost is calculated by dividing the total cost evaluation by the cost threshold. Denote $C^\pi$ and $C_\text{threshold}$ as the total cost evaluation and the cost threshold respectively. The normalized cost is calculated by:
$$
C_{\text{normalized}} = \frac{C^\pi}{C_\text{threshold}} 
$$

\subsection{Practical implementation}
\label{subsec:practical}

We now describe the practical instantiation of PDOCRL for deep RL experiments.
We parameterize the importance weight function $w$, the policy $\pi$, and the
state-action value function $Q$ as neural networks with parameters $\theta$,
$\phi$, and $\psi$, respectively. Since each benchmark task specifies a single
cost threshold $C$, the Lagrangian multiplier reduces to a scalar $\lambda$,
which we parameterize via an unconstrained scalar $\lambda_{\tau}$ (e.g.,
$\lambda = \operatorname{softplus}(\lambda_{\tau})$) to ensure
non-negativity.

The networks are trained by performing stochastic gradient descent--ascent on
the empirical Lagrangian derived from~\eqref{eqn:primal-dual-lagrangian-decomposition}:
$$
\begin{aligned}
&J(\theta, \phi, \psi, \lambda_{\tau}) \\
&\hspace{0.5cm}=
(1-\gamma)\,\overline{Q}_\psi(\pi_\phi)
\;+\;
\frac{1}{n}\sum_{j=1}^{n} w_\theta(s_j,a_j)
\bigl[r_\lambda(s_j,a_j)
+ \gamma\, Q_\psi(s_j',\pi_\phi) - Q_\psi(s_j,a_j)\bigr]
\;-\;\lambda\, C,
\end{aligned}
$$
where $\overline{Q}_\psi(\pi_\phi)
= \frac{1}{N}\sum_{i=1}^{N}Q_\psi(s_i,\pi_\phi)$ approximates the
initial-state term $(1-\gamma)\,Q(s_0,\pi)$ by averaging over all states in the
mini-batch, treating the empirical state distribution as a surrogate for the
initial state distribution.

We adopt three stabilization techniques common in offline RL.
First, to mitigate overestimation of $Q$, we maintain a target network
$\psi'$~\cite{58871} that is periodically synchronized with $\psi$.
Second, because the importance weight $w_\theta$ can also suffer from
overestimation, we clip its output to the interval $[0, 10]$.
Third, while Algorithm~\ref{alg:pdocrl} returns a policy sampled uniformly at
random over all iterates, in practice we return the last-iterate policy, which
we find to perform well empirically.

The resulting procedure is summarized in
Algorithm~\ref{alg:prac-pdocrl}.

\begin{algorithm}[t]
\caption{Practical PDOCRL}
\label{alg:prac-pdocrl}
\KwInput{Offline dataset
  $D = \{(s_0, s, a, r, c, s')_i\}_{i=1}^N$, learning rate $\eta$,
  Polyak averaging coefficient $\kappa$.}
Initialize parameters $\theta, \phi, \psi, \lambda_\tau$\;
Initialize target parameters $\psi' \leftarrow \psi$\;
\For{$t = 1, 2, \ldots$}{
    Sample a mini-batch from $D$\;
    \tcp{Ascent on primal variables}
    $\theta \leftarrow \theta + \eta\,\nabla_\theta J$\;
    $\phi \leftarrow \phi + \eta\,\nabla_\phi J$\;
    \tcp{Descent on dual variables}
    $\psi \leftarrow \psi - \eta\,\nabla_\psi J$\;
    $\lambda_\tau \leftarrow \lambda_\tau - \eta\,\nabla_{\lambda_\tau} J$\;
    \tcp{Soft target network update}
    $\psi' \leftarrow (1 - \kappa)\,\psi' + \kappa\,\psi$\;
}
\Return last-iterate policy $\pi_\phi$
\end{algorithm}

\subsection{Experimental setting} \label{Exp_setting}
In this section, we describe the experimental setup
for the results reported in Table~\ref{tab:bulletgym-result}
and Table~\ref{tab:ablation-bulletgym}.
The baseline results in Table~\ref{tab:bulletgym-result} are
taken from \cite{liu2023datasetsbenchmarksofflinesafe}.
For the baselines, all experiments use cost thresholds
$C \in \{20, 40, 80\}$ and three random seeds
$\in \{0, 10, 20\}$, yielding nine configurations per
environment. For PDOCRL, we use the same cost thresholds
and with seeds $\in \{0, 5, 10\}$.
For each configuration, the final checkpoint is evaluated
over 20 episodes for the baselines and 10 episodes for
PDOCRL, and the resulting reward and cost are normalized
following the protocol
of~\cite{liu2023datasetsbenchmarksofflinesafe}.
The values reported in the tables are averages of the
normalized reward and cost across all nine configurations.

All baseline algorithms adopt a common architecture of two
hidden layers of size $[256, 256]$ with ReLU activations,
are trained for $100{,}000$ gradient steps with a batch size
of 512, and use an actor learning rate of $10^{-4}$ and a
critic learning rate of $10^{-3}$. For Lagrangian-based
methods (BCQ-Lag, BEAR-Lag), the Lagrangian multiplier is
controlled by a PID controller with gains
$(K_P, K_I, K_D) = (0.1, 0.003, 0.001)$.
We refer the reader
to~\cite{liu2023datasetsbenchmarksofflinesafe} for
further algorithm-specific details.

For PDOCRL, the policy network~$\pi_\phi$,
the Q-networks~$Q_\psi$, and the importance weight
network~$w_\theta$ each use two hidden layers of size
$[256, 256]$ with ReLU activations, matching the baseline
architecture. We maintain two Q-networks and take the
minimum of their outputs to mitigate overestimation.
The target networks are updated via Polyak averaging with
coefficient $\tau = 0.005$. The actor, Q-network, and
importance weight networks are each optimized with a
learning rate of $3 \times 10^{-4}$, while the Lagrangian
parameter~$\lambda_\tau$ uses a learning rate of
$10^{-4}$. The reward signal is scaled by a factor of
$0.1$. We set the Slater margin to $\varphi = 1.0$ and
initialize the Lagrangian multiplier at
$\lambda_0 = 1.0$. Training runs for $100{,}000$ gradient
steps with a batch size of 512 and a discount factor
$\gamma = 0.99$. The hyperparameters of PDOCRL are
summarized in Table~\ref{tab:pdocrl-hyperparams}. For the ablation study, the baseline is identical to PDOCRL except for the policy extraction procedure. Lastly, All PDOCRL experiments were conducted on a local CPU-only
machine, without the use of GPU acceleration.
\begin{table}[t]
\centering
\caption{Hyperparameters for PDOCRL.}
\label{tab:pdocrl-hyperparams}
\begin{tabular}{lc}
\toprule
Hyperparameter & Value \\
\midrule
Hidden sizes (actor / critic / weight) & $[256, 256]$ \\
Number of Q-networks & 2 \\
Actor / Q / weight learning rate & $3 \times 10^{-4}$ \\
Lagrangian learning rate & $1 \times 10^{-4}$ \\
Batch size & 512 \\
Gradient steps & $100{,}000$ \\
Discount factor $\gamma$ & 0.99 \\
Target update coefficient $\tau$ & 0.005 \\
Reward scale & 0.1 \\
Cost scale & 1.0 \\
Slater margin $\varphi$ & 1.0 \\
Initial $\lambda$ & 1.0 \\
Evaluation episodes & 10 \\
Evaluation frequency & every $2{,}500$ steps \\
\bottomrule
\end{tabular}
\end{table}


\end{document}